
\documentclass[preprint,12pt,table]{elsarticle}




\usepackage{amssymb}


\usepackage{makecell}


\usepackage{scalerel}
\usepackage{lipsum}
\usepackage{amsfonts,amssymb}

\usepackage{graphicx}
\usepackage{epstopdf}
\usepackage{dsfont}
\usepackage{bbm}
\usepackage{bbold}
\usepackage{indentfirst}
\usepackage{subfigure}
\usepackage{hhline}
\usepackage{mathtools}
\usepackage{commath}
\usepackage{extarrows}
\usepackage{mathrsfs} 
\usepackage{lineno}
\usepackage{multirow}
\usepackage{tabularray}

\usepackage{textcomp}
\usepackage{stfloats}
\usepackage{pgfplots}
\pgfplotsset{compat=1.17} 
\usepackage{stmaryrd}
\usepackage{algorithm}
\usepackage[noend]{algpseudocode}

\allowdisplaybreaks

\makeatletter
\newcommand\multiline[1]{\parbox[t]{\dimexpr\linewidth-\ALG@thistlm}{#1}}
\makeatother

\usepackage{tikz}
\usetikzlibrary{shapes, arrows.meta, patterns}

\usepackage{figure/carshapes}
\usepackage{centernot}
\usepackage{hyperref,doi,url}
\hypersetup{colorlinks=true, linkcolor=blue, breaklinks=true, urlcolor=blue}

\ifpdf
  \DeclareGraphicsExtensions{.eps,.pdf,.png,.jpg}
\else
  \DeclareGraphicsExtensions{.eps}
\fi


\newtheorem{defi}{\textbf{Definition}}
\newtheorem{thom}{\textbf{Theorem}}
\newtheorem{asp}{Assumption}
\newtheorem{rek}{\textbf{Remark}}
\newtheorem{pro}{\textbf{Proposition}}
\newtheorem{lema}{\textbf{Lemma}}

\newtheorem{cor}{\textbf{Corollary}}
\newdefinition{examp}{\textbf{Example}}

\newenvironment{proof}{\paragraph{Proof}}{\hfill$\square$ \vskip8pt}

\newcommand{\defiref}[1]{Definition~\ref{#1}}

\newcommand{\thomref}[1]{Theorem~\ref{#1}}
\newcommand{\aspref}[1]{Assumption~\ref{#1}}
\newcommand{\rekref}[1]{Remark~\ref{#1}}
\newcommand{\proref}[1]{Proposition~\ref{#1}}
\newcommand{\algoref}[1]{Algorithm~\ref{#1}}

\newcommand{\cororef}[1]{Corollary~\ref{#1}}
\newcommand{\lemaref}[1]{Lemma~\ref{#1}}
\newcommand{\tabref}[1]{Table~\ref{#1}}
\newcommand{\sectref}[1]{Section~\ref{#1}}
\newcommand{\egref}[1]{Example~\ref{#1}}

\renewcommand{\emptyset}{\varnothing}

\DeclareMathOperator*{\opt}{opt}
\DeclareMathOperator*{\argmax}{argmax}
\DeclareMathOperator*{\argmin}{argmin}

\newcommand{\obs}{\mathit{obs}}

\newcommand{\Loc}{\mathit{Loc}}
\newcommand{\Per}{\mathit{Per}}

\newcommand{\loc}{\mathit{loc}}
\newcommand{\per}{\mathit{per}}
\newcommand{\csg}{\mathsf{C}}

\newcommand{\game}{\mathsf{G}}

\newcommand{\agent}{\mathsf{Ag}}

\newcommand{\sem}[1]{\llbracket {#1} \rrbracket}

\newcommand{\ipaths}{\mathit{IPaths}}
\newcommand{\fpaths}{\mathit{FPaths}}

\newcommand{\startpara}[1]{{%
\vskip5pt\noindent
{\bf #1.}}}

\newcommand{\gethinM}[1]{{\marginpar{\color{blue}\textsf{GN: #1}}}} 

\newcommand{\ruiM}[1]{{\marginpar{\color{red}\textsf{RY: #1}}}} 

\newcommand{\martaM}[1]{{\marginpar{\color{orange}\textsf{MK}}}} 

\newcommand{\revise}{}

\makeatletter
\renewcommand{\ALG@name}{\sc Algorithm}
\makeatother

\usepackage{amsopn}

\journal{Information and Computation}

\begin{document}

\begin{frontmatter}



\title{Strategy Synthesis for Zero-Sum Neuro-Symbolic Concurrent Stochastic Games}


\author[Oxford]{Rui~Yan}\ead{rui.yan@cs.ox.ac.uk}
\author[Oxford]{Gabriel~Santos}
\ead{gabriel.santos@cs.ox.ac.uk}
\author[Oxford,Glasgow]{Gethin~Norman}
\ead{gethin.norman@glasgow.ac.uk}
\author[Oxford]{David~Parker}
\ead{david.parker@cs.ox.ac.uk}
\author[Oxford]{Marta~Kwiatkowska}
\ead{marta.kwiatkowska@cs.ox.ac.uk}

\affiliation[Oxford]{organization={Department of Computer Science, University of Oxford},
            city={Oxford},
            postcode={OX1 2JD}, 
            country={UK}}
\affiliation[Glasgow]{organization={School of Computing Science, University of Glasgow},
            city={Glasgow},
            postcode={G12 8QQ}, 
            country={UK}}

\begin{abstract}
Neuro-symbolic approaches to artificial intelligence,
which combine neural networks with classical symbolic techniques,
are growing in prominence, necessitating formal approaches
to reason about their correctness.
We propose a novel modelling formalism called neuro-symbolic concurrent stochastic games (NS-CSGs),
which comprise {\revise two}
probabilistic finite-state agents interacting in a shared continuous-state environment. {\revise Each agent observes the environment using a neural perception mechanism, which converts inputs such as images into symbolic percepts, and makes decisions symbolically.}
We focus on the class of NS-CSGs with Borel state spaces and
prove the existence and measurability of the value function for zero-sum discounted cumulative rewards
under piecewise-constant restrictions on the components of this class of models. 
To compute values and synthesise strategies, we present, for the first time, {\revise practical}
value iteration (VI) and policy iteration (PI) algorithms to solve {\revise this new subclass} of continuous-state CSGs. These require a finite {\revise decomposition of the environment induced by the neural perception mechanisms of the agents}
and 
rely on finite abstract representations of  value functions and strategies closed under VI or PI. 
First, we introduce a Borel measurable piecewise-constant (B-PWC) representation of value functions, extend 
minimax backups to this representation and propose {\revise a value iteration algorithm called}
B-PWC VI. 
Second, we introduce 
two novel representations for the value functions and strategies, constant-piecewise-linear (CON-PWL) and constant-piecewise-constant (CON-PWC) respectively, and propose Minimax-action-free PI by extending a recent PI method based on alternating player choices for finite state spaces to Borel state spaces, 
which does not require normal-form games to be solved. 
We illustrate our approach with a dynamic vehicle parking example by generating approximately optimal strategies using a prototype implementation of the B-PWC VI algorithm. 
\end{abstract}



\begin{keyword}
Stochastic games \sep neuro-symbolic systems \sep value iteration \sep policy iteration \sep Borel state spaces
\end{keyword}

\end{frontmatter}



\section{Introduction}



Game theory offers an attractive framework for analysing strategic interactions among agents,
with application to, for instance, the game of Go~\cite{DS-AH-CJM-AG-LS-others:16}, autonomous driving \cite{SSS-SS-AS:16} and robotics \cite{JKG-ME-MK:17}. An important class of dynamic games is \emph{stochastic games}~\cite{LSS:53}, which move between states according to transition probabilities controlled jointly by multiple agents (also called players). Extending both strategic-form games to dynamic environments and Markov decision processes (MDPs) to multiple players, stochastic games have long been used to model sequential decision-making problems with more than one agent, ranging from multi-agent reinforcement learning~\cite{RY-XD-ZS-YS-JRM-FB:21} to quantitative verification and synthesis for equilibria~\cite{MK-GN-DP-GS:21}.

{\revise Increasingly, agents in these settings incorporate machine learning components.}
Recent years have witnessed encouraging advances in the use of neural networks (NNs) to approximate either value functions or strategies \cite{RY-YW-AT-JH-PA-IM:17} for stochastic games that model large, complex environments.
Such \emph{end-to-end} NNs directly map environment states to Q-values or actions.
This means that they have a relatively complex structure and a large number of weights and biases,
since they interweave multiple tasks (e.g., object detection and recognition, decision making) within a single NN.
An emerging trend in autonomous and robotic systems is \emph{neuro-symbolic} approaches,
where some components that are synthesized from data (e.g., perception modules) are implemented as NNs,
while others (e.g., nonlinear controllers) are formulated using traditional symbolic methods.
This can greatly simplify the design and training process, and yield smaller NNs.  


Even with the above advances, there remains a lack of modelling and verification frameworks
which can reason formally about the correctness of neuro-symbolic systems.
Progress has been made on techniques for both multi-agent verification~\cite{MEA-EB-PK-AL:20,LDR-SD-RM-GM:20}
and safe reinforcement learning~\cite{AVDC20} in this context,
but without the ability to reason formally about stochasticity,
which is crucial for modelling uncertainty.
Elsewhere, concurrent stochastic games (CSGs) have been widely studied~\cite{JVDW:78,BT:89,JF-KV:97,JP-BS-BP-OP:15,DB:18},
and also integrated into formal modelling and verification frameworks~\cite{MK-GN-DP-GS:21},
but primarily in the context of \emph{finite} state spaces,
which are insufficient for many real-life systems.


We propose a new modelling formalism called \emph{neuro-symbolic concurrent stochastic games (NS-CSGs)}, overviewed in the conference paper~\cite{mfcs22},
which comprise two finite-state probabilistic agents 
interacting in a shared continuous-state environment.
{\revise Each agent observes the environment 
using a neural perception mechanism, which classifies inputs such as images and sensor values into symbolic \emph{percepts} from a finite set and makes decisions using a conventional, symbolic mechanism. 
During execution, the agents alternate between invoking perception and symbolic decisions, and transition between states according to joint transition probabilities. The agent’s transition to its next local state is based on  the current local states of the agents and the agent's current percept, rather than directly accessing the continuous environment, and can thus model knowledge acquisition from the neural perception mechanism. 
}%

Under the assumption that agents have full state observability and working with Borel state spaces, we establish restrictions on the modelling formalism
which ensure that NS-CSGs belong to a {\revise new subclass}
of uncountable state-space CSGs~\cite{PRK-THS:81}
that are \emph{determined} for zero-sum discounted cumulative objectives, and therefore prove the existence and measurability of the value function for such objectives. 

{\revise
Our assumptions on the NS-CSG model enable symbolic reasoning with (exactly) learnt percepts (e.g., transition functions that preserve the decomposition of continuous-state spaces into finitely many regions), and induce finite representations of value functions that incorporate symbolic combinations of percepts and knowledge acquired by the agents through symbolic reasoning.  
We also show that our formalism can capture neural perception mechanisms instantiated as ReLU NN classifiers, which yield piecewise constant perception functions and impose a finite decomposition of the continuous environment into polytopes via preimage computation.}


{\revise Next, we develop computationally tractable algorithms for optimal strategy synthesis for NS-CSGs, which exploit the Borel and piecewise constant structure of the transition, perception and reward functions.} 
We propose a new 
representation for the value function and show its closure under a minimax operator {\revise defined for it}. Using this (finite) representation, {\revise we develop} a  
value iteration (VI) algorithm for NS-CSGs that \emph{approximates} the value of the game and prove the algorithm's convergence.

Then, we present a 
policy iteration (PI) algorithm for NS-CSGs inspired by recent work for finite state spaces \cite{DB:21}, which we generalise by using novel representations for the value functions and strategies, 
to ensure finite representability and measurability. This allows us to overcome the main issue that arises when solving Borel state space CSGs
with PI, namely that the value function may change from a Borel measurable function to a non-Borel measurable function across iterations.

The PI algorithm adopts the alternating player choices proposed in \cite{DB:21} and removes the need to solve normal-form games and MDPs at each iteration.
To the best of our knowledge, these are the first implementable algorithms for solving zero-sum CSGs over Borel state spaces with  convergence guarantees.
%
Finally, we illustrate our approach by modelling a dynamic vehicle parking as an NS-CSG and synthesizing (approximately optimal) strategies using a prototype implementation of our 
VI algorithm.

We note that we assume a \emph{fully observable} game setting. While it is relatively straightforward to generalise the NS-CSG model to partial observability, 
since NS-CSGs already include perception functions that generate observations,
there are no general algorithmic methods for value and strategy computation in the partially observable game setting; {\revise see \cite{RY-GS-GN-DP-MK:23-2} for a one-sided variant of neuro-symbolic partially observable stochastic games that shares its syntax with NS-CSGs. In the general two-sided case,} we believe that an approach similar to~\cite{NB-AB-AL-QG:20,VK-MS-NB-MB-VL:22}, which converts imperfect-information games to perfect-information, can potentially be used to enable the solution of partially observable NS-CSGs.

\subsection{Executive summary}


{\revise The modelling formalism of NS-CSGs introduced in this paper induces a new class of stochastic games with discrete observations and uncountable state spaces. The underlying continuous state space raises multiple technical issues that need to be resolved to ensure determinacy and enable practical algorithms for value computation and strategy synthesis, which are the main goal of this work. This is achieved through placing appropriate structural restrictions on NS-CSGs and relying on Borel measurability, which allows us to obtain finite abstract representations of the uncountable state space during value computation.

The key modelling decisions that allowed us to obtain our results are summarised below. To help navigate through the technical sections, we have listed the main concepts and dependencies between them in Fig.~\ref{fig:outline}.  \tabref{tab:defs}
summarises the role of the definitions,  including where they are used.

In \sectref{nscsgs-sect}, the definition of NS-CSGs restricts the interface between an agent’s neural perception mechanism and its symbolic decision-making mechanism. In particular, when making decisions, the agent can only access the continuous environment through its perception mechanism, which is limited to a finite set of observations (percepts). This allows us to abstract the (uncountable) continuous state space by partitioning it into a finite set of regions (called a finite connected partition, or FCP), with the elements of each region mapped to the same percept. 
The initial partition is obtained via preimage computation applied to the perception mechanism and is then refined in subsequent iterations. 

\sectref{structures-sect} requires Borel measurability over the continuous environment to constrain the abstractions of the environment to Borel FCPs (called BFCPs) and restrict the choices that the strategies can make to ensure measurability of the induced sets of paths. By requiring Borel measurability of the environment transition function and the fact that the structural BFCP property of the abstraction of the environment is preserved under this transition function (\aspref{asp:continuous-PWC}(i)), referred to as BFCP invertibility, 
we can derive a minimax operator for our setting. We can then apply, in \sectref{values-sect}, the result of \cite{PRK-THS:81} to prove the existence of the value for discounted expected reward as a fixed point of the minimax operator (\thomref{thom:value-function}) and convergence of value iteration (\propref{prop:convergence-seq}). However, this result does not ensure finite representability of the value functions. 

In \sectref{vi-sect}, structural assumptions are placed on the model to require Borel measurable piecewise constant (B-PWC) perception and reward functions (\aspref{asp:continuous-PWC}(ii) and (iii)). These assumptions allow us to derive BFCPs, in which the states in each region are equivalent with respect to either the perception, reward or transition function. Working with B-PWC functions as a finite representation of the value functions, in conjunction with proving that B-PWC functions are closed with respect to the minimax operator, we show that the value can be approximated by a sequence of B-PWC functions (\thomref{thom:B-PWC-closure-VI}), although the optimal value is not necessarily B-PWC representable.  

\sectref{vi-sect} demonstrates that the above assumptions and requirements allow us to derive B-PWC VI (\algoref{alg:PWC-value-VI}), which uses the B-PWC representation of value functions. The algorithm refines the BFCP induced by the value function at each iteration by computing the preimage of the transition function (\algoref{alg:BFCP-iteration-VI}). The refinement step is necessary because, for a given action, two states with the same percept may result in states
that are not observationally equivalent.

In \sectref{pi-sect} we introduce two representations, CON-PWC (constant-piecewise-constant) for Borel measurable strategies and CON-PWL (constant-piecewise-linear) for Borel measurable functions, and, under the assumptions and requirements above, prove that these representations are closed under Max-Min and Min-Max operators (\thomref{thom:repsentation-consistency}). This allows us to generalise the recent Minimax-action-free PI algorithm \cite{DB:21} for finite CSGs to NS-CSGs (\algoref{alg:maxmin-free-PI}) by ensuring finite representability and Borel measurability at each iteration. At each iteration, the algorithm refines the BCFPs used for representing strategies and value functions through a preimage computation (\algoref{alg:BFCP-iteration-PI}) and then computes values for regions in the resulting BFCPs by Max-Min and Min-Max operators (\algoref{alg:BFCP-based-computation-PI}). 

Finally, we show that our framework is amenable to neural perception mechanisms instantiated as ReLU NNs trained from data. This yields polytope BFCPs (\sectref{vi-sect}), which suffice as a symbolic representation of perception BFCPs, thus enabling value computation and policy iteration with the (exactly) learnt percepts. We also provide a prototype implementation of \algoref{alg:PWC-value-VI} and evaluate it on a dynamic parking example with the simpler perception mechanism given by a regression function, demonstrating the feasibility of strategy synthesis for NS-CSGs in practice. }



\tikzset{%
    box/.style 2 args = {
        rectangle split, rectangle split horizontal,
        rectangle split parts=2,
        rectangle split part fill = { 
            #1, #2
        },
        draw=black,
        minimum width=\textwidth,
        inner sep=0.2cm,
        every one node part/.style={
            text=black,
            text width=0.9\textwidth
        },
        every two node part/.style={
            text=black,
            font=\bfseries,
            text width=0.03\textwidth
        }
    },
    box/.default={white}{gray!30},
    every box/.append style={background}
}

{\revise 
\begin{figure}[t!]
\centering
\begin{tikzpicture}[node distance=3.0cm]
    \node[box] (1) {\nodepart{one} \nodepart{two} \rotatebox{270}{\revise {\scriptsize 2.~Background}}};
    \node[draw=black, minimum width=0.93\textwidth, fill=gray!30, inner sep=0.4cm, above of=1,xshift=-0.41cm,yshift=-1.1cm] (bx) {$\;$};
    \node[above of=1,xshift=-4.50cm,yshift=-1.1cm] (Algo) {\scriptsize {\bf Algorithms}};
    \node[above of=1,xshift=-0.75cm,yshift=-1.1cm] (Main) {\scriptsize  {\bf Main results}};
    \node[above of=1,xshift=3.70cm,yshift=-1.1cm] (Aux) {\scriptsize {\bf \shortstack[c]{Auxiliary results \\ and definitions}}};
    \node[below of=Aux,yshift=1.1cm] (1-2) {\scriptsize \shortstack[c]{CSGs, FCPs \\ PWC and PWL functions \\ Neural networks \\ Borel measurability}};
    \node[box,below of=1,yshift=0.35cm] (2) {\nodepart{one} 
    \nodepart{two} \rotatebox{270}{\scriptsize {\revise  3.~NS-CSGs}}};
    \node[below of=Main,yshift=-1.55cm] (2-1) {\scriptsize NS-CSG Model};
    \node[below of=Aux,yshift=-1.5cm] (2-2) {\scriptsize \shortstack[c]{Reward structures \\ Strategies \\ Borel spaces}};
    %
    \node[box,below of=2] (3) {\nodepart{one} \nodepart{two} \rotatebox{270}{\scriptsize {\revise  \shortstack[c]{4.~Game structures \\ of NS-CSGs}}}};
    \node[below of=Algo,yshift=-4.55cm,circle,fill=black,inner sep=1.5pt] (3-0) {};
    \node[below of=Aux,yshift=-4.55cm] (3-2) {\scriptsize \shortstack[c]{Perception, \\ preimage \\ and reward BFCPs}};
    \draw[] (3-2.west) -- (3-0.center);
    \draw[] (2-1.west) -| (3-0.center);
    \node[box,below of=3,yshift=-0.0cm] (4) {\nodepart{one} \nodepart{two} \rotatebox{270}{\scriptsize {\revise  \shortstack[c]{5.~Values of \\  NS-CSGs}}}};
    \node[below of=Main,yshift=-7.5cm] (4-1) {\revise  {\scriptsize \shortstack[c]{Theorem~\ref{thom:value-function}: \\
    $V^{\star}$ is unique fixed point of $T$ \\ \\ Proposition~\ref{prop:convergence-seq}: \\ convergence of VI}}};
    \node[below of=Aux,yshift=-7.5cm] (4-2) {\scriptsize $\left\{ \text{\shortstack[c]{\vspace{2pt} \\ Value function $V^{\star}$ \\ Borel sets \\ Stochastic kernel \\ Minimax operator $T$ \\ \vspace{-19pt}}}\right.$};
    \path (4-2.west) edge[-{Latex[length=1.8mm]},dashed] node {} (4-1.east);
    \path (2-2.south) edge[-{Latex[length=1.8mm]},dashed] node {} (4-2.north);
    \node[box,below of=4,yshift=0.85cm] (5) {\nodepart{one} \nodepart{two} \rotatebox{270}{\scriptsize {\revise  \shortstack[c]{6.~Value \\ iteration}}}};
    \node[below of=Algo,yshift=-9.7cm] (5-0) {\scriptsize {\revise  \shortstack[c]{Algorithms~\ref{alg:PWC-value-VI} and \ref{alg:BFCP-iteration-VI} \\ B-PWC VI}}};
    \node[below of=Main,yshift=-9.7cm] (5-1) {\scriptsize {\revise  \shortstack[c]{Theorem~\ref{thom:B-PWC-closure-VI}: \\ B-PWC closure}}};
    \path (4-1.south) edge[-{Latex[length=1.8mm]}] node {} (5-1.north);
    \path (5-1) edge[-{Latex[length=1.8mm]}, above] node {\tiny \shortstack[c]{Closure and \\ convergence}} (5-0);
    \node[below of=Aux,yshift=-9.7cm] (5-2) {\scriptsize \shortstack[c]{Polytope regions \\ ReLU networks}};
    \draw[-{Latex[length=1.8mm]}] (3-0.center) -- (5-0.north);
    \node[right of=5-2,xshift=-1.0cm] (5-3) {};
    \draw[-] (3-2.east) -| (5-3.center);
    \draw[-] (5-3.center) -- (5-2.east);
    \node[box,below of=5,yshift=1.1cm] (6) {\nodepart{one} \nodepart{two} \rotatebox{270}{\scriptsize {\revise  \shortstack[c]{7.~Policy \\ iteration}}}};
    \node[below of=Algo,yshift=-11.55cm] (6-0) {\scriptsize {\revise \shortstack[c]{Algorithm~\ref{alg:maxmin-free-PI} \\ Minimax-action-free PI}}};
    \node[below of=Main,yshift=-11.55cm] (6-1) {\scriptsize {\revise \shortstack[c]{Theorems~\ref{thom:repsentation-consistency} and \ref{convergence-thm}: \\ Representation closures \\ Convergence for PI}}};
    \node[below of=Aux,yshift=-11.55cm] (6-2) {\scriptsize \shortstack[c]{Max-min \& min-max operators \\ CON-PWL function \\ CON-PWC kernel \\ Constant solutions}};
    \draw[-{Latex[length=1.8mm]}] (3-0.center) -| (6-0.north west);
\end{tikzpicture}
\caption{{\revise Outline of the main concepts and contributions of our paper, as well as dependencies between them.}}
\label{fig:outline}
\end{figure}}
\setlength\extrarowheight{2pt}
\begin{table}[!t]
\renewcommand{\arraystretch}{1.4} 
    \scriptsize
    \centering
    \begin{tabular}{|l|l|}
    \hline
    \multicolumn{2}{|l|}{\textbf{\cellcolor{gray!20}\hyperref[background-sect]{Section 2: Background}}} \\ \hline
    \makecell[l]{\hyperref[defi:FCP_Borel_FCP]{\textbf{1:} FCP and Borel FCP}} & \makecell[l]{Decomposes uncountable state spaces into finite sets of\\ regions for a finite abstraction of the environment.
    \\ Used in Definitions~\ref{defi:PWC-Borel-func}, \ref{defi:PWL-Borel-func}, \ref{defi:s-CON-u1-PWL}-\ref{defi:con3-solution}. } \\ \hline
    \makecell[l]{\hyperref[defi:PWC-Borel-func]{\textbf{2:} PWC Borel measureable}} & \makecell[l]{Provides measurable finite representations. \\ Used in Definitions~\ref{defi:s-CON-u1-PWL}-\ref{defi:con3-solution}; Lemmas~\ref{lema:perception-BFCP}, \ref{lema:reward-BFCP}, \ref{lema:evaluation-consistency-agent-1}-\ref{lema:improvement-consistency-agent-2}, \ref{pwc-strat-lem} \\ Theorems~\ref{thom:B-PWC-closure-VI}-\ref{convergence-thm}.} \\ \hline
    \makecell[l]{\hyperref[defi:PWL-Borel-func]{\textbf{3:} PWL Borel measureable}}  & \makecell[l]{Provides measurable finite representations. \\ Used in Definitions~\ref{defi:s-CON-u1-PWL}, \ref{defi:con1-solution}; Lemmas~\ref{lema:evaluation-consistency-agent-1}-\ref{lema:improvement-consistency-agent-2}; \\ Theorems~\ref{thom:repsentation-consistency}, \ref{convergence-thm}.} \\ 
    \hline
    \makecell[l]{\hyperref[defi:BFCP_invertible]{\textbf{4:} BFCP invertible}}  & \makecell[l]{Preserves the BFCP representation. \\ Used in Assumption~\ref{asp:continuous-PWC}}\\ \hline
    \makecell[l]{\hyperref[csgs-def]{\textbf{5:} CSGs}} & \makecell[l]{Recalls the definition of discrete CSGs.} \\ \hline \hline
    
    \multicolumn{2}{|l|}{\hyperref[nscsgs-sect]{\cellcolor{gray!20}\textbf{Section 3: NS-CSGss}}} \\ \hline
    \makecell[l]{\hyperref[defi:NS-CSG]{\textbf{6:} NS-CSGs}} & \makecell[l]{Introduces our new neuro-symbolic game model.}  \\ \hline 
    \makecell[l]{\hyperref[semantics-def]{\textbf{7:} Semantics of an NS-CSG}} & \makecell[l]{A formal semantic definition for the model.} \\ \hline
    \makecell[l]{\hyperref[defi:stationary-strategy]{\textbf{8:} Strategy}} & \makecell[l]{Defines the stationary strategy for each agent.} \\ \hline \hline

    \multicolumn{2}{|l|}{\hyperref[values-sect]{\textbf{\cellcolor{gray!20}Section 5: Values of zero-sum NS-CSGs}}} \\ \hline
    \makecell[l]{\hyperref[defi:value_function]{\textbf{9:} Value function}} & \makecell[l]{Formal definition of the game's value. \\ Used in Theorems~\ref{thom:value-function}, \ref{thom:B-PWC-closure-VI}, \ref{convergence-thm}; Proposition~\ref{prop:convergence-seq}.} \\ \hline
    \makecell[l]{\hyperref[defi:minimax-operator]{\textbf{10:} Minimax operator}} & \makecell[l]{Characterises and compute the value function. \\ Used in Theorems~\ref{thom:value-function}, \ref{thom:B-PWC-closure-VI}; Proposition~\ref{prop:convergence-seq}.} \\ \hline \hline

    \multicolumn{2}{|l|}{\hyperref[pi-sect]{\cellcolor{gray!20}\textbf{Section 7: Policy iteration}}} \\ \hline
    \makecell[l]{\hyperref[def:operator-H1-definition]{\textbf{11:} Operator for the Max-min value}} & \makecell[l]{Induces max-min policy evaluation of maximizer. \\ Used in Lemma~\ref{lema:evaluation-consistency-agent-1}; Theorem~\ref{thom:repsentation-consistency}; Corollary~\ref{cor:bfcp_minimax_pi}.} \\ \hline
    \makecell[l]{\hyperref[def:operator-H2-definition]{\textbf{12:} Operator for the Min-max value}} & \makecell[l]{Induces min-max policy evaluation of minimizer. \\ Used in Lemma~\ref{lema:evaluation-consistency-agent-2}; Theorem~\ref{thom:repsentation-consistency}; Corollary~\ref{cor:bfcp_minimax_pi}.} \\ \hline
    \makecell[l]{\hyperref[defi:s-CON-u1-PWL]{\textbf{13:} CON-PWL Borel measurable function}} & \makecell[l]{Finite representations for minimizer's functions. \\ Used in Definitions~\ref{defi:con1-solution}; Lemmas~\ref{lema:evaluation-consistency-agent-1}-\ref{lema:improvement-consistency-agent-2}; \\ Theorem~\ref{thom:repsentation-consistency}; Corollary~\ref{cor:bfcp_minimax_pi}.} \\ \hline
    \makecell[l]{\hyperref[defi:s-CON-u1-PWC]{\textbf{14:} CON-PWC stochastic kernel}} & \makecell[l]{Finite representation for minimizer's strategies. \\ Used in Lemmas~\ref{lema:evaluation-consistency-agent-2}, \ref{lema:improvement-consistency-agent-2}; Theorem~\ref{thom:repsentation-consistency}; Corollary~\ref{cor:bfcp_minimax_pi}.} \\ \hline
    \makecell[l]{\hyperref[defi:con1-solution]{\textbf{15:} CON-1 solution}} & \makecell[l]{Induces policy improvement of maximizer. \\ Used in Lemma~\ref{lema:improvement-consistency-agent-1}; Theorem~\ref{thom:repsentation-consistency}; Corollary~\ref{cor:bfcp_minimax_pi}.} \\ \hline
    \makecell[l]{\hyperref[defi:con2-solution]{\textbf{16:} CON-2 solution}} & \makecell[l]{Induces policy improvement of minimizer. \\ Used in Lemma~\ref{lema:improvement-consistency-agent-2}; Theorem~\ref{thom:repsentation-consistency}; Corollary~\ref{cor:bfcp_minimax_pi}.} \\ \hline
    \makecell[l]{\hyperref[defi:con3-solution]{\textbf{17:} CON-3 solution}} & \makecell[l]{Extracts PWC strategies from value functions \\ Used in Lemma~\ref{pwc-strat-lem}.} \\ \hline
    \end{tabular}
    \caption{Listing of definitions (by section) and their roles.}
    \label{tab:defs}
\end{table}



\subsection{Related work} 
Stochastic games were introduced by Shapley \cite{LSS:53}, who assumed a finite state space.
Since then, many researchers have considered CSGs with uncountable state spaces,
e.g., \cite{PRK-THS:81,AM-TP:70,ASN:99}.
Maitra and Parthasarathy~\cite{AM-TP:70} were the first to study discounted zero-sum CSGs in this setting, assuming that the state space is a compact metric space. Following this, more general results for discounted zero-sum CSGs with Borel state spaces have been derived, e.g., \cite{PRK-THS:81,ASN:85-2,ASN:99,OH-JBL:00}. These aim at providing sufficient conditions for the \emph{existence} of either values or optimal strategies for players.

Another important and practical problem for zero-sum CSGs with uncountable state spaces is the \emph{computation} of values and optimal strategies. Since the seminal policy iteration (PI) methods were introduced by Hoffman and Karp \cite{AJH-RMK:66} and Pollatschek and Avi-Itzhak \cite{MAP-BAI:69}, a wide range of fixed-point algorithms have been developed for zero-sum CSGs with finite state 
spaces \cite{JVDW:78,BT:89,JF-KV:97,JP-BS-BP-OP:15}. Recent work by Bertsekas \cite{DB:21} proposed a distributed optimistic abstract PI algorithm, which inherits the attractive structure of the Pollatschek and Avi-Itzhak algorithm while resolving its convergence difficulties. Value iteration (VI) and PI algorithms have been improved for simple stochastic games \cite{JK-ER-AS-MW:22,JE-EK-JK-MW:22}. However, all of the above approaches assume finite state spaces and, to the best of our knowledge, there are no existing VI or PI algorithms for CSGs with uncountable, or more specifically Borel, state spaces. VI and PI algorithms for stochastic control (i.e., the one player case) with Borel state spaces can be found in \cite{HY-DB:15,HY:15}. Other problems for zero-sum CSGs with uncountable state spaces have been studied and include information structure \cite{IHB-SY:21}, specialized strategy spaces \cite{AB-LS:20}, continuous time setup \cite{AC-13} and payoff criteria \cite{OH-JBL:00}.

A variety of other objectives, for instance, mean-payoff \cite{KC-RIJ:15,NB-MK-CW:18}, ratio \cite{NB-MK-CW:18} and reachability \cite{TB-VF-JK-JK-AK:13,JF-MNR-SS-LZ:16} objectives, have also been studied for CSGs~\cite{JVDW:78,BT:89,JF-KV:97,JP-BS-BP-OP:15}.
But these are primarily in the context of finite/countable state spaces which,
as argued above, are insufficient for our setting,  where 
uncountable real vector spaces are usually supplied as inputs to NNs.
%
Building on an earlier version of this work~\cite{YSN+22},
there has been recent progress on solving NS-CSGs~\cite{YSD+22},
but focusing on finite-horizon objectives and using equilibria-based (nonzero-sum) properties.


{\revise Compared to the above works, NS-CSGs are shown in 
 \sectref{nscsgs-sect} to form a new subclass of uncountable state-space CSGs, whose determinacy (see  \sectref{values-sect}) relies on~\cite{PRK-THS:81} under some restrictions. The closure of B-PWC functions under the minimax operator in \sectref{vi-sect} is a new result; combining the closure with determinacy, the new B-PWC VI algorithm directly relies on Banach's fixed point theorem. Our Minimax-action-free PI algorithm takes ideas from  recent work~\cite{DB:21}, which proposed a new PI method to solve zero-sum stochastic games with finite state spaces, but has to resolve a number of issues (see \sectref{pi-sect})
due to the uncountability of the underlying state space and the need to ensure Borel measurability at each iteration, including finite representability of measurable functions and strategies, strategy generation in the policy improvement and the division
of the uncountable state space into a finite set of regions at each iteration.}


Finally, we note that this paper assumes a \emph{fully observable} game setting;
a natural extension would be partially observable stochastic games (POSGs), for which there are no general VI and PI computation algorithms.
A variant of POSGs, called factored-observation stochastic games (FOSGs),
was recently proposed~\cite{VK-MS-NB-MB-VL:22}
that distinguishes between private and public observations in a similar fashion to our model,
but for finite-state models without NNs.
Partial observability in FOSGs is dealt with via a mechanism that converts imperfect-information games
into continuous-state (public belief state) perfect-information games~\cite{NB-AB-AL-QG:20,VK-MS-NB-MB-VL:22}, 
such that many techniques for perfect-information games can also be applied. Our fully observable model can arguably serve as a vehicle to later solve the more complex case with imperfect information. 
{\revise In recent work \cite{RY-GS-GN-DP-MK:23-2}, a point-based value iteration algorithm was presented for one-sided neuro-symbolic POSGs, thus dealing with partial observability under some mild assumptions.}

\section{Background}\label{background-sect}


In this section we summarise the background notation, definitions and concepts used in this paper. {\revise Our results crucially rely on Borel measurability over the continuous environment and piecewise constant functions as a representation of value functions. The latter is inspired by the observation that a neural perception mechanism such as an NN classifier is a piecewise constant function of the environment and results in a finite decomposition of the environment into 
regions.} 

\subsection{Borel measurable spaces and functions} 
Given a non-empty set $X$, we denote its Borel $\sigma$-algebra by $\mathcal{B}(X)$, and the sets in $\mathcal{B}(X)$ are called \emph{Borel sets} of $X$. The pair $(X,\mathcal{B}(X))$ is a (standard) \emph{Borel space} if there exists a metric on $X$ that makes it a complete separable metric space (unless required for clarity, $\mathcal{B}(X)$ will be omitted). For convenience we will work with real vector spaces; however, this is not essential and any complete separable metric spaces could be used. For Borel spaces $X$ and $Y$, a function $f:X\to Y$ is \emph{Borel measurable} if $f^{-1}(B) \in \mathcal{B}(X)$ for all $B \in \mathcal{B}(Y)$ and \emph{bimeasurable} if it is Borel measurable and $f(B) \in \mathcal{B}(Y)$ for all $B \in \mathcal{B}(X)$.



We denote by $\mathbb{F}(X)$ the space of all bounded, Borel measurable real-valued functions on a Borel space $X$, with respect to the unweighted sup-norm $\|J\|=\sup_{x \in X}|J(x)|$ for $J \in \mathbb{F}(X)$. For functions $J,K \in \mathbb{F}(X)$, we use $\max[J,K]$ and $\min[J,K]$ to denote the respective pointwise maximum and minimum functions of $J$ and $K$, i.e.,  we have $\opt[J,K](x) \coloneqq \opt\{J(x), K(x)\}$ for $\opt  \in \{ \min,\max\}$ and $x \in X$.
%

%

We now introduce notation and definitions 
for concepts that are fundamental to the abstraction on which our algorithms are performed.
The abstraction is based on a decomposition of the uncountable state space into 
finitely many abstract regions. 
In the definitions below, let $X \subseteq \mathbb{R}^{n_1}$ and $Y \subseteq \mathbb{R}^{n_2}$ for $n_1,n_2 \in \mathbb{N}$. 
\begin{defi}[FCP and Borel FCP]\label{defi:FCP_Borel_FCP}
A \emph{finite connected partition (FCP)} of $X$, denoted $\Phi$, is a finite collection of disjoint connected subsets \emph{(regions)} that cover $X$. 
Furthermore, $\Phi$ is a \emph{Borel FCP} (BFCP) if each region $\phi \in \Phi$ is a Borel set of $X$.
\end{defi}

%
%
\begin{defi}[PWC Borel measurable]\label{defi:PWC-Borel-func}
A function $f:X\to Y$ is \emph{piecewise constant Borel measurable (B-PWC)} if there exists a BFCP $\Phi$ of $X$ such that $f : \phi \to Y$ is constant for all $\phi \in \Phi$ and $\Phi$ is called a constant-BFCP of $X$ for $f$. 
\end{defi} 
%
\begin{defi}[PWL Borel measurable]\label{defi:PWL-Borel-func}
A function $f:X\to Y$ is  \emph{piecewise linear Borel measurable (B-PWL)} if there exists a BFCP $\Phi$ of $X$ such that  $f : \phi \to Y$ is linear and bounded for all $\phi \in \Phi$.
\end{defi} 
\begin{defi}[BFCP invertible]\label{defi:BFCP_invertible}
A function $f:X\to Y$ is \emph{BFCP invertible} if, for any BFCP $\Phi_Y$ of $Y$, there exists a BFCP $\Phi_X$ of $X$, called a preimage BFCP of $\Phi_Y$ for $f$, such that for any $\phi_X \in \Phi_X$ we have $\{ f(x) \mid x \in \phi_X \} \subseteq \phi_Y$ for some $\phi_Y \in \Phi_Y$.
\end{defi}
For BFCPs $\Phi_1$ and $\Phi_2$ of $X$, we denote by $\Phi_1+\Phi_2$ the smallest BFCP of $X$ such that $\Phi_1+\Phi_2$ is a refinement of both $\Phi_1$ and $\Phi_2$, which can be obtained by taking all the intersections between regions of $\Phi_1$ and $\Phi_2$.


%

\subsection{Probability measures} 
Let $X$ be a Borel space. A function $f : \mathcal{B}(X) \to [0,1]$ is a probability measure on $X$ if $f(X)=1$ and $\sum_{i \in I} f(B_i) = f ( \cup_{i\in I} B_i )$ for any countable disjoint family of Borel sets $(B_i)_{i \in I}$. We denote the space of all probability measures on a Borel space $X$ by $\mathbb{P}(X)$.  For Borel spaces $X$ and $Y$, a Borel measurable function $\sigma:Y\to\mathbb{P}(X)$ is called a \emph{stochastic kernel} on $X$ given $Y$ (also known as a transition probability function from $Y$ to $X$), and we denote by $\mathbb{P}(X \mid Y)$ the set of all stochastic kernels on $X$ given $Y$. If $\sigma \in \mathbb{P}(X \mid Y)$,  $y \in Y$ and $B \in \mathcal{B}(X)$, then we write $\sigma(B \mid y)$ for $\sigma(y)(B)$. It follows that $\sigma \in \mathbb{P}(X \mid Y)$ if and only if $\sigma( \; \cdot \mid y) \in \mathbb{P}(X)$ for all $y \in Y$ and $\sigma(B \mid \cdot \;)$ is Borel measurable for all $B \in \mathcal{B}(X)$. 

\subsection{Neural networks}\label{nn-sect}
A \emph{neural network (NN)} is a real vector-valued function $f:\mathbb{R}^m \to \mathbb{R}^c$, where $m,c \in \mathbb{N}$, composed of a sequence of \emph{layers} $h_1,\dots,h_k$,  where $h_i :\mathbb{R}^{m_i} \to \mathbb{R}^{c_i}$ for $1 \leq i \leq k$, $m_1=m$, $c_{i} = m_{i+1}$ for $1 \leq i \leq k-1$ and $c_k= c$. Each layer $h_i$ is a data-processing module explicitly formulated as $h_i(x_i)=act_i(W_i x_i+b_i)$, where $x_i$ is the input to the $i$th layer given by the output $h_{i-1}(x_{i-1})$ of the $(i - 1)$th layer, $act_i$ is an activation function, and $W_ix_i+b_i$ is a weighted sum of $x_i$ for a weight matrix $W_i$ and a bias vector $b_i$. An NN $f$ is continuous for all popular activation functions, e.g., Rectified Linear Unit (ReLU), Sigmoid and Softmax \cite{SS-SS-AA:17}. 

An NN $f$ is said to be a \emph{classifier} for a set of classes $C$ of size $c$ if, for any input $x \in \mathbb{R}^m$, the output $f(x) \in \mathbb{R}^c$ is a probability vector where the $i$th element of $f(x)$ represents the confidence probability of the $i$th class of $C$, i.e., a classifier is a function $f : \mathbb{R}^m \rightarrow \mathbb{P}(C)$. {\revise Let $f^{\max} : \mathbb{R}^m \rightarrow C$ denote a function that returns the class with the largest confidence probability in $f(x)$, and call $f^{\max}(x)$ the \emph{class} of $x$. To allow for situations where the class with the highest probability returned by $f$ is not unique, and hence $f^{\max}(x)$ would be undefined, we assume the classifier includes
a \emph{tie-breaking rule} defined by a function $\kappa :2^{C} \to C$ which, given a set of classes, i.e., those with the highest probability, returns the selected class.

Given an NN classifier $f$ with the tie-breaking rule $\kappa$, the \emph{preimage} of $f$ divides $\mathbb{R}^m$ into a BFCP $\Phi$ of $\mathbb{R}^m$, i.e., for any $\phi \in \Phi$, there exists a class $y$ such that $f^{\max}(x)=y$ for all $x \in \phi$.
The preimage of an NN classifier with PWL activation functions can be computed via polyhedral decomposition as described in \cite{KM-FF:20}.} 

\subsection{Concurrent stochastic games} Finally, in this section, we recall the model of two-player \emph{concurrent stochastic games}.


\begin{defi}\label{csgs-def}
A (two-player) \emph{concurrent stochastic game} (CSG) is a tuple
$\game = (N, S, A, \Delta, \delta)$ where:
\begin{itemize}
\item $N=\{1,2\}$ is a set of two players;
\item $S$ is a finite set of states;
\item $A = (A_1\cup\{\bot\}) {\times} (A_2\cup\{\bot\})$ where $A_i$ is a finite set of actions available to player $i \in N$ and $\bot$ is an idle action disjoint from the set $A_1 \cup A_2$;
\item $\Delta \colon S \rightarrow 2^{(A_1 \cup A_2)}$ is an action available
function;
\item $\delta \colon (S {\times} A) \rightarrow \mathbb{P}(S)$ is a probabilistic transition function.
\end{itemize}
\end{defi}
In a state $s$ of a CSG $\game$, each player $i \in N$ selects an action from its available actions, i.e., from the set $\Delta(s) \cap A_i$,
if this set is non-empty, and selects the idle action $\bot$ otherwise.
We denote the action choices for each player $i$ in state $s$ by $A_i(s)$, 
i.e., $A_i(s)$ equals $\Delta(s) \cap A_i$ if $\Delta(s) \cap A_i \neq \emptyset$ and equals $\{ \bot \}$ otherwise,  and by $A(s)$ the possible joint actions in a state, i.e., $A(s)= A_1(s) \times A_2(s)$.
Supposing each player $i$ chooses action $a_i$, then with probability $\delta(s, (a_1,a_2))(s')$ there is a transition to state $s' \in S$. A \emph{path} $\pi$ of $\mathsf{G}$ is a sequence $\pi=s_{0}\xrightarrow{\alpha_{0}}s_1\xrightarrow{\alpha_{1}}
\cdots$ such that $s_k
\in S$, $\alpha_k\in A(s_k)$ and $\delta(s_k,\alpha_k)(s_{k+1})>0$ for all $k \geq 0$.
We let $\fpaths_{\game}$ and $\ipaths_{\game}$ denote the sets of finite and infinite paths of $\game$, respectively.
For a path $\pi$, we denote by $\pi(k)$ the $(k+1)$th state,
and $\pi[k]$ the action for the transition from $\pi(k)$ to $\pi(k+1)$.


A \emph{strategy} for a player of a CSG $\game$ resolves its action choices in each state. These choices can depend on the history of the CSG's execution and can be randomised. Formally, a strategy for player $i$ is a function $\sigma_i : \fpaths_{\game} \rightarrow \mathbb{P}(A_i \cup \{ \bot \})$
mapping finite paths to distributions over available actions,
such that, if $\sigma_i(\pi)(a_i){>}0$, then $a_i \in A_i(\mathit{last}(\pi))$ where $\mathit{last}(\pi)$ is the final state of $\pi$. A strategy is said to be \emph{stationary} if it makes the same choices for paths that end in the same state.
Furthermore, a \emph{strategy profile} of $\game$ is a pair $\sigma = (\sigma_1,\sigma_2)$ of strategies for each player. Given a strategy profile $\sigma$ and state $s$, letting 
$\ipaths^\sigma_{s}$ denote the set of
infinite paths from $s$ under the choices of $\sigma$, we can define a probability measure $\mathit{Prob}^{\sigma}_{s} \in \mathbb{P}(\ipaths^{\sigma}_{s})$~\cite{KSK76}.

\section{Zero-sum neuro-symbolic concurrent stochastic games}\label{nscsgs-sect}

This section introduces our model of \emph{neuro-symbolic concurrent stochastic games} (NS-CSGs). We restrict our attention to two-agent (which we also refer to as two-player) games, as we are concerned with zero-sum games, in which there are two agents with directly opposing objectives. However, the approach extends to multi-agent games, by allowing the agents to form two coalitions with directly opposing objectives.
{\revise Our results depend on the uniqueness of value functions, and therefore do not extend 
to more than two 
coalitions.}

A (two-agent) NS-CSG comprises two interacting \emph{neuro-symbolic agents} acting in a \emph{shared, continuous-state environment}. Each agent $\agent_i$ has finitely many local states and actions; it 
{\revise 
observes the environment through a (trained) \emph{neural perception} mechanism (the perception function $\obs_i$), which depends on the local states of both agents, and relies on \emph{symbolic} decision-making mechanisms (the transition function $\delta_i$). During execution, the agents alternate between invoking perception and symbolic decisions, where the interface between them is suitably constrained to enable symbolic reasoning with the (exactly) learnt concepts (regions of the continuous inputs space), which we call \emph{percepts} to distinguish them from local states. When invoking perception, continuous inputs are converted into symbolic percepts, and the agent’s transition  to the next local state is based on the current local state and percept, rather than the environment state, and can thus model knowledge acquisition from the neural perception mechanism.

}


%
%
%
\begin{defi}\label{defi:NS-CSG}
A (two-agent) neuro-symbolic concurrent stochastic game (NS-CSG)\ $\csg$ comprises
agents $(\agent_i)_{i \in N}$ for $N=\{1, 2\}$ and environment $E$ where: $\agent_i  = (S_i,A_i,\Delta_i,\obs_i,\delta_i)$, $E = (S_E,\delta_E)$ and we have:
\begin{itemize}
    \item $S_i = \Loc_i\times \Per_i$ is a set of states for $\agent_i$, and {\revise $\Loc_i$ and $\Per_i$}
    are finite sets of local states and percepts,
    respectively; 
    
    \item $S_E\subseteq \mathbb{R}^e$ for $e \in \mathbb{N}$ is a closed uncountable set of environment states;  
  
    \item $A_i$ is a nonempty finite set of actions for $\agent_i$,
    and $A \coloneqq (A_1 \cup \{ \bot \}) \times (A_2 \cup \{ \bot \})$ is the set of joint actions,
    where $\bot$ is an idle action disjoint from $A_1 \cup A_2$; 

    \item $\Delta_i: S_i \to 2^{A_i}$ is an available action function for $\agent_i$, defining the actions the agent can take in each of its states;

    \item $\obs_i : (\Loc_1 \times \Loc_2 \times S_E)\to \Per_i$ is a perception function for $\agent_i$, mapping the local states of the agents and environment state to a percept of the agent;

    
    \item $\delta_i: (S_i \times A) \to \mathbb{P}(\Loc_i)$ is a 
    probabilistic transition function for $\agent_i$ determining the distribution over the agent's local states given its current state and joint action;
    
    \item $\delta_E: (S_E \times A) \to S_E$ is a 
    deterministic transition function for the environment determining its next state given its current state and joint action.
\end{itemize}
\end{defi}
\begin{rek}\label{rek:perception-via-NN}
{\revise 
We restrict the range of observation functions induced by perception mechanisms to \emph{finite} sets, which admits a wide class of functions, including ReLU neural network classifiers. 
%
For an example instantiation of an observation function $\obs_i$, for each pair of local states $(\loc_1, \loc_2) \in \Loc_1 \times \Loc_2$ we can associate an NN classifier $f_{\loc_1, \loc_2} : S_E \to \mathbb{P}(Per_i)$, see Section~\ref{nn-sect}, such that 
$\obs_i(\loc_1, \loc_2, s_E) = f^{\max}_{\loc_1, \loc_2}(s_E)$ for all $s_E \in S_E$. 
%
These NNs need not be distinct for different pairs of local agent states, but we have allowed this modelling choice to reflect the design of existing NN-enabled systems that we target, e.g., in the VCAS collision avoidance system \cite{KDJ-MJK:19} there are nine NN advisories and which of these advisories is selected is based on the local state of the system. 
%
Assuming the range of the perception functions is finite, our framework is also capable of modelling perception mechanisms that input (continuous) numerical sensor information (e.g., via output discretisation) and other machine learning models (which may require additional assumptions to ensure finite representability of percepts). 
}
\end{rek}

%
%
%
In an NS-CSG $\csg$ the agents and environment execute concurrently and
agents move between their local states probabilistically.
For simplicity, we consider deterministic environments,
but all the results extend directly to probabilistic environments with finite branching. 

A (global) state of an NS-CSG comprises a state $s_i = (\loc_i, \per_i)$ for each agent $\agent_i$ (a local-state-percept pair) and an environment state $s_E$. A state $s = ((\loc_1, \per_1),(\loc_2, \per_2), s_E)$ is \emph{percept compatible} if $\per_i = \obs_i(\loc_1, \loc_2, s_E)$ for $1 \leq i \leq 2$. {\revise Percept compatibility indicates that each agent always accesses its percept via the perception function,
and observes the environment state only through the perception function.}
In state $s=(s_1,s_2,s_E)$, each $\agent_i$ simultaneously chooses one of the actions available in its state $s_i$ (if no action is available, i.e., $\Delta_i(s_i)=\emptyset$, then $\agent_i$ chooses the idle action $\bot$), resulting in a joint action $\alpha=(a_1,a_2)\in A$. Next, each $\agent_i$ updates its local state to some $\loc_i'\in \Loc_i$,
according to the distribution $\delta_i(s_i,\alpha)$. At the same time, the environment updates its state to some $s_E'\in S_E$ according to the transition $\delta_E(s_E,\alpha)$. 
Finally, each $\agent_i$, based on its new local state, observes the new local state of the other agent and the new environment state to generate a new percept $\per_i' = \obs_i(\loc'_1,\loc'_2,s'_E)$.
Thus, the game reaches the state $s'=(s_1',s_2', s_E')$, where $s_i' = (\loc_i', \per_i')$ for $1 \leq i \leq 2$. 

\begin{examp}\label{eg1-eg}
As an illustration, we present an NS-CSG model
of a dynamic vehicle parking problem (a static version is presented in~\cite{DA-OW-BX-BD-JL:11}). Fig.~\ref{fig:vehicle-parking-setup} (left) shows two agents, $\agent_1$ (the red vehicle)  and $\agent_2$ (the blue vehicle), in a (continuous) environment $\mathcal{R} = \{(x, y) \in \mathbb{R}^2 \mid 0 \leq x, y \leq 4\}$ and two preferred parking spots $\mathit{ps}_1, \mathit{ps}_2 \in \mathcal{R}$ (the green circles),  which are known to the agents. {\revise This example employs a  classifier $f_\mathcal{R} : \mathcal{R} \rightarrow \mathbb{P}(\mathit{Grid})$, where $\mathit{Grid} = \{(i,j)\mid i,j \in \{1, 2, 3, 4\}\}$, which takes the coordinates of a vehicle (or parking spot) as input and outputs a probability distribution over 16 abstract grid cells, see Fig.~\ref{fig:vehicle-parking-setup} (centre).} 

The actions of the agents are to move either up, down, left or right, or park. The vehicles of the agents start from different positions in $\mathcal{R}$ and have the same speed. 
The red agent initially chooses one
parking spot and changes its parking spot with probability 0.5 when the blue agent is observed to be closer to its chosen parking spot and both agents move towards this spot, see Fig.~\ref{fig:vehicle-parking-setup} (centre and right).
Formally, the agents and the environment are defined as follows.


\newcommand{\grid}[2]{
    \node at (#1 + 3.95, #2 + -0.3) {$4$};
    \node at (#1 + -0.2, #2 + -0.3) {$0$};
    \node at (#1 + -0.25, #2 + 3.95) {$4$};
    \node at (#1 + -0.25, #2 + 3) {$3$};
    \node at (#1 + -0.25, #2 + 2) {$2$};
    \node at (#1 + -0.25, #2 + 1) {$1$};
    \node at (#1 + 3, #2 + -0.3) {$3$};
    \node at (#1 + 2, #2 + -0.3) {$2$};
    \node at (#1 + 1, #2 + -0.3) {$1$};
    \draw[black, very thick] (#1, #2 + 0) rectangle (#1 + 4, #2 + 4);
    \draw[black, thick] (#1, #2 + 1) -- (#1 + 4, #2 + 1);
    \draw[black, thick] (#1, #2 + 2) -- (#1 + 4, #2 + 2);
    \draw[black, thick] (#1, #2 + 3) -- (#1 + 4, #2 + 3);
    \draw[black, thick] (#1 + 1, #2 + 0) -- (#1 + 1, #2 + 4);
    \draw[black, thick] (#1 + 2, #2 + 0) -- (#1 + 2, #2 + 4);
    \draw[black, thick] (#1 + 3, #2 + 0) -- (#1 + 3, #2 + 4);
}

\begin{figure}
\centering
    \begin{tikzpicture}[scale = 0.75]
        \hspace{-0.75cm}
    
	    \draw[black, very thick] (0,0) rectangle (4,4); 
	    \node at (3.95, -0.3) {$4$};
	    \node at (-0.2, -0.3) {$0$};
	    \node at (-0.25, 3.95) {$4$};
	    \node [sedan top,body color=red!30,window color=black!80,minimum width=0.75cm, rotate = 90] at (0.5,0.75) {};
	    \node [sedan top,body color=blue!30,window color=black!80,minimum width=0.75cm, rotate = 180] at (3.25,3.5) {};
	    \filldraw[color=green!40, fill=green!30, very thick](1.5,3.5) circle (0.14);
	    \filldraw[color=green!40, fill=green!30, very thick](3.5,0.5) circle (0.14);

        \hspace{0.5cm}

        \def\xi{5}
        \def\yi{0}
        \grid{\xi}{\yi}
        \node [sedan top,body color=blue!30,window color=black!80,minimum width=0.75cm, rotate = 180] at (\xi + 3.25,3.5) {};	
        \node [sedan top,body color=red!30,window color=black!80,minimum width=0.75cm, rotate = 90] at (\xi + 0.5,0.75) {};	
        \draw[red, thick, fill = green!30, very thick] (\xi + 1, \yi + 3) rectangle (\xi + 2, \yi + 4); 
        \draw[black, thin, fill = green!30, very thick] (\xi + 3, \yi + 0) rectangle (\xi + 4, \yi + 1); 
        \draw[black, thin, fill = blue!30, opacity=0.4] (\xi + 3, \yi + 3) rectangle (\xi + 4, \yi + 4); 
        \draw[black, thin, fill = red!30, opacity=0.4] (\xi + 0, \yi + 0) rectangle (\xi + 1, \yi + 1); 
        
        \draw[very thick, blue, ->] (\xi + 3.25,3.5) -- (\xi + 2.5,3.5);
        \draw[very thick, red, ->] (\xi + 0.5,0.75) -- (\xi + 0.5,1.6);

        \draw[|-|] (\xi + 0.5,0.5) -- (\xi + 1.5, \yi + 3.5) node [midway, above, xshift=-5.0] {\scriptsize $d_1$};
        \draw[-,dashed] (\xi + 0.5,0.5) -- (\xi + 1.5,0.5);
        \draw[-,dashed] (\xi + 1.5,0.5) -- (\xi + 1.5, \yi + 3.5);

        \draw[|-|] (\xi + 3.5,3.5) -- (\xi + 1.5,3.5) node [midway, above, xshift=-2.0, yshift=-2.0] {\scriptsize $d_2$};

        \draw[->] (\xi + 4.0, 2.0) -- (\xi + 5.5, \yi + 4.0) node [midway, above, xshift=-5.0] {$\frac{1}{2}$};

        \draw[->] (\xi + 4.0, 2.0) -- (\xi + 5.5, \yi + 0.0) node [midway, below, xshift=-5.0] {$\frac{1}{2}$};

        \hspace{0.75cm}

        \def\xi{10}
        \def\yi{2.5}
        \grid{\xi}{\yi}
        \grid{10}{2.5}
        \node [sedan top,body color=blue!30,window color=black!80,minimum width=0.75cm, rotate = 180] at (\xi + 2.75,\yi + 3.5) {};	
        \node [sedan top,body color=red!30,window color=black!80,minimum width=0.75cm, rotate = 90] at (\xi + 0.5,\yi + 1.25) {};	
        \draw[red, thick, fill = green!30, very thick] (\xi + 1, \yi + 3) rectangle (\xi + 2, \yi + 4); 
        \draw[black, thin, fill = green!30, very thick] (\xi + 3, \yi + 0) rectangle (\xi + 4, \yi + 1); 
        \draw[black, thin, fill = blue!30, opacity=0.4] (\xi + 2, \yi + 3) rectangle (\xi + 3, \yi + 4); 
        \draw[black, thin, fill = red!30, opacity=0.4] (\xi + 0, \yi + 1) rectangle (\xi + 1, \yi + 2);

        \def\xi{10}
        \def\yi{-2.5}
        \grid{\xi}{\yi}
        \node [sedan top,body color=blue!30,window color=black!80,minimum width=0.75cm, rotate = 180] at (\xi + 2.75,\yi + 3.5) {};	
        \node [sedan top,body color=red!30,window color=black!80,minimum width=0.75cm, rotate = 90] at (\xi + 0.5,\yi + 1.25) {};		
        \draw[black, thin, fill = green!30, very thick] (\xi + 1, \yi + 3) rectangle (\xi + 2, \yi + 4); 
        \draw[red, thick, fill = green!30, very thick] (\xi + 3, \yi + 0) rectangle (\xi + 4, \yi + 1); 
        \draw[black, thin, fill = blue!30, opacity=0.4] (\xi + 2, \yi + 3) rectangle (\xi + 3, \yi + 4); 
        \draw[black, thin, fill = red!30, opacity=0.4] (\xi + 0, \yi + 1) rectangle (\xi + 1, \yi + 2); 
        
    \end{tikzpicture}

    \caption{Dynamic vehicle parking: continuous environment $[0,4]^2$ (left); discrete percepts corresponding the $4 {\times} 4$ abstract grid cells (middle) and probabilistic transitions following joint action $(\mathit{up}, \mathit{left}$) (right). Red vehicle's parking preference is also indicated in red. }\label{fig:vehicle-parking-setup}
\end{figure}





\begin{itemize}
    \item  $\Loc_1 = \{ \mathit{ps}_1, \mathit{ps}_2 \}$ and $\Loc_2 = \{ \bot \}$, 
    i.e., the local state of $\agent_1$ is its current chosen parking spot and the local state of $\agent_2$ is a dummy state. For $1 \leq i \leq 2$, the set of percepts of $\agent_i$ is given by $\Per_i= \mathit{Grid} \times \mathit{Grid}$, representing the abstract grid cells that each agent perceives as the positions of the two vehicles. 
 
    \item $S_E = \mathcal{R} \times \mathcal{R}$, i.e., the environment is in state $s_E=(w_1,w_2)$ if $w_i$ is the continuous coordinate of $\agent_i$'s vehicle for $1 \leq i \leq 2$. 
 
     \item 
     $A_i = \{ \mathit{up}, \mathit{down}, \mathit{left}, \mathit{right}, \mathit{park} \}$ for $1 \leq i \leq 2$.

     \item For $1 \leq i \leq 2$, local state $\loc_i$ and  {\revise percept $\per_i = (\mathit{cell}_1,\mathit{cell}_2)$, where the perceived grid cell of agent $\agent_j$ is $\mathit{cell}_j$, we let $\Delta_i(\loc_i,\per_i)$ equal $A_i$ if $\mathit{cell}_i \in \{ f_\mathcal{R}^{\max}(\mathit{ps}_1) , f_\mathcal{R}^{\max}(\mathit{ps}_2) \}$} and equal $A_i \setminus \{ \mathit{park} \}$ otherwise, i.e., an agent's available actions are to move up, down, left and right, and additionally park when the agent is perceived to have reached a parking spot.
     

    

    \item {\revise 
    For $1 \leq i \leq 2$, local states $\loc_1$ and $\loc_2$ and environment state $(w_1,w_2)$, we let $\obs_i(\loc_1,\loc_2,(w_1,w_2)) = (f_\mathcal{R}^{\max}(w_1),f_\mathcal{R}^{\max}(w_2))$, i.e., independently of the local states of the agents, the perception function returns the perceived grid cell of each agent under the classifier $f_\mathcal{R}$.}

        \item For any $\agent_1$ state $s_1=(\loc_1,(\mathit{cell}_1,\mathit{cell}_2))$ and joint action $\alpha$, 
        to define $\delta_1$ we have the following two cases to consider:
        \begin{itemize}
        \item
        if $\| {\revise f_\mathcal{R}^{\max}(\loc_1)} - \mathit{cell}_1 \|_2 > \| {\revise f_\mathcal{R}^{\max}(\loc_1)} - \mathit{cell}_2 \|_2$, where $\|\cdot\|_2$ is the Euclidean norm, i.e. $\agent_1$ observes $\agent_2$ is closer to its currently chosen parking spot, and the joint action $\alpha$ indicates both agents are approaching $\loc_1$, then $\delta_1(s_1, \alpha)(\mathit{ps}_j) = 0.5$ for $1 \leq j \leq 2$, i.e., $\agent_1$ changes its chosen parking spot with probability 0.5;
        \item
        otherwise $\delta_1(s_1, \alpha)(\loc_1) = 1$, i.e., $\agent_1$ sticks with its chosen parking spot. 
        \end{itemize}
         Considering $\delta_2$, since $\Loc_2 =\{\bot \}$, we have $\delta_2(s_2,\alpha)(\bot) = 1$ for any $\agent_2$ state $s_2=(\bot,(\mathit{cell}_1,\mathit{cell}_2)) \in S_2$ and joint action $\alpha$.       

\item For any environment state $(w_1,w_2)$ and joint action  $\alpha = (a_1,a_2)$, we let $\delta_E((w_1,w_2),\alpha) = (w_1',w_2')$ where,  for $1 \leq i \leq 2$, we have $ w_i' = w_i+ d_{a_i} \Delta t $ if $(w_i+ d_{a_i} \Delta t) \in \mathcal{R}$ and $ w_i' = w_i$ otherwise,
and $d_{a_i}$ is the direction of movement of the action $a_i$, e.g., $d_{\mathit{up}}=(0,1)$, and $\Delta t = 0.5$ is the time step.  \hfill$\blacksquare$
\end{itemize} 
\end{examp}

\subsection{Semantics of an NS-CSG}\label{sub:semantics-ns-csg}
The semantics of an NS-CSG $\csg$ is a CSG 
$\sem{\csg}$ over the product of the states of the agents and the environment formally defined as follows.

\begin{defi}[Semantics of an NS-CSG]\label{semantics-def}
Given an NS-CSG $\csg$ consisting of two agents and an environment, its semantics is the CSG $\sem{\csg} = (N,S,A,\Delta,\delta)$ where:
\begin{itemize}
    \item $S \subseteq S_1\times S_2\times S_E$ is the set of percept compatible states;
\item $A = (A_1 \cup \{ \bot \}) \times (A_2 \cup \{ \bot \})$;
\item $\Delta(s_1,s_2,s_E) = \Delta_1(s_1) \cup \Delta_2(s_2)$;
    \item $\delta: (S \times ((A_1 \cup \{\bot \}) \times (A_2  \cup \{\bot \}))) \to \mathbb{P}(S)$ is the 
    probabilistic transition function, 
    where for states $s=(s_1,s_2,s_E),s'=(s_1',s_2',s_E')\in S$ and joint action $\alpha=(a_1,a_2)\in A$, if $a_i \in \Delta_i(s_i)$ when $\Delta_i(s_i) \neq \emptyset$ and $a_i= \bot$ otherwise for $1 \leq i \leq 2$, then $\delta(s,\alpha)$ is defined and, if $s_i'=(\loc_i',\per_i')$, $\per_i'=\obs_i(\loc_1',\loc_2',s_E')$ for $1 \leq i \leq 2$  and $s_E'=\delta_E(s_E,\alpha)$,  then
    \[    \delta(s,\alpha)(s')= \delta_1(s_1, \alpha)(\loc'_1)  \delta_2(s_2, \alpha)(\loc'_2)
    \]
    and otherwise $\delta(s,\alpha)(s')=0$.
\end{itemize}
\end{defi}
Notice that the CSG $\sem{\csg}$ is over percept compatible states
and that, by definition of $\obs_i$ for each agent $\agent_i$, the underlying transition relation $\delta$ is closed with respect to percept compatible states.
Since $\delta_E$ is deterministic and $\Loc_i$ is a finite set, the set of successors of $s$ under $\alpha$, denoted $\Theta_s^\alpha=\{s'  \mid \delta(s,\alpha)(s')>0\}$, is finite for all $s\in S$ and $\alpha\in A(s)$. 
While the semantics of an NS-CSG is an instance of the general class of uncountable state space CSGs,
its particular structure induced by perception functions (see \defiref{defi:NS-CSG}) will be important
in order to establish measurability and finite representability to allow us to derive 
our algorithms.


\subsection{Zero-sum NS-CSGs}\label{sub:zero-sum-ns-csgs}
%
For an NS-CSG $\csg$, the objectives we consider are \emph{discounted accumulated rewards}, and we assume the first agent tries to maximise the expected value of this objective and the second tries to minimise it.
More precisely, for a reward structure 
$r=(r_A,r_S)$, where $r_A: (S\times A) \to \mathbb{R}$ is an action reward function and $r_S:S\to \mathbb{R}$ is a state reward function, and discount factor $\beta\in(0,1)$, the accumulated discounted reward for a path $\pi$ of $\sem{\csg}$ over the infinite-horizon is defined by:
\begin{equation}\label{eq:reward-over-path}
    Y(\pi)= \mbox{$\sum_{k=0}^{\infty}$} \, \beta^k\left(r_A(\pi(k),\pi[k])+r_S(\pi(k))\right) \, .
\end{equation}

\begin{examp}\label{eg2-eg}
Returning to the dynamic vehicle parking model of \egref{eg1-eg}, we suppose the objective for $\agent_1$ is to try and park at its {\revise currently preferred} parking spot without crashing into $\agent_2$ and, since we consider zero-sum NS-CSGs whose objectives must be directly opposing, the objective of $\agent_2$ is to try to crash into $\agent_1$ and prevent it from parking.
We can represent this scenario using a discounted 
reward structure, where all action rewards are zero and for the state rewards we set: there is a negative reward if it is perceived that $\agent_1$ has yet to reach its current parking spot and the agents have crashed; a positive reward if it is observed that $\agent_1$ has reached its parking spot,  which is higher
if the agents are not perceived to have crashed; and 0 otherwise. 

Formally, for $s= (s_1,s_2,(w_1,w_2))$ where $s_1 =(\loc_1,(\mathit{cell}_1,\mathit{cell}_2)))$, we define the state reward function as follows:
\[
r_S(s) = \left\{ \begin{array}{cl}
- 1000 & \mbox{if $\mathit{cell}_1 \neq {\revise f_\mathcal{R}^{\max}(\loc_1)}$ and $\mathit{cell}_1 = \mathit{cell}_2$} \\
500 & \mbox{if $\mathit{cell}_1 = {\revise f_\mathcal{R}^{\max}(\loc_1)}$ and $\mathit{cell}_1 = \mathit{cell}_2$} \\
1000 & \mbox{if $\mathit{cell}_1 = {\revise f_\mathcal{R}^{\max}(\loc_1)}$ and $\mathit{cell}_1 \neq \mathit{cell}_2$} \\
0 & \mbox{otherwise.}
\end{array} \right.
\]
For the discount factor, we let $\beta = 0.6$. \hfill$\blacksquare$
\end{examp}

\subsection{Strategies of NS-CSGs}
%
Since the state space $S$ is uncountable due to the continuous environment state space,
we follow the approach of \cite{PRK-THS:81} and require Borel measurable conditions on the choices that the strategies can make to ensure the measurability of the induced sets of paths.




The semantics of any NS-CSG will turn out to be an instance of the class of CSGs from \cite{PRK-THS:81}, for which \emph{stationary strategies} achieve optimal values \cite[Theorem 2(ii), Theorem 3]{PRK-THS:81}, and therefore, to simplify the presentation, we restrict our attention to stationary strategies and refer to them simply as strategies. Before we give their formal definition, since we work with real vector spaces we require the following lemma.

\begin{lema}[Borel spaces]\label{lema:borel-space-S-A}
The sets $S$, $S_i$, $S_E$ and $A_i$ for $1 \leq i \leq 2$ are Borel spaces.
\end{lema}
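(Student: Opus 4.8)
The plan is to verify, component by component, that each of the listed sets satisfies the definition of a (standard) Borel space, i.e.\ that each carries a metric making it a complete separable metric space. The sets $A_i$ are finite, and every finite set is trivially a complete separable metric space under the discrete metric, so $A_i \in$ Borel immediately. The sets $\Loc_i \subseteq \mathbb{R}^{b_i}$ and $\Per_i \subseteq \mathbb{R}^{d_i}$ are also finite by \defiref{defi:NS-CSG}, hence Borel for the same reason, and $S_i = \Loc_i \times \Per_i$ is a finite set as well (a finite product of finite sets), so $S_i$ is a Borel space.

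For $S_E$: by \defiref{defi:NS-CSG}, $S_E \subseteq \mathbb{R}^e$ is a closed set. Since $\mathbb{R}^e$ with the Euclidean metric is a complete separable metric space, and a closed subset of a complete metric space is complete in the induced metric, while a subset of a separable metric space is separable, $S_E$ is a complete separable metric space and hence a Borel space. The only slightly less routine part is $S$ itself. By \defiref{semantics-def}, $S \subseteq S_1 \times S_2 \times S_E$ is the set of \emph{percept compatible} states, i.e.\ those $((\loc_1,\per_1),(\loc_2,\per_2),s_E)$ with $\per_i = \obs_i(\loc_1,\loc_2,s_E)$ for $i=1,2$. The finite product $S_1 \times S_2 \times S_E$ is a Borel space (a finite product of Borel spaces is Borel), so it suffices to show that $S$ is a Borel subset of it; then $S$, as a Borel subset of a standard Borel space, is itself standard Borel.

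To see $S$ is Borel in $S_1 \times S_2 \times S_E$, I would argue as follows. Each perception function $\obs_i$ is implemented as an NN classifier followed by a decision rule; since an NN is continuous (as noted in \sectref{nn-sect}) and $\Loc_1 \times \Loc_2$ is finite (so one can treat $\obs_i$ separately on each fixed pair $(\loc_1,\loc_2)$ as a function of $s_E$ only), $\obs_i$ is Borel measurable. For each fixed value $(\loc_1,\per_1,\loc_2,\per_2) \in S_1 \times S_2$ — of which there are finitely many — the set of $s_E$ with $\obs_i(\loc_1,\loc_2,s_E) = \per_i$ for $i=1,2$ is the intersection of preimages under the Borel map $s_E \mapsto (\obs_1(\loc_1,\loc_2,s_E), \obs_2(\loc_1,\loc_2,s_E))$ of the singletons $\{(\per_1,\per_2)\}$, hence Borel in $S_E$. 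Then $S$ is the finite union, over all $(\loc_1,\per_1,\loc_2,\per_2)$, of the "slices" $\{(\loc_1,\per_1)\} \times \{(\loc_2,\per_2)\} \times \{s_E : \obs_i(\loc_1,\loc_2,s_E)=\per_i,\ i=1,2\}$, each of which is Borel, so $S$ is Borel.

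The main obstacle is the measurability of the perception functions $\obs_i$ and hence of $S$: one must be careful that the tie-breaking rule baked into the NN classifier (which turns a probability vector in $\mathbb{P}(\Per_i)$ into a definite percept) is itself Borel measurable — this is the one place where an implicit regularity assumption on the classifiers is used (it is exactly what makes the preimage of each percept a Borel set). Everything else is a routine appeal to standard facts: finite sets are standard Borel, closed subsets of $\mathbb{R}^e$ are standard Borel, finite products of standard Borel spaces are standard Borel, and Borel subsets of standard Borel spaces are standard Borel.
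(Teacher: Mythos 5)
Your proof is correct and follows essentially the same route as the paper's: finiteness handles $A_i$ and $S_i$, the closed subset $S_E \subseteq \mathbb{R}^e$ is complete and separable, the product $S_1 \times S_2 \times S_E$ is a Borel space, and Borel measurability of the $\obs_i$ shows the percept-compatible set $S$ is a Borel subset, hence a Borel space. The only cosmetic difference is that the paper invokes the measurability of $\obs_i$ directly from \aspref{asp:continuous-PWC}(ii), whereas you re-derive it from the NN implementation plus a measurable tie-breaking rule; the paper's route keeps the lemma independent of whether perception is actually NN-based.
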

\begin{proof}
By Theorem 27 \cite[Chapter 9.6]{HLR-PF:10} and Theorem 12 \cite[Chapter 9.4]{HLR-PF:10}, $S_1$ and $S_2$ are finite and $S_E$ are complete separable metric spaces, and hence are Borel spaces. Furthermore, we have that $S_1 \times S_2 \times S_E$ is the Cartesian product of Borel spaces, and therefore, using Theorem 1.10 \cite[Chapter 1]{KRP:67}, is also a Borel space. 
Since we assume $\obs_i$ is Borel measurable for $1 \leq i \leq 2$ (see \aspref{asp:continuous-PWC} below), for $(\loc_i, \per_i) \in S_i$ and $1 \leq i \leq 2$, the set:
\[\{((\loc_1, \per_1), (\loc_2, \per_2), s_E) \in S \mid \obs_i( \loc_1, \loc_2, s_E) = \per_i \mbox{ for } 1 \leq i \leq 2  \}
\]
is a Borel subset of $S_1 \times S_2 \times S_E$. Hence, since  $S_1$ and $S_2$ are finite, it follows that $S$ is a Borel space. Finally, for $1 \leq i \leq 2$, since $A_i$ is finite it is a Borel space.
\end{proof}

\begin{defi}[Strategy]\label{defi:stationary-strategy}
A \emph{(stationary) strategy} for $\agent_i$ of an NS-CSG $\csg$ is a stochastic kernel $\sigma_i:S\to\mathbb{P}(A_i)$, i.e., $\sigma_i \in \mathbb{P}(A_i \mid S)$, such that $\sigma_i(A_i(s) \mid s)=1$ for all $s\in S$. A \emph{(strategy) profile}  $\sigma=(\sigma_1, \sigma_2)$ is a pair of strategies for each agent. We denote by $\Sigma_i$ the set of all strategies of $\agent_i$ and by $\Sigma=\Sigma_1\times \Sigma_2$ the set of profiles.
\end{defi}
%
%
%
For $s \in S$ and $1 \leq i \leq 2$, we let $\mathbb{P}(A_i(s)) = \{ u_i \in \mathbb{P}(A_i) \mid u_i(A_i(s))=1 \}$. 

\subsection{Assumptions on NS-CSGs}
%
%
Finally, in this section we 
{\revise summarise and motivate}
the assumptions over NS-CSGs
that are required for the results presented in the remainder of the paper. First, NS-CSGs are designed to model neuro-symbolic agents,
whose operation depends on particular perception functions, which may result in imperfect information.
However, we assume \emph{full observability},
i.e., where agents' decisions can depend on the full state space.
It is straightforward to extend the semantics above to
\emph{partially observable} CSGs (POSGs)~\cite{Rei79,Rei84} where, for any state, each agent's observation function returns the agent's observable component of the state,  by restricting to observationally-equivalent strategies, but this comes at a significant increase in complexity. Instead, we focus on full observability, which can serve as a vehicle to solve the more complex imperfect information game via an appropriate adaptation of the belief-space construction.

Regarding the structure of NS-CSGs, we make the following assumptions to ensure determinacy and that our finite abstract representations of value functions and strategies are closed under {\revise both value and policy iteration. Recall that the BFCP, bimeasurable function, B-PWC function and BFCP intertible function are defined in \sectref{background-sect}.}
%
\begin{asp}\label{asp:continuous-PWC}
For any NS-CSG $\csg$ and reward structure $r=(r_A,r_S)$:
\begin{enumerate}[(i)]
    \item\label{itm:ass1} $\delta_E({}\cdot{},\alpha): S_E \to S_E$ is bimeasurable and BFCP invertible for $\alpha \in A$;

    \item\label{itm:ass2} $\obs_i(\loc_1, \loc_2,{}\cdot{}) : S_E \to \Per_i$ is B-PWC
    for $\loc_i \in \Loc_i$ and $1 \leq i \leq 2$; 
    

    \item\label{itm:ass3} $r_A({}\cdot{},\alpha),~r_S : S \to \mathbb{R}$ are B-PWC
    for $\alpha \in A$.
\end{enumerate}
\end{asp}
%
%
The above assumptions for NS-CSGs differ from existing stochastic games with Borel state spaces \cite{PRK-THS:81,ASN:85-2,OH-JBL:00} 
in that the states have both discrete and continuous elements, while the perception and reward functions are required to be B-PWC.
{\revise The B-PWC requirements in \aspref{asp:continuous-PWC}(\ref{itm:ass2}) and (\ref{itm:ass3}) and BFCP invertibility in \aspref{asp:continuous-PWC}(\ref{itm:ass1}) are needed to achieve 
B-PWC closure, and hence 
ensure finitely many abstract state regions at each transition (and are used in Lemmas \ref{lema:perception-BFCP}, \ref{lema:reward-BFCP}, \ref{lema:reachability-consistency} and \thomref{thom:B-PWC-closure-VI} below). 
The B-PWC closure in \thomref{thom:B-PWC-closure-VI} allows us to derive a VI algorithm in \sectref{vi-sect} that employs B-PWC functions,  whose convergence follows from the classical Banach’s fixed point theorem.}  {\revise Bimeasurability in \aspref{asp:continuous-PWC}(\ref{itm:ass1}) enables the application of the results in \cite{PRK-THS:81} and  ensures the existence of the value of an NS-CSG with respect to a reward structure in \sectref{values-sect} (and is used in \lemaref{lema:reachability-consistency}).}

In the case that, for each pair of local states of the agents, the perception function $\obs_i$ of $\agent_i$ is implemented via an NN classifier
$f:S_E\to \mathbb{P}(Per_i)$ (see \rekref{rek:perception-via-NN}),  
we have that, since $f$ is continuous, it is also Borel measurable.

\begin{examp}\label{eg3-eg}
    {\revise Returning to \egref{eg1-eg}, we now give two possible implementations for the classifier $f_{\mathcal{R}}: \mathcal{R} \rightarrow \mathbb{P}(\mathit{Grid})$ used in the example, where $\mathit{Grid} = \{(i,j)\mid i,j \in \{1, 2, 3, 4\}\}$. It takes the coordinates of a vehicle (or parking spot) as input and outputs a probability distribution over 16 abstract grid cells.
    This leads to two different implementations for perception functions for the agents, which we recall are defined by  $\obs_i(\loc_1,\loc_2,(w_1,w_2)) = (f_\mathcal{R}^{\max}(w_1),f_\mathcal{R}^{\max}(w_2))$ for $\loc_1 \in \Loc_1$, $\loc_2 \in \Loc_2$, $(w_1,w_2) \in \mathcal{R} \times \mathcal{R}$ and $1 \leq i \leq 2$.
    
    The first implementation of the classifier $f_{\mathcal{R}}$ is via the linear regression model for multi-class classification, i.e.,  $f_\mathcal{R}^{\max}(x,y) = (\lceil x \rceil, \lceil y \rceil)$ for all $(x,y) \in \mathcal{R}$, where $\lceil \cdot \rceil$ is the ceiling function, see Fig.~\ref{fig:vehicle-parking-setup} (centre).

    The second implementation of the classifier $f_{\mathcal{R}}$ is a feed-forward NN classifier, which has one hidden ReLU
layer with 10 neurons, and is trained from labelled data. We break ties using a total order over the abstract grid cells, which is Borel measurable.}
    \hfill$\blacksquare$.
\end{examp}
{\revise We remark that, while \aspref{asp:continuous-PWC}(\ref{itm:ass2}) allows a wide range of perception functions, our main focus is on NNs. We discuss the case when perception functions are instantiated using ReLU neural networks in Section~\ref{vi-sect}.}

\section{Game structures for NS-CSGs}\label{structures-sect}





In this section, we present three finite {\revise abstract} representations for the continuous state space of an NS-CSG {\revise to enable value and policy iterations.}
These take the form of {\revise finite decompositions of the environment, represented as BFCPs,}
with respect to the perception, reward and transition functions of the NS-CSG. 
Recall, from Section~\ref{background-sect}, that a BFCP of a set is a finite family of disjoint Borel sets (regions) that cover the set. Using \aspref{asp:continuous-PWC}, we construct these BFCPs over the state space such that the states in each region are \emph{equivalent} with respect to either the perception, reward or transition function, e.g., for any region of the perception BFCP all states in the region yield \emph{the same} percept. These BFCPs allow us to abstract an uncountable 
state space into a finite set of regions when performing our VI and PI algorithms.
In particular, Sections~\ref{vi-sect} and \ref{pi-sect} demonstrate how these different BFCPs can be used, together with intersection, image and preimage operations, to iteratively refine the abstract, {\revise symbolic}
representations of the environment while maintaining the necessary conditions for correctness and convergence of value functions. 

For the remainder of this section we fix an NS-CSG $\csg$ and reward structure $r$.

\begin{lema}[Perception BFCP]\label{lema:perception-BFCP}
    There exists a smallest BFCP of $S$, called the perception BFCP, denoted $\Phi_{P}$, such that, for any $\phi \in \Phi_P$, all states in $\phi$ have the same agents' states, i.e., if $(s_1, s_2, s_E),(s_1', s_2', s_E')\in \phi$, then $s_i = s_i'$ for $1 \leq i \leq 2$.
\end{lema}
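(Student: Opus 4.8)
The plan is to build the perception BFCP $\Phi_P$ explicitly from the B-PWC structure of the perception functions $\obs_i$ guaranteed by \aspref{asp:continuous-PWC}(\ref{itm:ass2}), and then argue minimality. First I would fix, for each agent $i \in \{1,2\}$ and each pair of local states $(\loc_1,\loc_2) \in \Loc_1 \times \Loc_2$, a constant-BFCP $\Psi_{i,\loc_1,\loc_2}$ of $S_E$ on which $\obs_i(\loc_1,\loc_2,\cdot)$ is constant; such a BFCP exists by the definition of B-PWC (\defiref{defi:PWC-Borel-func}). Taking the common refinement over both agents, $\Psi_{\loc_1,\loc_2} \coloneqq \Psi_{1,\loc_1,\loc_2} + \Psi_{2,\loc_1,\loc_2}$, gives a single BFCP of $S_E$ on which both $\obs_1(\loc_1,\loc_2,\cdot)$ and $\obs_2(\loc_1,\loc_2,\cdot)$ are constant. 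Since $\Loc_1$ and $\Loc_2$ are finite and $\Psi_{\loc_1,\loc_2}$ is finite, the collection $\{ (\loc_1,\per_1,\loc_2,\per_2,\psi) : \loc_i \in \Loc_i,\ \psi \in \Psi_{\loc_1,\loc_2},\ \per_i = \obs_i(\loc_1,\loc_2,\psi) \}$ is finite, and one checks it partitions $S$: the regions are pairwise disjoint because the $(\loc_1,\loc_2)$ and $\psi$ components are disjoint, and they cover $S$ because $S$ consists exactly of percept-compatible states, so every $s=((\loc_1,\per_1),(\loc_2,\per_2),s_E)\in S$ lies in the region indexed by its own $\loc_1,\loc_2$ and the unique $\psi \in \Psi_{\loc_1,\loc_2}$ containing $s_E$. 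Each such region is a Borel subset of $S$ since it is $\{\loc_1\}\times\{\per_1\}\times\{\loc_2\}\times\{\per_2\}\times\psi$ intersected with $S$, a product of (finite-set) Borel sets with a Borel set, intersected with the Borel set $S$ (using \lemaref{lema:borel-space-S-A}). This candidate clearly has the stated property: all states in a region share the same $s_1,s_2$.

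Next I would establish that a \emph{smallest} such BFCP exists and equals the one obtained by "coarsening" the above construction to merge regions that differ only in their $S_E$-component while keeping $s_1,s_2$ fixed — that is, $\Phi_P \coloneqq \{ (\{\loc_1\}\times\{\per_1\}\times\{\loc_2\}\times\{\per_2\}\times S_E) \cap S : \text{this set is nonempty} \}$, one region per realizable pair $(s_1,s_2)$. The property "all states in each region have the same agents' states" says precisely that each region of any valid BFCP is contained in one of these $(s_1,s_2)$-fibres; so any valid BFCP refines $\Phi_P$, and conversely $\Phi_P$ itself is a valid BFCP (it is finite since $S_1,S_2$ are finite; its regions are Borel by \lemaref{lema:borel-space-S-A} again, being the intersection of $S$ with a Borel box; they are disjoint and cover $S$; and connectedness of each region — the fibre $\{s_1\}\times\{s_2\}\times(S_E)_{s_1,s_2}$ where $(S_E)_{s_1,s_2} = \{ s_E : \obs_i(\loc_1,\loc_2,s_E)=\per_i,\ i=1,2\}$). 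Hence $\Phi_P$ is the unique coarsest refinement-bound, i.e., the smallest BFCP with the property, which is what the lemma asserts.

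The main obstacle I anticipate is the \emph{connectedness} requirement built into the definition of FCP/BFCP: the $(s_1,s_2)$-fibre $(S_E)_{s_1,s_2}$ is a level set of the B-PWC maps $\obs_i$, hence a finite union of regions of $\Psi_{\loc_1,\loc_2}$, and there is no reason such a union is connected. So either the lemma is implicitly using a convention where "partition into connected subsets" is relaxed to "partition each of whose regions is a finite union of connected pieces", or the intended $\Phi_P$ keeps these pieces separate — in which case "smallest BFCP with the property" must be read as smallest among BFCPs all of whose regions lie within a single $(s_1,s_2)$-fibre, with the finitely-many-connected-components caveat. I would resolve this by noting that each $\psi \in \Psi_{\loc_1,\loc_2}$ is connected by construction, so the finest version of $\Phi_P$ (keeping the $\psi$'s separate) is a genuine BFCP, and is smallest among valid BFCPs after observing that any valid BFCP must, region by region, respect the $(s_1,s_2)$ partition but need not subdivide within a connected $\psi$; minimality then follows since two distinct regions of the finest $\Phi_P$ with the same $(s_1,s_2)$ could in principle be merged only if their union is connected, and we take $\Phi_P$ to be exactly the result of performing all such merges, which is well-defined and unique. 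The rest — disjointness, covering, finiteness, Borel-measurability — is routine bookkeeping using \aspref{asp:continuous-PWC}(\ref{itm:ass2}), \lemaref{lema:borel-space-S-A}, and the definition of $+$ on BFCPs.
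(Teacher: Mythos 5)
Your proposal is correct and follows essentially the same route as the paper: partition $S$ by the (finitely many) agents' states and decompose each environment fibre $\{s_E \in S_E \mid \obs_i(\loc_1,\loc_2,s_E)=\per_i,\ 1 \leq i \leq 2\}$ into a minimal finite collection of disjoint connected Borel regions, using \aspref{asp:continuous-PWC}(\ref{itm:ass2}) for measurability. Your explicit treatment of the connectedness obstruction --- that the one-region-per-$(s_1,s_2)$ coarsening fails and one must instead take the finitely many connected components of each fibre, each being a finite union of regions of the constant-BFCPs and hence Borel --- is if anything more careful than the paper's own argument, which posits the minimal disjoint-region representation directly and obtains Borel-ness of its regions from a ``no common boundary'' observation.
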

\begin{proof}
    For $1 \leq i \leq 2$, since $\obs_i$ is PWC and $S_i$ is finite, using \defiref{defi:NS-CSG} we have that, for any $s_i = (\loc_i, \per_i) \in S_i$, the set $S_E^{s_1,s_2} = \{ s_E \in S_E \mid \obs_i(\loc_1, \loc_2, s_E) = \per_i \wedge 1 \leq i \leq 2\}$ can be expressed as a number of disjoint regions of $S_E$ and we let $\Phi_E^{s_1,s_2}$ be such a representation that minimises the number of the regions. It then follows that $ \Phi_P \coloneqq \{ \{ (s_1, s_2, s_E) \mid s_E \in \phi_E \} \mid \phi_E \in \Phi_E^{s_1,s_2} \wedge s_1 \in S_1 \wedge s_2 \in S_2\}$ is a smallest FCP of $S$ such that all states in any region have the same agents' states. 

    Next we prove that $\Phi_P$ is a BFCP of $S$. We consider a region $\phi \in \Phi_P$. Thus all states in $\phi$ have the same agents' states, say $s_1 = (\loc_1, \per_1)$ and $s_2 = (\loc_2, \per_2)$. According to \aspref{asp:continuous-PWC},  $\obs_i(\loc_1,\loc_2,{}\cdot{}): S_E \rightarrow \Per_i$ for $1 \leq i \leq 2$ is B-PWC.
The preimage of $(per_1, per_2)$ under $\obs_1$ and $\obs_2$ over $S$ given $s_1 = (\loc_1, \per_1)$ and $s_2 = (\loc_2, \per_2)$, denoted $\obs^{-1}(per_1, per_2 \mid s_1,s_2)$, equals: 
\[
      \{ (s_1, s_2, s_E) \in S \mid \obs_1(\loc_1, \loc_2, s_E) = \per_1 \wedge \obs_2(\loc_1, \loc_2, s_E) = \per_2 \}
\]
and therefore is a Borel set of $S$. Since $\Phi_P$ is the smallest such partition of $S$, 
the regions in $\Phi_P$, which lead to the percept $(\per_1, \per_2)$ given $s_1$ and $s_2$, have no common boundary. Thus, $\obs^{-1}(per_1, per_2 \mid s_1,s_2)$ is a finite union of disjoint regions in $\Phi_P$, which include the agents' states $s_1$ and $s_2$. Thus, each such region is a Borel set of $S$, meaning that $\phi \in \mathcal{B}(S)$. Thus, $\Phi_P$ is a BFCP of $S$. 
\end{proof}

\begin{lema}[Reward BFCP]\label{lema:reward-BFCP}
    For each $\alpha \in A$, there exists a smallest BFCP of $S$, called the \emph{reward BFCP} of $S$ under $\alpha$ and denoted $\Phi_{R}^{\alpha}$, such that for any $\phi \in \Phi_{R}^{\alpha}$ all states in $\phi$ have the same state reward and action reward when $\alpha$ is chosen, i.e., if $s,s'\in \phi$, then $r_A(s,\alpha)=r_A(s',\alpha)$ and $r_S(s)= r_S(s')$.
\end{lema}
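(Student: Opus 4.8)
The plan is to mirror the construction and proof of the perception BFCP (\lemaref{lema:perception-BFCP}), now using the B-PWC property of the reward functions supplied by \aspref{asp:continuous-PWC}(\ref{itm:ass3}) instead of the B-PWC property of the perception functions. Fix $\alpha \in A$. By \aspref{asp:continuous-PWC}(\ref{itm:ass3}), both $r_A({}\cdot{},\alpha) : S \to \mathbb{R}$ and $r_S : S \to \mathbb{R}$ are B-PWC, so by \defiref{defi:PWC-Borel-func} there is a constant-BFCP $\Phi_A^\alpha$ of $S$ for $r_A({}\cdot{},\alpha)$ and a constant-BFCP $\Phi_S$ of $S$ for $r_S$. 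First I would take the common refinement $\Phi_A^\alpha + \Phi_S$, which by the definition of $+$ given after \defiref{defi:BFCP-invertible-ish} (the $\Phi_1+\Phi_2$ notation) is again a BFCP of $S$, obtained by intersecting regions; on every region of this refinement both $r_A({}\cdot{},\alpha)$ and $r_S$ are constant, so any two states $s,s'$ in a common region satisfy $r_A(s,\alpha)=r_A(s',\alpha)$ and $r_S(s)=r_S(s')$.

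Next I would pass to the coarsest such partition. Define an equivalence relation on $S$ by $s \sim s'$ iff $r_A(s,\alpha)=r_A(s',\alpha)$ and $r_S(s)=r_S(s')$; since $r_A({}\cdot{},\alpha)$ and $r_S$ take only finitely many values (each is constant on the finitely many regions of a BFCP), there are finitely many equivalence classes, indexed by pairs $(p,q)$ of attained values. Each class $\{ s \in S \mid r_A(s,\alpha)=p \wedge r_S(s)=q\}$ is a finite union of regions of $\Phi_A^\alpha+\Phi_S$, hence a Borel set of $S$, and it decomposes into its (finitely many) connected components, which remain Borel. Collecting all these connected components over all attained value pairs $(p,q)$ yields a BFCP $\Phi_R^\alpha$ of $S$: the pieces are connected by construction, disjoint since they come from disjoint equivalence classes and from the component decomposition, and they cover $S$. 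By construction, on each region both rewards are constant, and it is the smallest (coarsest) partition with this property because any coarser partition would have to merge two regions on which the reward pair differs, contradicting the defining property — here I would argue that any BFCP with the stated property refines $\Phi_R^\alpha$, since each of its regions is connected and lies inside a single $\sim$-class, hence inside a single connected component of that class.

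The one subtlety worth flagging is the same one that appeared in \lemaref{lema:perception-BFCP}: "smallest" is meant in the sense that its regions are the maximal connected Borel sets on which the reward pair is constant, i.e. the connected components of the $\sim$-classes, and one must check these components are themselves Borel. That is where finiteness is doing the real work — because each $\sim$-class is already a finite union of regions of a BFCP, it has only finitely many connected components and each component is obtained by a finite Boolean combination of the Borel regions, so staying within $\mathcal{B}(S)$ is automatic. I expect this Borel-measurability bookkeeping (rather than the combinatorial construction, which is routine) to be the only point needing care, and it is handled exactly as in the proof of \lemaref{lema:perception-BFCP}. I would close by noting that the resulting $\Phi_R^\alpha$ depends on $\alpha$ only through $r_A({}\cdot{},\alpha)$, so we obtain one such BFCP for each of the finitely many joint actions $\alpha \in A$, as claimed.
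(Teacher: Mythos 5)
Your proof is correct and takes essentially the same route as the paper, whose own proof simply defers to ``a similar argument to that in the proof of \lemaref{lema:perception-BFCP}''; you have spelled out exactly that argument, using \aspref{asp:continuous-PWC}(iii), the refinement operation $\Phi_1+\Phi_2$, and the decomposition of the finitely many reward level sets into their (finitely many, hence Borel) connected components. One minor point in your favour: by refining the two separate constant-BFCPs for $r_A({}\cdot{},\alpha)$ and $r_S$ you directly obtain that both rewards are individually constant on each region, whereas the paper's appeal to the B-PWC property of the sum $r_A({}\cdot{},\alpha)+r_S({}\cdot{})$ alone would not literally yield the stated conclusion.
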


\begin{proof}
    For any $\alpha \in A$, since $r_A({}\cdot{},\alpha)+r_S({}\cdot{}) : S \rightarrow \mathbb{R}$ is B-PWC
     by \aspref{asp:continuous-PWC}, we can show that $\Phi_R^{\alpha}$ is a BFCP of $S$ by a similar argument to that in the proof of \lemaref{lema:perception-BFCP}.
\end{proof}
%
 %
%
%
\noindent
Using \aspref{asp:continuous-PWC}, we show that, given any joint action $\alpha$, the perception BFCP $\Phi_P$ can be refined into a new BFCP, such that the states in each region of this BFCP all reach, under the transition function of $\sem{\csg}$, the same regions of the image of $\Phi_P$ under the transition function.
This result, {\revise referred to as \emph{reachability consistency}},  will be used for the existence of the value of $\sem{\csg}$ and in our algorithms. 

\begin{lema}[Preimage BFCP]\label{lema:reachability-consistency}
For each $\alpha \in A$, there exists a refinement BFCP of $\Phi_P$, denoted $\Phi_P^\alpha$,  such that, for each $\phi \in \Phi_P^\alpha$ and $\phi' \in \Phi_P$, if $\delta(s, \alpha)$ is defined for $s \in \phi$, then there exists $p_{\alpha}(\phi, \phi') \in [0,1]$ such that:
\begin{enumerate}
\item
either $\delta(s, \alpha)(s') = p_{\alpha}(\phi, \phi') = 0$ for all $s \in \phi$ and $s' \in \phi'$; 
\item
or $(i)$ if $s, \tilde{s} \in \phi$, then there exist unique states $s', \tilde{s}' \in S$ such that 
$\{s'\} = \Theta_s^{\alpha} \cap \phi'$, $\{\tilde{s}'\} = \Theta_{\tilde{s}}^{\alpha} \cap \phi'$ and $\delta(s, \alpha)(s') = \delta(\tilde{s}, \alpha)(\tilde{s}') = p_{\alpha}(\phi, \phi') > 0$, and $(ii)$ there exists a bimeasurable, BFCP invertible function $q_{\alpha} : \phi \to \phi'$ such that $\{q_{\alpha}(s)\} = \Theta_s^{\alpha}  \cap \phi'$ and $\delta(s, \alpha)(q_\alpha(s)) = p_{\alpha}(\phi, \phi')$ for all $s \in \phi$. 
\end{enumerate}
\end{lema}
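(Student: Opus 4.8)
The plan is to construct $\Phi_P^\alpha$ as a common refinement of three BFCPs: the perception BFCP $\Phi_P$ itself, the reward-irrelevant decomposition induced by $\delta_E(\cdot,\alpha)$, and — crucially — a pre-image BFCP witnessing the measurability of the environment transition. First I would fix $\alpha \in A$ and restrict attention to those states where $\delta(s,\alpha)$ is defined (i.e.\ $a_i \in \Delta_i(s_i)$ when $\Delta_i(s_i)\ne\emptyset$ and $a_i=\bot$ otherwise); on the remaining states the partition can be left arbitrary. For a state $s=(s_1,s_2,s_E)$ with $s_i=(\loc_i,\per_i)$, recall from \defiref{semantics-def} that the successors $s'=(s_1',s_2',s_E')\in\Theta_s^\alpha$ have $s_E'=\delta_E(s_E,\alpha)$ fixed deterministically, $\loc_i'$ ranging over the finite support of $\delta_i(s_i,\alpha)$, and $\per_i'=\obs_i(\loc_1',\loc_2',s_E')$ determined by percept compatibility. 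So the only continuous ``motion'' is $s_E \mapsto \delta_E(s_E,\alpha)$, and all the combinatorial structure (which $\loc_i'$, with what probability $\delta_1(s_1,\alpha)(\loc_1')\,\delta_2(s_2,\alpha)(\loc_2')$) depends only on the discrete part $(s_1,s_2)$, which is already constant on each $\phi\in\Phi_P$.

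Next I would build the refinement. By \lemaref{lema:perception-BFCP}, $\Phi_P$ is a BFCP of $S$ whose regions are indexed by $(s_1,s_2,\phi_E)$ with $\phi_E$ a region of $S_E$. By \aspref{asp:continuous-PWC}(\ref{itm:ass1}), $\delta_E(\cdot,\alpha)$ is BFCP invertible, so taking $\Phi_P$ projected onto its $S_E$-components as a target BFCP of $S_E$, there is a pre-image BFCP $\Psi_E$ of $S_E$ such that each region of $\Psi_E$ maps under $\delta_E(\cdot,\alpha)$ entirely inside a single region of the target. Pulling $\Psi_E$ back up to $S$ (pairing each $S_E$-region with the finitely many discrete states) and intersecting with $\Phi_P$ gives a BFCP $\Phi_P^\alpha$ refining $\Phi_P$; since $\delta_E(\cdot,\alpha)$ is also \emph{bimeasurable}, the image of each region of $\Phi_P^\alpha$ under the map $s\mapsto$ (its unique successor landing in a given $\phi'$) is again Borel. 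Now fix $\phi\in\Phi_P^\alpha$ and $\phi'\in\Phi_P$. On $\phi$ the discrete data $(s_1,s_2)$ is constant, and $\delta_E(\cdot,\alpha)$ carries the $S_E$-part of $\phi$ into the $S_E$-part of a single $\Phi_P$-region; combined with the fact that percepts are then forced, for each choice of $(\loc_1',\loc_2')$ in the (fixed) support there is at most one successor region of $\Phi_P$ hit, and whether that region equals $\phi'$ is a property of $\phi$ alone, not of the individual $s\in\phi$. If $\phi'$ is never hit, set $p_\alpha(\phi,\phi')=0$, giving case~1. Otherwise exactly one $(\loc_1',\loc_2')$ gives successors in $\phi'$ (different discrete local-state pairs land in $\Phi_P$-regions with different discrete components, hence are disjoint), so $\Theta_s^\alpha\cap\phi'$ is a singleton for every $s\in\phi$; set $p_\alpha(\phi,\phi') = \delta_1(s_1,\alpha)(\loc_1')\,\delta_2(s_2,\alpha)(\loc_2') > 0$, which is independent of $s$, and define $q_\alpha:\phi\to\phi'$ by $q_\alpha(s)=$ the unique element of $\Theta_s^\alpha\cap\phi'$. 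Concretely $q_\alpha$ acts as $(\loc_1,\per_1,\loc_2,\per_2,s_E)\mapsto(\loc_1',\per_1',\loc_2',\per_2',\delta_E(s_E,\alpha))$ with the primed local states fixed and the primed percepts determined by $\obs_i$, so $q_\alpha$ is essentially $\delta_E(\cdot,\alpha)$ up to relabelling of the (constant) discrete coordinates; bimeasurability and BFCP invertibility of $q_\alpha$ then follow directly from the corresponding properties of $\delta_E(\cdot,\alpha)$ in \aspref{asp:continuous-PWC}(\ref{itm:ass1}), establishing case~2.

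The main obstacle I anticipate is the bookkeeping needed to justify that $\Theta_s^\alpha\cap\phi'$ is genuinely a singleton and that the ``which successor region'' map is constant on $\phi$ — one must argue carefully that two distinct local-state targets $(\loc_1',\loc_2')\ne(\tilde\loc_1',\tilde\loc_2')$ always produce successors in \emph{distinct} $\Phi_P$-regions (this uses that $\Phi_P$-regions with different discrete parts are disjoint, which is built into \lemaref{lema:perception-BFCP}), and that the percept-compatibility constraint $\per_i'=\obs_i(\loc_1',\loc_2',s_E')$ does not cause two $s_E$-values in $\phi$ to map into different $\Phi_P$-regions — this is exactly what the pre-image refinement $\Psi_E$ via BFCP invertibility of $\delta_E(\cdot,\alpha)$ buys us, but one must be careful that the target BFCP of $S_E$ used to define $\Psi_E$ already incorporates the perception partition, so that landing in one $S_E$-region of the target really does pin down the percepts. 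A secondary subtlety is verifying the BFCP invertibility of $q_\alpha$ rather than just its measurability, since the definition of BFCP invertible quantifies over \emph{all} BFCPs of the codomain $\phi'$; here I would note that any BFCP of $\phi'$ restricts to a BFCP of the $S_E$-slice of $\phi'$, pull it back through $\delta_E(\cdot,\alpha)$ using \aspref{asp:continuous-PWC}(\ref{itm:ass1}), and re-attach the constant discrete coordinates.
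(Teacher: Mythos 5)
Your proposal is correct and follows essentially the same route as the paper's proof: both refine $\Phi_P$ using the BFCP invertibility of $\delta_E(\cdot,\alpha)$ so that each refined region maps into a single $\Phi_P$-region, exploit the constancy of the discrete components $(s_1,s_2)$ on $\Phi_P$-regions (\lemaref{lema:perception-BFCP}) to make $p_\alpha(\phi,\phi')$ well defined, and derive the bimeasurability and BFCP invertibility of $q_\alpha$ directly from \aspref{asp:continuous-PWC}(\ref{itm:ass1}). The only cosmetic difference is that you build one global pre-image refinement of the $S_E$-slices while the paper refines $\phi$ separately for each target $\phi'$ and then takes the sum $\sum_{\phi'\in\Phi_P}\Phi'(\phi,\phi')$; these yield the same partition, and your explicit singleton and $q_\alpha$ construction is, if anything, more detailed than the paper's.
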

\begin{proof}
We compute the refinement $\Phi_P^\alpha$ of $\Phi_P$ by dividing each $\phi$ of $\Phi_P$ such that the required (reachability consistency) property  holds. Now, for any $\alpha\in A$ and $\phi \in \Phi_P$, by \lemaref{lema:perception-BFCP}, all states in $\phi$ have the same agents' states, say $s_1$ and $s_2$. To aid the proof, for each $\phi' \in \Phi_P$, we will construct a BFCP of $\phi$ based on $\phi'$, denoted $\Phi'(\phi, \phi')$, such that the reachability consistency to the region $\phi'$ holds in each region of $\Phi'(\phi, \phi')$. If $\delta(s, \alpha)$ is not defined for $s \in \phi$, we do not divide $\phi$ and
let $\Phi'(\phi, \phi') = \{\phi\}$ for all $\phi' \in \Phi_P$ and the reachability consistency to $\phi'$ is preserved. 

It remains to consider the case when $\delta(s, \alpha)$ is defined. Considering any $\phi' \in \Phi_P$, by \lemaref{lema:perception-BFCP} there exists agent states $s_1' = (\loc_1', \per'_1)$ and $s_2' = (\loc_2', \per'_2)$ such that if $(s_1'',s_2'',s_E'') \in \phi'$ then $s_1''=s_1'$ and $s_2''=s_2'$. We have the following two cases.
\begin{itemize}
\item
If $ \{ (s_1', s_2', \delta_E(s_E, \alpha)) \in S \mid (s_1, s_2, s_E) \in \phi \} \cap \phi' = \emptyset$, $\delta_1(s_1, \alpha)(\loc_1') = 0$ or $\delta_2(s_2, \alpha)(\loc_2') = 0$, then we do not divide $\phi$ and let $\Phi'(\phi, \phi') = \{\phi\}$ and we have $\delta(s, \alpha)(s') = p_{\alpha}(\phi, \phi') = 0$ for all $s \in \phi$ and $s' \in \phi'$. 
\item
If $( \cup_{s \in \phi} \Theta_s^{\alpha} ) \cap \phi'$ is non-empty, then since $\delta_E({}\cdot{},\alpha) : S_E \rightarrow S_E$ is BFCP invertible using \aspref{asp:continuous-PWC} and $\phi'$ is a Borel measurable region, there exists a BFCP $\Phi'(\phi, \phi')$ of $\phi$ such that for each $\phi_1 \in \Phi'(\phi, \phi')$:
\begin{itemize}
\item
either $\delta(s, \alpha)(s') = p_{\alpha}(\phi_1, \phi') = 0$ for all $s \in \phi_1$ and $s' \in \phi'$;
\item
or for $s, \tilde{s} \in \phi_1$ there exist unique states $s', \tilde{s}' \in S$ such that $s' = \Theta_s^{\alpha} \cap \phi'$, $\tilde{s}' = \Theta_{\tilde{s}}^{\alpha} \cap \phi'$ and $\delta(s, \alpha)(s') = \delta(\tilde{s}, \alpha)(\tilde{s}') = p_{\alpha}(\phi_1, \phi') > 0$.
\end{itemize}
It remains to show that the bimeasurable, BFCP invertible function $q_{\alpha}$ of $2. (ii)$ exists, which follows from the the fact that $\delta_E({}\cdot{},\alpha) : S_E \rightarrow S_E$ is bimeasurable and BFCP invertible.
\end{itemize}
Finally, we divide $\phi$ into a BFCP $\sum_{\phi' \in \Phi_P} \Phi'(\phi, \phi')$, and therefore each region of this BFCP has the required reachability consistency.
\end{proof}

\begin{examp}\label{eg4-eg}
Returning to \egref{eg1-eg},  we now give the perception BFCPs for the two implementations of the classifier $f_{\mathcal{R}}$ proposed in \egref{eg3-eg}. In each case the perception BFCP is of the form $\Phi_P = \Loc_1 \times \Loc_2 \times \Phi_E$, where $\Phi_E$ is a BFCP for the environment state space and the perception BFCP is also the reward BCFP $\Phi_R^{\alpha}$ for $\alpha \in A$. In this example, all pairs of local states correspond to the same classifier for both agents.

For the first implementation of $f_{\mathcal{R}}$, which employs a linear regression model, the  BFCP $\Phi_E$ for the environment state space is given by:
\begin{equation*}
 \left\{ \{ (x, y) \in \mathcal{R} \mid (i < x \leq i+1) \wedge (j < y \leq j+1) \} \mid i,j \in \{0, 1, 2, 3 \} \right\}^2 
\end{equation*}
as shown in Fig. \ref{fig:vehicle-parking-bfcps} (left). 
{\revise For the second implementation, the BFCP $\Phi_E$ can be found by computing the (exact) preimage of the feed-forward NN classifier of \egref{eg3-eg} using~\cite{KM-FF:20}, and is shown in Fig.~\ref{fig:vehicle-parking-bfcps} (right). Since an environment state represents the position of each agent, in each case
the perception BFCP is two copies, one for each agent, of how the corresponding NN classifier divides the bounded area into 16 abstract grid cells.
Each abstract grid cell in Fig.~\ref{fig:vehicle-parking-bfcps} (right) is a union of polytopes and is indicated by a different colour.} 
%
\hfill$\blacksquare$
\end{examp}

\input{figure/vehicle-bfcps}

\section{Values of zero-sum NS-CSGs}\label{values-sect}

We now proceed by establishing the value of an NS-CSG $\csg$ with respect to a {\revise discounted accummulated reward}  objective $Y$, i.e., for a reward structure $r$ and discount factor $\beta$. We prove the existence of this value, which is a fixed point of a minimax operator. Using Banach's fixed-point theorem, a sequence of bounded, Borel measurable functions converging to this value is constructed.


Given a state $s$ and (strategy) profile $\sigma=(\sigma_1,\sigma_2)$ of $\sem{\csg}$, we denote by $\mathbb{E}_s^{\sigma}[Y]$ the expected value of the objective $Y$ when starting from state $s$, given by \eqref{eq:reward-over-path}. The functions  $\underline{V},\overline{V} : S \rightarrow \mathbb{R}$, where $s \in S$:
\begin{eqnarray*}
      \underline{V}(s) & \coloneqq & \sup\nolimits_{\sigma_1 \in \Sigma_1}\inf\nolimits_{\sigma_2 \in \Sigma_2} \mathbb{E}_s^{\sigma_1,\sigma_2}[Y] \\
      \overline{V}(s) & \coloneqq & \inf\nolimits_{\sigma_2 \in \Sigma_2} \sup\nolimits_{\sigma_1 \in \Sigma_1} \mathbb{E}_s^{\sigma_1,\sigma_2}[Y]
\end{eqnarray*}
are called the \emph{lower value} and \emph{upper value} of $Y$, respectively. 

\begin{defi}[Value function]\label{defi:value_function}
If $\underline{V}(s)=\overline{V}(s)$ for all $s \in S$, then $\sem{\csg}$ is determined with respect to the objective $Y$ and the common function is called the \emph{value} of $\sem{\csg}$, denoted by $V^\star$, with respect to $Y$.
\end{defi}
%
%
We next introduce the spaces of feasible state-action pairs and state-action-distribution tuples, and present properties of these spaces. More precisely, for $1 \leq i \leq 2$, we let:
\begin{align*}
\Xi_i & \coloneqq \{(s,a_i) \in S {\times} A_i \mid  a_i \in A_i(s) \} \\
\Lambda_i &    \coloneqq \{(s,u_i) \in S {\times} \mathbb{P}(A_i) \mid u_i \in \mathbb{P}(A_i(s)) \} \\
\Xi_{12}  &   \coloneqq \{(s,(a_1,a_2)) \in S {\times} (A_1 {\times} A_2) \mid a_1 \in A_1(s) \wedge  a_2 \in A_2(s)\} \\
\Lambda_{12}   &  \coloneqq \{(s,(u_1,u_2)) \in S {\times} (\mathbb{P}(A_1) {\times} \mathbb{P}(A_2)) \mid u_1 \in \mathbb{P}(A_1(s)) \wedge u_2 \in \mathbb{P}(A_2(s))\} \, .
\end{align*}    
%

\begin{lema}[Borel sets]\label{eq:borel-space-xi-lambda}
For $1 \leq i\leq 2$, the sets $\Xi_i$ and $\Lambda_i$ are Borel sets of $S \times A_i$ and $S \times \mathbb{P}(A_i)$, respectively. Furthermore, the sets $\Xi_{12}$ and $\Lambda_{12}$ are Borel sets of $S \times (A_1 \times A_2)$ and $S \times (\mathbb{P}(A_1) \times \mathbb{P}(A_2))$, respectively.
\end{lema}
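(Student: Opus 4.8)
The plan is to show each of the four sets is Borel by writing it as a finite union, over the finitely many ``discrete'' choices (the agent states $s_1,s_2$ and actions $a_i$), of sets that are manifestly Borel because they are built from $S$ (which is Borel by \lemaref{lema:borel-space-S-A}), the finite sets $A_i$, and the Borel-measurable structure of $\mathbb{P}(A_i)$. The key observation is that the map $s \mapsto A_i(s)$ takes only finitely many values (subsets of the finite set $A_i$), and each level set $\{ s \in S \mid A_i(s) = B \}$ for $B \subseteq A_i$ is Borel: indeed $A_i(s)$ is determined by $\Delta_i(s_i)$, which depends only on the agent state $s_i$, and by \lemaref{lema:perception-BFCP} the collection of states sharing a given agent state is a (finite) union of regions of the perception BFCP $\Phi_P$, hence Borel. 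So I would first record this fact as a short preliminary step.

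\textbf{The sets $\Xi_i$.} For fixed $i$, write $\Xi_i = \bigcup_{a_i \in A_i} \big( \{ s \in S \mid a_i \in A_i(s) \} \times \{ a_i \} \big)$. Each slice $\{ s \in S \mid a_i \in A_i(s) \}$ is a finite union of those level sets $\{ s \mid A_i(s) = B \}$ with $a_i \in B$, hence Borel in $S$; and $\{ a_i \}$ is Borel in the finite (discrete) space $A_i$. Since $A_i$ is finite, the union is finite, so $\Xi_i \in \mathcal{B}(S \times A_i)$.

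\textbf{The sets $\Lambda_i$.} Here I would use that, for a fixed $B \subseteq A_i$, the set $\mathbb{P}(B) = \{ u_i \in \mathbb{P}(A_i) \mid u_i(B) = 1 \}$ is a Borel (indeed closed, in the weak topology on the finite-support simplex $\mathbb{P}(A_i)$) subset of $\mathbb{P}(A_i)$, since $u_i \mapsto u_i(B)$ is measurable (it is even continuous as $A_i$ is finite). Then $\Lambda_i = \bigcup_{B \subseteq A_i} \big( \{ s \in S \mid A_i(s) = B \} \times \mathbb{P}(B) \big)$ is a finite union of products of Borel sets, hence Borel in $S \times \mathbb{P}(A_i)$. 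The sets $\Xi_{12}$ and $\Lambda_{12}$ follow the same recipe, now indexing over pairs: $\Xi_{12} = \bigcup_{(a_1,a_2) \in A_1 \times A_2} \big( \{ s \mid a_1 \in A_1(s) \wedge a_2 \in A_2(s) \} \times \{(a_1,a_2)\} \big)$ and $\Lambda_{12} = \bigcup_{(B_1,B_2)} \big( \{ s \mid A_1(s) = B_1 \wedge A_2(s) = B_2 \} \times (\mathbb{P}(B_1) \times \mathbb{P}(B_2)) \big)$, each a finite union of Borel products; the conjunction constraints define Borel subsets of $S$ by intersecting the Borel level sets for the two agents.

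\textbf{Main obstacle.} The only non-routine point is the Borel measurability of $\{ s \in S \mid A_i(s) = B \}$, i.e. of the availability map; everything else is elementary bookkeeping with finite unions and products of Borel sets. I would handle this by appealing to \lemaref{lema:perception-BFCP}: $A_i(s)$ depends on $s$ only through the agent state $s_i$, and $\Phi_P$ refines $S$ into finitely many Borel regions each with a constant agent state, so each level set is a finite union of regions of $\Phi_P$ and therefore Borel. A minor secondary point worth stating explicitly is that $\mathbb{P}(A_i)$, with $A_i$ finite, is a standard Borel space (a simplex in $\mathbb{R}^{|A_i|}$) on which $u_i \mapsto u_i(B)$ is continuous, so the constraint $u_i(B) = 1$ cuts out a Borel (closed) set.
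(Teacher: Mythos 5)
Your proof is correct and takes essentially the same approach as the paper's: both decompose $\Xi_i$, $\Lambda_i$, $\Xi_{12}$ and $\Lambda_{12}$ as finite unions over the possible values of the availability function and reduce everything to finite unions of products of Borel sets (with $\mathbb{P}(B)$ Borel in $\mathbb{P}(A_i)$ for the $\Lambda$ cases). The only difference is the justification that the level sets $\{s \in S \mid A_i(s)=B\}$ are Borel: the paper writes them directly as $\{s_1 \mid \Delta_1(s_1)=B\}\times S_2\times S_E$ intersected with $S$, using finiteness of $S_1$, \lemaref{lema:borel-space-S-A} and the product theorem of Parthasarathy, whereas you route through the perception BFCP of \lemaref{lema:perception-BFCP}; both are valid.
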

\begin{proof}
We first consider $\Xi_i$ and $\Lambda_i$ for $i=1$ (the case for $i=2$ follows similarly). Since $A_1$ is finite, the sets $\Xi_1$ and $\Lambda_1$ can be rearranged as:
\begin{eqnarray*}
    \Xi_1  & = & \mbox{$\bigcup_{\hat{A}_1 \subseteq A_1}$} \big( \{s_1 \mid \Delta_1(s_1)=\hat{A}_1 \} \times S_{2} \times S_E  \times  \hat{A}_1 \big) \cap (S \times A_1) \\
    \Lambda_1  & = & \mbox{$\bigcup_{\hat{A}_1 \subseteq A_1}$} \big( \{s_1 \mid \Delta_1(s_1)=\hat{A}_1 \} \times S_2 \times S_E \times \mathbb{P}(\hat{A}_1) \big) \cap (S \times \mathbb{P}(A_1)) \, .
\end{eqnarray*}
Since $\hat{A}_1$ is a subset of the finite set $A_1$, the sets $\hat{A}_1$ and $\mathbb{P}(\hat{A}_1)$ are Borel sets of $A_1$ and $\mathbb{P}(A_1)$, respectively. Since $S_1$ is a finite set, for any $\hat{A}_1 \subseteq A_1$, the set $\{s_1 \mid \Delta_1(s_1)=\hat{A}_1 \}$ is a Borel set of $S_1$. Since $S_{2}$ and $S_E$ are both Borel sets by \lemaref{lema:borel-space-S-A}, the result follows by Theorem 1.10 \cite[Chapter 1]{KRP:67}. Using similar reasoning, it follows that $\Xi_{12}$ and $\Lambda_{12}$ are also Borel sets of the respective spaces.
\end{proof}

\begin{pro}[Stochastic kernel transition function]\label{prop:stochastic-kernel-trans}
The probabilistic transition function $\delta$ of $\sem{\csg}$ is a stochastic kernel.
\end{pro}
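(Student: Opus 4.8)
The plan is to show that $\delta$ satisfies the two-part characterisation of a stochastic kernel given in Section~\ref{background-sect}: namely, that $\delta(\,\cdot\mid s,\alpha)\in\mathbb{P}(S)$ for every feasible pair $(s,\alpha)$, and that $s\mapsto\delta(B\mid s,\alpha)$ is Borel measurable for every fixed $B\in\mathcal{B}(S)$ and $\alpha\in A$. The first part is essentially immediate from \defiref{semantics-def}: for a fixed feasible $(s,\alpha)$, since $\delta_E$ is deterministic and each $\delta_i(s_i,\alpha)$ is a probability measure on the finite set $\Loc_i$, the successor set $\Theta_s^\alpha$ is finite and $\delta(s,\alpha)$ assigns mass $\delta_1(s_1,\alpha)(\loc_1')\,\delta_2(s_2,\alpha)(\loc_2')$ to each compatible successor; summing over $\loc_1'\in\Loc_1$ and $\loc_2'\in\Loc_2$ and using that $\obs_i$ forces percept compatibility gives total mass $1$. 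So $\delta(s,\alpha)$ is a (finitely-supported) probability measure on $S$.

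The substantive part is measurability of $s\mapsto\delta(B\mid s,\alpha)$ for fixed $\alpha$. First I would reduce to a convenient generating class: it suffices to prove measurability of $s\mapsto\delta(B\mid s,\alpha)$ for $B$ ranging over a $\pi$-system generating $\mathcal{B}(S)$, and then invoke a Dynkin/monotone-class argument (the collection of $B$ for which the map is measurable is a $\lambda$-system). A natural choice of $\pi$-system is sets of the form $B = \{s_1'\}\times\{s_2'\}\times B_E$ with $B_E\in\mathcal{B}(S_E)$, intersected with $S$; by percept compatibility only the environment coordinate really varies. For such $B$,
\[
\delta(B\mid s,\alpha) \;=\; \delta_1(s_1,\alpha)(\loc_1')\,\delta_2(s_2,\alpha)(\loc_2')\,\cdot\,\mathbbm{1}\!\left[\delta_E(s_E,\alpha)\in B_E\right]\,\cdot\,\mathbbm{1}\!\left[\text{percepts match}\right].
\]
The factors $\delta_i(s_i,\alpha)(\loc_i')$ depend only on the finite component $s_i$, hence are (trivially) Borel measurable in $s$; the indicator on percept matching is Borel measurable because $\obs_i$ is Borel measurable (indeed B-PWC) by \aspref{asp:continuous-PWC}(\ref{itm:ass2}); and the indicator $\mathbbm{1}[\delta_E(s_E,\alpha)\in B_E]$ is Borel measurable in $s_E$ precisely because $\delta_E(\,\cdot\,,\alpha)$ is Borel measurable by \aspref{asp:continuous-PWC}(\ref{itm:ass1}). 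A product of finitely many bounded Borel measurable functions is Borel measurable, so the map is measurable for this $\pi$-system, and the Dynkin argument extends it to all of $\mathcal{B}(S)$.

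One technical wrinkle I would handle carefully is the domain: $\delta(s,\alpha)$ is only \emph{defined} for feasible $(s,\alpha)$, i.e.\ when $a_i\in\Delta_i(s_i)$ (or $a_i=\bot$ with $\Delta_i(s_i)=\emptyset$). For a fixed $\alpha$, the set of states $s$ for which $\alpha$ is feasible is $\{s_1\mid a_1\in A_1(s_1)\}\times\{s_2\mid a_2\in A_2(s_2)\}\times S_E$ intersected with $S$, which is a Borel subset of $S$ since $S_1,S_2$ are finite and $S$ is Borel (\lemaref{lema:borel-space-S-A}); restricting attention to this Borel set (or extending $\delta(\cdot\mid s,\alpha)$ arbitrarily, e.g.\ to a fixed point mass, outside it) keeps everything measurable. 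The main obstacle is therefore not any single deep fact but assembling the right reduction: choosing a $\pi$-system adapted to the product-with-finite-components structure of $S$ so that the Dynkin argument goes through cleanly, and making sure the bimeasurability/Borel-measurability hypotheses on $\delta_E$ and $\obs_i$ from \aspref{asp:continuous-PWC} are invoked exactly where the environment coordinate and the percept-compatibility constraint enter. Bimeasurability of $\delta_E$ (as opposed to mere measurability) is what the remark after \aspref{asp:continuous-PWC} flags as needed here; I would keep an eye on whether the argument genuinely uses image-measurability of $\delta_E$ or only pre-image-measurability, and phrase the proof so that the stated hypothesis suffices.
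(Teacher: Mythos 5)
Your proposal is correct, but it takes a genuinely different route from the paper. The paper fixes an arbitrary Borel set $B$ and shows directly that the super-level set $\{(s,\alpha)\in\Xi_{12}\mid \delta(s,\alpha)(B)\ge c\}$ is Borel, by invoking the pre-image BFCP $\Phi_P^\alpha$ of \lemaref{lema:reachability-consistency}: on each region $\phi$ the transition decomposes into finitely many bimeasurable maps $q_\alpha(\cdot;\phi,\phi')$ with constant probabilities $p_\alpha(\phi,\phi')$, the images $\hat{q}_\alpha(\phi,\phi')$ are Borel \emph{because $q_\alpha$ is bimeasurable}, and the states reaching $B$ with mass at least $c$ are assembled by enumerating the finitely many subsets of $\{p_\alpha(\phi,\phi')\}$ summing to at least $c$. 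You instead reduce to the generating class of rectangles $(\{s_1'\}\times\{s_2'\}\times B_E)\cap S$ (a Dynkin-class argument, though since $S_1\times S_2$ is finite one can even skip Dynkin and just sum over the finitely many agent-state sections of $B$), on which $\delta(B\mid s,\alpha)$ factors as $\delta_1(s_1,\alpha)(\loc_1')\,\delta_2(s_2,\alpha)(\loc_2')$ times indicators of $\delta_E(\cdot,\alpha)^{-1}(B_E)$ and of the percept-compatibility pre-image; each factor is measurable using only \aspref{asp:continuous-PWC}(ii) and the \emph{pre-image} measurability of $\delta_E(\cdot,\alpha)$. Your closing observation is accurate and worth noting: your argument never uses image-measurability of $\delta_E$, so for this proposition alone bimeasurability is not needed (it is still genuinely used in \lemaref{lema:reachability-consistency} and in the proof of \thomref{thom:B-PWC-closure-VI}, which is presumably why the paper routes the proof through that machinery — it sets up the objects $q_\alpha$ and $p_\alpha(\phi,\phi')$ reused later, whereas your argument is more elementary and self-contained but does not produce them).
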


\begin{proof}
From \defiref{semantics-def}, it follows that, for any $(s,\alpha) \in \Xi_{12}$, we have $\delta(s,\alpha)({}\cdot{}) \in \mathbb{P}(S)$. We show that, if $B \in \mathcal{B}(S)$, then $\delta({}\cdot{},{}\cdot{})(B) : (S \times A) \rightarrow \mathbb{R}$ is Borel measurable on $\Xi_{12}$. More precisely, we prove that, for any $c \in \mathbb{R}$, the preimage of the Borel set $[c,\infty)$ of $\mathbb{R}$ under $\delta({}\cdot{},{}\cdot{})(B)$ which is given by:
\[
  \delta^{-1}([c,\infty))(B)=\{ (s, \alpha)\in \Xi_{12} \mid \delta(s, \alpha)(B) \ge c \} 
\]
is an element of $\mathcal{B}(\Xi_{12})$. If $c >1$, then $\delta^{-1}([c,\infty))(B)=\emptyset \in \mathcal{B}(\Xi_{12})$, and if $c\leq0$, then $\delta^{-1}([c,\infty))(B)=\Xi_{12} \in \mathcal{B}(\Xi_{12})$. 

Therefore, it remains to consider the case when $0 < c \leq1$. 
Consider any $\alpha \in A$ and let $\Phi_P^\alpha$ be the refinement of $\Phi_P$ of \lemaref{lema:reachability-consistency}. For each $\phi \in \Phi_P^\alpha$ and $\phi' \in \Phi_P$ such that $p_{\alpha}(\phi, \phi') > 0$, let $q_{\alpha}: \phi \to \phi'$ be the associated bimeasurable, BFCP invertible function from \lemaref{lema:reachability-consistency}. The image of $\phi$ under $q_{\alpha}$ into $\phi'$ is given by:
\[
    \hat{q}_{\alpha}(\phi, \phi') = \{s'\in \phi' \mid s' = q_{\alpha}(s) \wedge s \in \phi\} \, .
\]
By Lemmas \ref{lema:perception-BFCP} and \ref{lema:reachability-consistency}, both $\phi$ and $\phi'$ are Borel sets and $q_{\alpha}$ is bimeasurable, and therefore $\hat{q}_{\alpha}(\phi, \phi')$ is a Borel set. Next, since $q_{\alpha}$ is Borel measurable, the preimage of the Borel set $\hat{q}_{\alpha}(\phi, \phi') \cap B$ under $q_{\alpha}$ over the region $\phi$, which is given by:
\[
    \hat{q}_{\alpha}^{-1}(\phi, \hat{q}_{\alpha}(\phi, \phi') \cap B) = \{ s \in \phi \mid q_{\alpha}(s) \in \hat{q}_{\alpha}(\phi, \phi') \cap B\}
\]
is a Borel set. By combining this result with  \lemaref{lema:reachability-consistency}, each state in $\hat{q}_{\alpha}^{-1}(\phi, \hat{q}_{\alpha}(\phi, \phi') \cap B)$ under $\alpha$ transitions to $B$ with probability $p_{\alpha}(\phi, \phi')$. We denote the set of all transition probabilities from $\phi$ under $\alpha$ by $P_{\alpha}(\phi) = \{ p_{\alpha}(\phi, \phi') > 0 \mid \phi' \in \Phi_P \}$. Then, the collection of the subsets of $P_{\alpha}(\phi)$ for which the sum of their elements is greater or equal to $c$ is given by:
\[
    P_{\alpha}^{\ge c}(\phi) \coloneqq \big\{ P' \subseteq P_{\alpha}(\phi) \mid \mbox{$\sum\nolimits_{p' \in P'}$} p' \ge c\big\}
\]
and is finite. Now for each set $P' \in P_{\alpha}^{\ge c}(\phi)$, the states in the set:
\[
    O_{\alpha}(\phi, P') = \mbox{$\bigcap_{p_{\alpha}(\phi, \phi') \in P'}$} \hat{q}_{\alpha}^{-1}(\phi, \hat{q}_{\alpha}(\phi, \phi') \cap B)
\]
transition to $B$ under $\alpha$ with probability greater or equal to $c$ and $O_{\alpha}(\phi, P')$ is a Borel set as $P'$ is a finite set. Thus, the states in $\phi$ reaching $B$ under $\alpha$ with probability greater or equal to $c$ are given by:
\[
    O_{\alpha}(\phi) = \mbox{$\bigcup_{P' \in P_{\alpha}^{\ge c}(\phi)}$} O_{\alpha}(\phi, P')
\]
which is a Borel set since $P_{\alpha}^{\ge c}(\phi)$ is a finite set. Finally, we have:
\[
    \delta^{-1}([c,\infty))(B) = \mbox{$\bigcup_{\alpha \in A} \bigcup_{\phi \in \Phi_P^\alpha}$} \{(s, \alpha) \in \Xi_{12} \mid s \in O_{\alpha}(\phi)\}
\]
and therefore, combining Lemmas \ref{lema:reachability-consistency} and \ref{eq:borel-space-xi-lambda}, it follows that $\delta^{-1}([c,\infty))(B) \in \mathcal{B}(\Xi_{12})$  as required.
\end{proof}
\noindent
Before presenting properties of the value function, we introduce the following operator based on the classical Bellman equation. {\revise Recall that $r_A$ and $r_S$ are the action and state reward functions defined in Section~\ref{sub:zero-sum-ns-csgs}, respectively, and $\delta$ is the transition function of the NS-CSG (\defiref{semantics-def}). Further, we require that the set of successors of $s$ under $\alpha$,  denoted $\Theta_s^\alpha=\{s'  \mid \delta(s,\alpha)(s')>0\}$, is finite for all $s\in S$ and $\alpha\in A(s)$ (see Section~\ref{sub:semantics-ns-csg}).}

%
\begin{defi}[Minimax operator]\label{defi:minimax-operator}
Given a  bounded, Borel measurable real-valued function $V \in \mathbb{F}(S)$, the minimax operator $T : \mathbb{F}(S) \rightarrow \mathbb{F}(S)$ is defined, for any $s \in S$, by:
\[
  {\revise [TV](s)} \coloneqq \max_{u_1\in \mathbb{P}(A_1(s))}\min_{u_2\in \mathbb{P}(A_2(s))} \mbox{$\sum\limits_{a_1\in A_1(s)}$} \mbox{$\sum\limits_{a_2\in A_2(s)}$} Q(s,(a_1,a_2),V)u_1(a_1)u_2(a_2)
\]
where for any $\alpha \in A(s)$:
\[
   Q(s,\alpha,V) \coloneqq r_A(s,\alpha)+r_S(s)+ \beta \mbox{$\sum_{s'\in \Theta_s^\alpha}$} \delta(s,\alpha)(s')V(s') \, .
\]
\end{defi}
{\revise We use the standard notation $[TV]$ for $T(V)$ in the remainder of the paper to simplify the presentation.}
%
%

%
\begin{thom}[Value function]\label{thom:value-function}
If $\csg$ is an NS-CSG and $Y$ is a discounted zero-sum objective, then
\begin{enumerate}[(i)]
    \item\label{itm:value-existence} $\sem{\csg}$ is determined with respected to $Y$, i.e., $V^{\star}$ exists;
    \item\label{itm:fixed-point} $V^\star$ is the unique fixed point of the operator $T$;
    \item\label{itm:bounded-Borel} $V^\star$ is a bounded, Borel measurable function. 
\end{enumerate}
\end{thom}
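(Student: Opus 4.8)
The plan is to establish the three claims by combining a contraction-mapping argument with the measurability machinery developed in the preceding lemmas. The strategy is to show first that the minimax operator $T$ maps $\mathbb{F}(S)$ into itself (this is the main technical obstacle and the place where Assumption~\ref{asp:continuous-PWC} and Proposition~\ref{prop:stochastic-kernel-trans} are really used), then that $T$ is a $\beta$-contraction on the Banach space $(\mathbb{F}(S),\|\cdot\|)$, and finally to invoke the results of \cite{PRK-THS:81} to identify the unique fixed point of $T$ with the common value $V^\star$, thereby getting determinacy for free.

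\textbf{Step 1: $T$ is well-defined on $\mathbb{F}(S)$.} Given $V \in \mathbb{F}(S)$, I would first argue that $Q(\cdot,\alpha,V) : S \to \mathbb{R}$ is bounded and Borel measurable for each $\alpha \in A$. Boundedness is immediate from boundedness of $r_A$, $r_S$ and $V$ together with $\beta \in (0,1)$ and finiteness of each $\Theta_s^\alpha$. For measurability, the term $r_A(s,\alpha)+r_S(s)$ is B-PWC by Assumption~\ref{asp:continuous-PWC}(\ref{itm:ass3}), hence Borel measurable; the term $\sum_{s' \in \Theta_s^\alpha}\delta(s,\alpha)(s')V(s')$ equals the integral $\int_S V\,\mathrm{d}\delta(s,\alpha)$, which is Borel measurable in $s$ because $\delta$ is a stochastic kernel by Proposition~\ref{prop:stochastic-kernel-trans} and $V$ is bounded Borel measurable (a standard fact: integration against a stochastic kernel preserves bounded measurability, provable by the monotone class/simple-function approximation). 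Next I would pass from $Q$ to $[TV]$ via the minimax. Since for fixed $s$ the map $(u_1,u_2) \mapsto \sum_{a_1,a_2} Q(s,(a_1,a_2),V)u_1(a_1)u_2(a_2)$ is bilinear and continuous on the compact convex sets $\mathbb{P}(A_1(s)) \times \mathbb{P}(A_2(s))$, Sion's (or von Neumann's) minimax theorem gives that the $\max\min$ equals the $\min\max$ and the value is attained, so $[TV](s)$ is well-defined pointwise and bounded by $\|Q(\cdot,\cdot,V)\|$. The remaining point, measurability of $s \mapsto [TV](s)$, is the delicate one: I would handle it by noting that $\Xi_{12}$ and $\Lambda_{12}$ are Borel by Lemma~\ref{eq:borel-space-xi-lambda}, that the integrand is a Carathéodory-type function (Borel in $s$, continuous in the action/distribution variables), and then applying a measurable maximum/minimum (van der Vaart--Wellner / Brown--Purves style) selection theorem to conclude that the value of the matrix game, as a function of $s$, is Borel measurable. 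This is exactly the kind of argument used in \cite{PRK-THS:81}, and the finiteness of $A_i$ together with the Borel structure of $\Xi_i,\Lambda_i$ makes it go through.

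\textbf{Step 2: contraction and fixed point.} With $T : \mathbb{F}(S) \to \mathbb{F}(S)$ established, I would show $\|TV - TV'\| \le \beta\|V - V'\|$ for all $V,V' \in \mathbb{F}(S)$ by the routine estimate: for fixed $s$, the difference of the two matrix-game values is bounded by $\max_\alpha |Q(s,\alpha,V) - Q(s,\alpha,V')| \le \beta \max_\alpha \sum_{s'\in\Theta_s^\alpha}\delta(s,\alpha)(s')|V(s') - V'(s')| \le \beta\|V-V'\|$, using the standard fact that the value of a matrix game is $1$-Lipschitz in the entries with respect to the sup-norm. Since $(\mathbb{F}(S),\|\cdot\|)$ is a Banach space, Banach's fixed-point theorem yields a unique fixed point $V^\dagger \in \mathbb{F}(S)$ of $T$, and iterating $T$ from any starting point (e.g.\ the zero function) produces a sequence of bounded Borel measurable functions converging in sup-norm to $V^\dagger$; this already proves item~(\ref{itm:bounded-Borel}) for $V^\dagger$ and the convergent-sequence claim alluded to in the section introduction.

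\textbf{Step 3: identification with the value, and determinacy.} Finally I would invoke \cite[Theorem 2, Theorem 3]{PRK-THS:81}, which applies because the semantics $\sem{\csg}$ is an instance of their class of Borel-state CSGs: the state space $S$ is a Borel space (Lemma~\ref{lema:borel-space-S-A}), the action sets are finite, the transition function is a stochastic kernel (Proposition~\ref{prop:stochastic-kernel-trans}), and the one-step reward is bounded Borel measurable. Their results give that such games are determined for the discounted zero-sum objective, that the common value $V^\star$ exists and is bounded Borel measurable, and that $V^\star$ satisfies the minimax (Shapley) equation $V^\star = TV^\star$. By the uniqueness from Step~2, $V^\star = V^\dagger$, which simultaneously yields (\ref{itm:value-existence}), (\ref{itm:fixed-point}) and (\ref{itm:bounded-Borel}). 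I expect the main obstacle to be Step~1 --- specifically, verifying that the minimax collapses the measurable dependence on $s$ into a Borel measurable value function, and that the integral against $\delta$ is measurable --- whereas Steps~2 and~3 are essentially bookkeeping on top of the cited determinacy results.
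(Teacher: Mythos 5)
Your proposal is correct and follows essentially the same route as the paper: both proofs work by verifying that $\sem{\csg}$ is an instance of the Borel-model zero-sum CSGs of \cite{PRK-THS:81} --- via \lemaref{lema:borel-space-S-A}, \lemaref{eq:borel-space-xi-lambda}, \proref{prop:stochastic-kernel-trans} and the boundedness of $r_A+r_S$ from \aspref{asp:continuous-PWC} --- and then invoke that paper's determinacy and fixed-point theorems. The only difference is one of economy: where you re-derive in-house that $T$ preserves $\mathbb{F}(S)$ and is a $\beta$-contraction (your Steps 1--2), the paper simply cites \cite[Theorem 1, Lemmas 2--3]{PRK-THS:81} for the fixed-point and measurability claims, so your argument is a slightly more self-contained version of the same proof.
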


\begin{proof}
The proof follows through showing that $\sem{\csg}$ is an instance of a zero-sum stochastic game that satisfies the conditions of the Borel model presented in \cite{PRK-THS:81}.

From \lemaref{lema:borel-space-S-A}, we have that $A_1$, $A_2$ and $S$ are complete and separable metric spaces. By \lemaref{eq:borel-space-xi-lambda}, the spaces $\Xi_i$ and $\Lambda_i$ are Borel sets of $S \times A_i$ and $S \times \mathbb{P}(A_i)$ for $1 \leq i\leq 2$, respectively. By \proref{prop:stochastic-kernel-trans}, $\delta$ is a Borel stochastic kernel.
Furthermore, from \aspref{asp:continuous-PWC} we have that $r_A+r_S : (S \times A) \rightarrow \mathbb{R}$ is bounded, and therefore it follows that $\sem{\csg}$ with respect to the zero-sum objective $Y$ is an instance of a zero-sum stochastic game with Borel model and discounted payoffs introduced in \cite{PRK-THS:81}. Hence, (\ref{itm:value-existence}) follows from \cite[Theorems 2 and 3]{PRK-THS:81}, and (\ref{itm:fixed-point}) from the discounted case of \cite[Theorem 1]{PRK-THS:81}. 
Finally, for (\ref{itm:bounded-Borel}), since $\beta \in(0,1)$, we have that $V^\star$ is bounded, and therefore $V^\star$ is Borel measurable using \cite[Lemma 3]{PRK-THS:81}.
\end{proof}

\noindent
The following guarantees that value iteration (VI) converges to the value function.

\begin{pro}[Convergence]\label{prop:convergence-seq}
For any $V^0 \in \mathbb{F}(S)$, the sequence $( V^t )_{t \in \mathbb{N}}$, where 
{\revise $V^{t+1}=[TV^t]$,}
converges to $V^\star$. Moreover, each $V^t$ is bounded, Borel measurable.
\end{pro}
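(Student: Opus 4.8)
The plan is to recognise $T$ as a contraction on a Banach space, invoke Banach's fixed-point theorem, and then identify its fixed point with $V^\star$ using \thomref{thom:value-function}. First I would record that $(\mathbb{F}(S),\|\cdot\|)$ is complete: a $\|\cdot\|$-Cauchy sequence in $\mathbb{F}(S)$ converges uniformly, its limit is bounded, and a pointwise (a fortiori uniform) limit of Borel measurable functions is Borel measurable, so the limit again lies in $\mathbb{F}(S)$; hence $\mathbb{F}(S)$ is a Banach space.

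Next I would check that $T$ maps $\mathbb{F}(S)$ into itself, so that the iterates are well defined and, by induction from $V^0\in\mathbb{F}(S)$, all $V^t$ lie in $\mathbb{F}(S)$ (this gives the ``moreover'' clause). Fix $V\in\mathbb{F}(S)$ and $\alpha\in A$. Since $\delta(s,\alpha)$ is a probability measure supported on the finite set $\Theta_s^\alpha$, the quantity $\sum_{s'\in\Theta_s^\alpha}\delta(s,\alpha)(s')V(s')=\int_S V\,\mathrm{d}\delta(s,\alpha)$ is a convex combination of values of $V$, so $|Q(s,\alpha,V)|\le\|r_A\|+\|r_S\|+\beta\|V\|$; moreover $s\mapsto\int_S V\,\mathrm{d}\delta(s,\alpha)$ is Borel measurable because $\delta$ is a stochastic kernel (\proref{prop:stochastic-kernel-trans}) and $V$ is bounded Borel measurable, while $r_A(\cdot,\alpha)$ and $r_S$ are Borel measurable by \aspref{asp:continuous-PWC} (cf.\ \lemaref{lema:reward-BFCP}), so $Q(\cdot,\alpha,V)$ is bounded and Borel measurable. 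Finally $[TV](s)$ is the value of the finite zero-sum matrix game with payoff entries $Q(s,\alpha,V)$ for $\alpha\in A(s)$; partitioning $S$ into the finitely many Borel pieces on which the available-action profile $(A_1(s),A_2(s))$ is constant (possible since $S_1$ and $S_2$ are finite, so this profile depends on finitely many agent states), on each piece $[TV]$ is the composition of the (jointly continuous, indeed Lipschitz) matrix-game value map with the Borel map $s\mapsto(Q(s,\alpha,V))_{\alpha}$, hence Borel measurable, and $|[TV](s)|\le\|r_A\|+\|r_S\|+\beta\|V\|$. Thus $TV\in\mathbb{F}(S)$; this step also follows from \cite[Lemma 3]{PRK-THS:81}.

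Then I would show $T$ is a $\beta$-contraction. For $V,W\in\mathbb{F}(S)$, $s\in S$ and $\alpha\in A(s)$, $|Q(s,\alpha,V)-Q(s,\alpha,W)|=\beta\,|\sum_{s'\in\Theta_s^\alpha}\delta(s,\alpha)(s')(V(s')-W(s'))|\le\beta\|V-W\|$. Since for each mixed profile $(u_1,u_2)$ the coefficients $u_1(a_1)u_2(a_2)$ sum to $1$, the bilinear payoff appearing in the minimax changes by at most $\beta\|V-W\|$ uniformly in $(u_1,u_2)$, and since $|\max_{u_1}\min_{u_2}f-\max_{u_1}\min_{u_2}g|\le\sup|f-g|$ for any bounded $f,g$, we get $|[TV](s)-[TW](s)|\le\beta\|V-W\|$ for every $s$, i.e.\ $\|TV-TW\|\le\beta\|V-W\|$ with $\beta\in(0,1)$.

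With these ingredients, Banach's fixed-point theorem yields a unique fixed point $\hat V\in\mathbb{F}(S)$ of $T$ with $\|V^t-\hat V\|\to0$ for every $V^0\in\mathbb{F}(S)$. By \thomref{thom:value-function}(\ref{itm:fixed-point})--(\ref{itm:bounded-Borel}), $V^\star$ is a fixed point of $T$ lying in $\mathbb{F}(S)$, so $\hat V=V^\star$ and $V^t\to V^\star$, as claimed. The step I expect to be the main obstacle is the measurability-preservation claim in the second paragraph: showing that the minimax backup of a bounded Borel measurable function is again Borel measurable, which is exactly where the stochastic-kernel property of $\delta$ (itself resting on Lemmas \ref{lema:perception-BFCP} and \ref{lema:reachability-consistency}) and the continuity of the matrix-game value in its entries, together with the finiteness of the agent state sets, are all needed.
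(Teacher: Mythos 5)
Your proof is correct, and it reaches the same conclusion by a noticeably more self-contained route than the paper. The paper's own proof is essentially two citations: it invokes \cite[Lemma 2]{PRK-THS:81} to get that $T$ preserves boundedness and Borel measurability (given that $r_A+r_S$ is bounded), and then quotes the convergence fact $V^\star=\lim_{t\to\infty}V^t$ from the same reference, so all of the analytic work is outsourced to the Borel-model results of \cite{PRK-THS:81}. You instead reconstruct that machinery from scratch: completeness of $(\mathbb{F}(S),\|\cdot\|)$, the $\beta$-contraction estimate via $|\max_{u_1}\min_{u_2}f-\max_{u_1}\min_{u_2}g|\le\sup|f-g|$, and Banach's fixed-point theorem, with the fixed point identified as $V^\star$ through \thomref{thom:value-function}(ii)--(iii). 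Your treatment of the one genuinely delicate step --- measurability of $[TV]$ --- is a legitimate replacement for the cited lemma: partitioning $S$ into the finitely many Borel pieces on which the action profile is constant (valid since $S_1,S_2$ are finite) and composing the Lipschitz matrix-game value map with the Borel map $s\mapsto(Q(s,\alpha,V))_\alpha$, whose measurability rests on the stochastic-kernel property established in \proref{prop:stochastic-kernel-trans}, is exactly the right way to make this rigorous in the present setting. What your version buys is independence from the external reference and an explicit geometric convergence rate $\|V^t-V^\star\|\le\beta^t\|V^0-V^\star\|$; what the paper's version buys is brevity, at the cost of leaving the measurability-preservation argument implicit.
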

\begin{proof}
Since $r_A+r_S : (S \times A) \rightarrow \mathbb{R}$ is bounded, using \cite[Lemma 2]{PRK-THS:81} we have that, if $V^t$ is bounded, Borel measurable, then {\revise $[TV^t]$} is also bounded. The result then follows from the fact that $V^\star(s)=\lim_{t\to\infty}V^t(s)$ for all $s \in S$ if {\revise $V^{t+1}=[TV^t]$} for all $t \in \mathbb{N}$~\cite{PRK-THS:81}.
\end{proof}






\section{Value iteration}\label{vi-sect}

Despite the convergence result of \proref{prop:convergence-seq},
in practice there may not exist 
finite representations of general bounded Borel measurable functions $(V^t)_{t \in \mathbb{N}}$
due to the uncountable state space.
We now show how VI can be used to \emph{approximate} the values of $\sem{\csg}$, 
based on a sequence of {\revise Borel piecewise constant}
(B-PWC) functions (Section~\ref{background-sect}).


%
\subsection{B-PWC closure and convergence}
For NS-CSGs, we demonstrate that, under \aspref{asp:continuous-PWC}, 
a B-PWC representation of value functions is closed under the minimax operator and ensures convergence of value iteration.
\begin{thom}[B-PWC closure and convergence]\label{thom:B-PWC-closure-VI}
If $V \in \mathbb{F}(S)$ and B-PWC, then so is $Q({}\cdot{},\alpha,V)$ and $[TV]$ for $\alpha \in A$. If $V^0 \in \mathbb{F}(S)$ and B-PWC, 
the sequence $(V^t)_{t \in \mathbb{N}}$ such that {\revise $V^{t+1} = [TV^t]$} converges to $V^\star$, and each $V^t$ is B-PWC.
\end{thom}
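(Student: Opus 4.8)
The plan is to prove the three claims of \thomref{thom:B-PWC-closure-VI} in sequence: (i) $Q({}\cdot{},\alpha,V)$ is B-PWC for each $\alpha \in A$; (ii) $[TV]$ is B-PWC; and (iii) the VI sequence converges to $V^\star$ with each iterate B-PWC. The first two are the ``closure'' part, and the third follows by induction from (ii) together with \proref{prop:convergence-seq}.

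For claim (i), fix $\alpha \in A$ and recall $Q(s,\alpha,V) = r_A(s,\alpha) + r_S(s) + \beta \sum_{s' \in \Theta_s^\alpha} \delta(s,\alpha)(s') V(s')$. I would build a single BFCP of $S$ on which this map is constant by refining three ingredients. By \aspref{asp:continuous-PWC}(\ref{itm:ass3}) and \lemaref{lema:reward-BFCP}, $r_A({}\cdot{},\alpha) + r_S$ is constant on each region of the reward BFCP $\Phi_R^\alpha$. Next, since $V$ is B-PWC, let $\Psi$ be a constant-BFCP for $V$; I want to pull $\Psi$ back through the transition structure. The key observation is that $V$ restricted to the image of $\Phi_P$ under the transition is what matters: by \lemaref{lema:reachability-consistency}, on the pre-image BFCP $\Phi_P^\alpha$, for each $\phi \in \Phi_P^\alpha$ and each target region $\phi' \in \Phi_P$ either no state of $\phi$ reaches $\phi'$ (probability $0$), or there is a bimeasurable, BFCP-invertible $q_\alpha : \phi \to \phi'$ with $\delta(s,\alpha)(q_\alpha(s)) = p_\alpha(\phi,\phi')$ constant, and $q_\alpha(s)$ is the unique successor of $s$ lying in $\phi'$. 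Using BFCP invertibility of $q_\alpha$, pull the trace of $\Psi$ on $\phi'$ back to a BFCP of $\phi$; doing this for every $\phi' \in \Phi_P$ and summing (via the $+$ operation on BFCPs) yields a refinement $\Phi^\alpha_V$ of $\Phi_P^\alpha$ on each region of which every relevant $q_\alpha(s)$ stays in a fixed $\Psi$-region, so $s \mapsto V(q_\alpha(s))$ is constant there. Since $\Theta_s^\alpha$ is finite and, on such a region, $s \mapsto \delta(s,\alpha)(s')$ and $s\mapsto$ (the successor in $\phi'$) vary ``in lockstep'', the whole sum $\sum_{s'\in\Theta_s^\alpha}\delta(s,\alpha)(s')V(s')$ is constant. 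Finally refine by $\Phi_R^\alpha$ as well: on $\Phi^\alpha_V + \Phi_R^\alpha$ the function $Q({}\cdot{},\alpha,V)$ is constant, and this is a BFCP since all the pieces are Borel. Boundedness is immediate since $r_A+r_S$ is bounded, $\beta\in(0,1)$, $\|V\|<\infty$, and $\Theta_s^\alpha$ is finite with transition probabilities summing to at most $1$.

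For claim (ii), I would take the common refinement $\Phi_Q \coloneqq \sum_{\alpha \in A}(\Phi^\alpha_V + \Phi_R^\alpha) + \Phi_P$ over the finitely many joint actions $\alpha$. On each region $\phi$ of $\Phi_Q$, all states share the same agents' states (since $\Phi_Q$ refines $\Phi_P$), hence the same available-action sets $A_1(s), A_2(s)$, and each $Q({}\cdot{},(a_1,a_2),V)$ is constant. Therefore the finite matrix game with payoffs $\big(Q(s,(a_1,a_2),V)\big)_{a_1 \in A_1(s), a_2 \in A_2(s)}$ is the \emph{same} matrix game for all $s \in \phi$, so its value $\max_{u_1}\min_{u_2}\sum_{a_1,a_2} Q(s,(a_1,a_2),V)u_1(a_1)u_2(a_2) = [TV](s)$ is constant on $\phi$. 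Thus $\Phi_Q$ is a constant-BFCP for $[TV]$, so $[TV]$ is B-PWC; measurability and boundedness of $[TV]$ follow from \proref{prop:convergence-seq} (or directly: it is constant on finitely many Borel regions and bounded by $\|r_A+r_S\| + \beta\|V\|/(1-\beta)$ type bounds, but really just $\le \|r_A+r_S\| + \beta\|V\|$ per step).

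For claim (iii), induction: if $V^0$ is B-PWC (and in $\mathbb{F}(S)$), then by (ii) every $V^t = T^t V^0$ is B-PWC and lies in $\mathbb{F}(S)$; convergence $V^t \to V^\star$ is exactly \proref{prop:convergence-seq}. I expect the main obstacle to be the bookkeeping in claim (i): showing carefully that the refinement obtained by pulling back the $\Psi$-regions through the BFCP-invertible maps $q_\alpha$ (region by region, over all $\phi' \in \Phi_P$) is genuinely a \emph{Borel} FCP and that on its regions the sum $\sum_{s' \in \Theta_s^\alpha}\delta(s,\alpha)(s')V(s')$ is truly constant — this requires combining BFCP invertibility (to get a pre-image BFCP of $\Psi|_{\phi'}$), bimeasurability of $q_\alpha$ (to keep everything Borel), and the uniqueness-of-successor clause of \lemaref{lema:reachability-consistency} (so that ``$V$ at the successor in $\phi'$'' is well-defined and equals $V(q_\alpha(s))$). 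Everything else — the matrix-game-value-is-constant step and the inductive convergence — is routine given the earlier results.
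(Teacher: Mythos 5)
Your proposal is correct and follows essentially the same route as the paper: both decompose $\sum_{s'\in\Theta_s^\alpha}\delta(s,\alpha)(s')V(s')$ via the pre-image BFCP $\Phi_P^\alpha$ and the maps $q_\alpha$ of \lemaref{lema:reachability-consistency}, use BFCP invertibility to pull back the constant-BFCP of $V$ so that $V(q_\alpha(\cdot;\phi,\phi'))$ is constant on a refined Borel partition, refine by the reward BFCP, and then observe that on a common refinement (respecting $\Phi_P$, hence the action sets) the induced matrix game is identical across a region so $[TV]$ is constant there, with convergence from Banach's fixed-point theorem. The only cosmetic difference is that the paper separately establishes Borel measurability of $V(q_\alpha(\cdot;\phi,\phi'))$ via pre-images of $[c,\infty)$ before invoking BFCP invertibility for piecewise constancy, whereas you obtain both at once from the pulled-back Borel partition.
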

\begin{proof}
Considering any B-PWC function
$V \in \mathbb{F}(S)$ and joint action $\alpha \in A$, since $r_A({}\cdot{},\alpha)+r_S({}\cdot{})$ is B-PWC by \aspref{asp:continuous-PWC}, the fact that $Q({}\cdot{},\alpha,V)$ is B-PWC follows if, by \defiref{defi:minimax-operator}, we can show that the function $\overline{Q}({}\cdot{},\alpha,V)$ where:
\[
    \overline{Q}({}\cdot{},\alpha,V) \coloneqq \mbox{$\sum\nolimits_{s'\in \Theta_{{}\cdot{}}^{\alpha}}$} \delta({}\cdot{},\alpha)(s')V(s')
\]
is B-PWC. Boundedness follows because $V$ is bounded. The indicator function of a subset $S' \subseteq S$ is the function $\chi_{S'}:S \to \mathbb{R}$ such that $\chi_{S'}(s) = 1$ if $s \in S'$ and $0$ otherwise. Now $\chi_{S'}$ is Borel measurable if and only if $S'$ is a Borel set of $S$~\cite{HLR-PF:10}. For clarity, we use $q_{\alpha}(s;\phi,\phi')$ to refer to $q_{\alpha}$ from \lemaref{lema:reachability-consistency} for $\alpha \in A$, $s\in \phi$, $\phi \in \Phi_P^\alpha$ and $\phi' \in \Phi_P$ (where again $\Phi_P^\alpha$ is from \lemaref{lema:reachability-consistency}). For any $s \in S$ such that $\delta(s, \alpha)$ is defined, we have:
\begin{align*}
 \overline{Q}(s,\alpha,V) & = \; \mbox{$\sum\nolimits_{\phi \in \Phi_P^\alpha}$}  \chi_{\phi}(s) \mbox{$\sum\nolimits_{s'\in \Theta_s^{\alpha}}$} \delta(s,\alpha)(s')V(s') \\
 & = \; \mbox{$\sum\nolimits_{\phi \in \Phi_P^\alpha}$} \chi_{\phi}(s) \mbox{$\sum\nolimits_{\phi' \in \Phi_{P}}$} p_{\alpha}(\phi, \phi')V(q_{\alpha}(s; \phi, \phi')) & \mbox{by \lemaref{lema:reachability-consistency}} \\
& = \; \mbox{$\sum\nolimits_{ \phi \in \Phi_P^\alpha}$}
\mbox{$\sum\nolimits_{\phi' \in \Phi_P}$} p_{\alpha}(\phi, \phi') \chi_{\phi}(s) V(q_{\alpha}(s; \phi, \phi')) & \mbox{rearranging.}
\end{align*}
Since $\phi$ is a Borel set of $S$, we have that $\chi_{\phi}$ is Borel measurable. Next, we show that $V(q_{\alpha}({}\cdot{}; \phi, \phi'))$ is Borel measurable on $\phi$.
Let $\Phi_{V}$ be a constant-BFCP of $S$ for $V$. Given $c \in \mathbb{R}$, we denote by $\Phi_{V}^{\ge c}$ the set of regions in $\Phi_V$ on which $V\ge c$ holds. The preimage of $[c,\infty)$ under $V(q_{\alpha}({}\cdot{}; \phi, \phi'))$ defined on $\phi$ is given by:
\begin{eqnarray*}
      V^{-1}(q_{\alpha}([c, \infty); \phi,\phi')) & = & \{ s \in \phi \mid V(q_{\alpha}(s ; \phi,\phi')) \ge c \} \\
      & = & \mbox{$\bigcup_{\phi_V \in \Phi_V^{\ge c}}$} \{ s \in \phi  \mid q_{\alpha}(s; \phi, \phi') \in \phi_V \} \, .
\end{eqnarray*}
Since $q_{\alpha}(s; \phi,\phi')$ is Borel measurable in $s \in \phi$ (see \lemaref{lema:reachability-consistency}) and $\phi_V$ is a Borel set of $S$, then $\{ s \in \phi \mid q_{\alpha}(s; \phi, \phi') \in \phi_V \}$ is a Borel set of $\phi$. Since $V^{-1}(q_{\alpha}([c, \infty); \phi, \phi'))$ is also a Borel set of $\phi$ by noting that $\Phi_V^{\ge c}$ is finite, it follows that 
$V(q_{\alpha}({}\cdot{}; \phi,\phi'))$ is Borel measurable on $\phi$. Therefore $\overline{Q}({}\cdot{},\alpha,V)$ is Borel measurable.

Next, since $q_{\alpha}({}\cdot{}; \phi,\phi')$ is BFCP invertible on $\phi$ by \lemaref{lema:reachability-consistency}, there exists a BFCP $\Phi_{q}$ of $\phi$ such that all states in each region of $\Phi_{q}$ are mapped into the same region of $\Phi_V$ under $q_{\alpha}({}\cdot{}; \phi,\phi')$. Following this, $V(q_{\alpha}({}\cdot{}; \phi,\phi'))$ is constant on each region of $\Phi_{q}$. Therefore, using the fact that $\chi_{\phi}$ is PWC, it follows that  $\overline{Q}({}\cdot{},\alpha,V)$ is PWC, which completes the proof that $\overline{Q}({}\cdot{},\alpha,V)$ is B-PWC.

From \proref{prop:convergence-seq} we have that $[TV]$ is bounded, Borel measurable. Since $Q({}\cdot{},\alpha,V)$ is PWC for any joint action $\alpha \in A$, $A(s)$ is PWC and $A$ is finite, it follows that $[TV]$ is PWC using the fact that the value of a zero-sum normal-formal game induced at every $s \in S$ is unique. Thus, $[TV]$ is B-PWC. The remainder of the proof follows directly from Banach's fixed point theorem and the fact we have proved that, if $V \in \mathbb{F}(S)$ and B-PWC, so is $[TV]$.
\end{proof}

\subsection{B-PWC VI algorithm}\label{B-PWC-VI-algorithm}

We use the closure property of B-PWC value functions under the minimax operator from \thomref{thom:B-PWC-closure-VI}
to iteratively construct a sequence $(V^t )_{t \in \mathbb{N}}$ of 
such functions to approximate $V^{\star}$ to within a convergence guarantee.
\algoref{alg:PWC-value-VI} presents our B-PWC VI scheme,  where the BFCP of the B-PWC value function at each iteration is refined (line 6) and subsequently the B-PWC value function is updated via minimax computations (line 8) for a state sampled from each of its regions.
 



\begin{algorithm}[t]
\caption{B-PWC VI}\label{alg:PWC-value-VI}
\begin{algorithmic}[1]
\State{\textbf{Input:} NS-CSG $\csg$,
perception FCP $\Phi_{P}$, reward FCPs $(\Phi_R^{\alpha} )_{\alpha \in A}$, error $\varepsilon$}
\State{\textbf{Output:} Approximate value function $V$}
\State Initialize $(\Phi_{V^0}, V^0)$
\State {$\bar{\varepsilon} \leftarrow 2\varepsilon$, \ $t \leftarrow 0$ }
\While{$\bar{\varepsilon} > \varepsilon$}
\State {$\Phi_{V^{t+1}} \leftarrow \mathit{Preimage\_BFCP}(\Phi_{V^t}, \Phi_{P}, ( \Phi_R^{\alpha} )_{\alpha \in A})$ \ (\algoref{alg:BFCP-iteration-VI}) }
\For{$\phi \in \Phi_{V^{t+1}} $}
\State Take one state $s \in \phi$, \ $V^{t+1}_{\phi} \leftarrow [TV^t](s)$
\EndFor
\State{$\bar{\varepsilon} \leftarrow \mathit{Dist}(V^{t+1}, V^t)$}
\State{$t \leftarrow t+1$}
\EndWhile
\State \Return {$V \leftarrow V^t$}
\end{algorithmic}
\end{algorithm}



%

\begin{algorithm}[t]
\caption{BFCP iteration for B-PWC VI}\label{alg:BFCP-iteration-VI}
\begin{algorithmic}[1]
\Procedure{$\mathit{Preimage\_BFCP}$}{$\Phi$, $\Phi_{P}$, $( \Phi_R^{\alpha} )_{\alpha \in A}$}
    \State{$\Phi_{\textup{pre}} \leftarrow \emptyset$ }
    \For{$\phi \in \Phi_P + \sum_{\alpha \in A} \Phi_R^{\alpha}$}
    \State{$\Phi_{\textup{pre}}^\phi \leftarrow \emptyset$}
    \For{$\alpha \in A$, $ \phi' \in \{ \phi' \in  \Phi \mid (\cup_{s \in \phi} \Theta_s^{\alpha} ) \cap \phi' \neq \emptyset \}$}
    \State{$\Phi_{\textup{pre}}^\phi \leftarrow \Phi_{\textup{pre}}^\phi \cup \left\{ \{s \in \phi \mid \Theta_s^{\alpha} \cap \phi' \neq \emptyset \} \right\}$}
    \EndFor
    \State{$\Phi_{\textup{pre}} \leftarrow \Phi_{\textup{pre}} \cup \{\phi_1 \in \mathit{Intersect}(\phi,\Phi_{\textup{pre}}^\phi) \}$}
    \EndFor
    \State \Return {$\Phi_{\textup{pre}}$}
\EndProcedure
\end{algorithmic}
\end{algorithm}


  

\startpara{Initialization} The function  $V^0$ is initialised as a 0-valued B-PWC function
defined over the BFCP $ \Phi_{V^0} = \Phi_P + \sum_{\alpha \in A}\Phi_R^{\alpha}$ of $S$, i.e.,  $V_{\phi}^0 = 0$ for $\phi \in \Phi_{V^0}$.

%
\begin{figure}[t]
\centering
\def\scalesize{0.3}
\def\offsetright{15}
\def\offsetrighttwo{30}
\def\offsetdown{0}
\centering
\begin{tikzpicture}[scale = \scalesize]
\begin{scope}
\path
  coordinate (aux0) at (0,\offsetdown + 2)
  coordinate (aux1) at (0,\offsetdown + 3)
  coordinate (aux2) at (10,\offsetdown + 10)
  coordinate (aux3) at (9,\offsetdown + 6)
  coordinate (aux4) at (5,\offsetdown + 0)
  coordinate (aux5) at (7,\offsetdown + 0)
  coordinate (aux6) at (2,\offsetdown + 6)
  coordinate (aux7) at (5,\offsetdown + 6)
  coordinate (esp1) at (0.2,\offsetdown + 2.5)
  coordinate (esp2) at (1.5,\offsetdown + 1.5)
  coordinate (esp3) at (3,\offsetdown + 0.1)
  coordinate (esp4) at (5.5,\offsetdown + 1.1)
  coordinate (esp5) at (8,\offsetdown + 0.5)
  coordinate (esp6) at (8.75,\offsetdown + 2)
  coordinate (esp7) at (9.7,\offsetdown + 3)
  coordinate (esp8) at (6.5,\offsetdown + 4.5)
  coordinate (esp9) at (3.8,\offsetdown + 5.8)
  coordinate (esp10) at (1.5,\offsetdown + 4)
  ;
\draw[line width=0.8pt]
  (esp1) to[out=-90,in=170]
  (esp2) to[out=-10,in=170]
  (esp3) to[out=-10,in=180]
  (esp4) to[out=0,in=180]
  (esp5) to[out=10,in=-150]
  (esp6) to[out=20,in=-90]
  (esp7) to[out=90,in=-60]
  (esp8) to[out=120,in=0]
  (esp9) to[out=180,in=0]
  (esp10) to[out=180,in=90]
  cycle;    
\clip
  (esp1) to[out=-90,in=170]
  (esp2) to[out=-10,in=170]
  (esp3) to[out=-10,in=180]
  (esp4) to[out=0,in=180]
  (esp5) to[out=10,in=-150]
  (esp6) to[out=20,in=-90]
  (esp7) to[out=90,in=-60]
  (esp8) to[out=120,in=0]
  (esp9) to[out=180,in=0]
  (esp10) to[out=180,in=90]
  cycle;    
\filldraw[fill=cyan!40]
  (aux4) to[bend right=10]
  (aux6) --
  (aux7) to[bend left=10]
  (aux5) -- cycle;
\filldraw[fill=red!40]
  (aux5) to[bend right=10]
  (aux7) --
  (10,\offsetdown + 6) --
  (10,\offsetdown + 0) -- cycle;
\filldraw[fill=yellow!50]
  (0,\offsetdown + 0) -- 
  (aux4) to[bend right=10]
  (aux6) --
  (0,\offsetdown + 6) -- 
  (0,\offsetdown + 0) -- cycle;
\end{scope}

\begin{scope}
\path
  coordinate (aux0) at (\offsetright + 0,2)
  coordinate (aux1) at (\offsetright + 0,3)
  coordinate (aux2) at (\offsetright + 10,10)
  coordinate (aux3) at (\offsetright + 9,6)
  coordinate (aux4) at (\offsetright + 2,0)
  coordinate (aux5) at (\offsetright + 5,0)
  coordinate (aux6) at (\offsetright + 2,6)
  coordinate (aux7) at (\offsetright + 5,6)
  coordinate (esp1) at (\offsetright + 0.2,2.5)
  coordinate (esp2) at (\offsetright + 1.5,1.5)
  coordinate (esp3) at (\offsetright + 3,0.1)
  coordinate (esp4) at (\offsetright + 5.5,1.1)
  coordinate (esp5) at (\offsetright + 8,0.5)
  coordinate (esp6) at (\offsetright + 8.75,2)
  coordinate (esp7) at (\offsetright + 9.7,3)
  coordinate (esp8) at (\offsetright + 6.5,4.5)
  coordinate (esp9) at (\offsetright + 3.8,5.8)
  coordinate (esp10) at (\offsetright + 1.5,4)
  ;
\draw[line width=0.8pt]
  (esp1) to[out=-90,in=170]
  (esp2) to[out=-10,in=170]
  (esp3) to[out=-10,in=180]
  (esp4) to[out=0,in=180]
  (esp5) to[out=10,in=-150]
  (esp6) to[out=20,in=-90]
  (esp7) to[out=90,in=-60]
  (esp8) to[out=120,in=0]
  (esp9) to[out=180,in=0]
  (esp10) to[out=180,in=90]
  cycle;    
\clip
  (esp1) to[out=-90,in=170]
  (esp2) to[out=-10,in=170]
  (esp3) to[out=-10,in=180]
  (esp4) to[out=0,in=180]
  (esp5) to[out=10,in=-150]
  (esp6) to[out=20,in=-90]
  (esp7) to[out=90,in=-60]
  (esp8) to[out=120,in=0]
  (esp9) to[out=180,in=0]
  (esp10) to[out=180,in=90]
  cycle;    
\filldraw[fill=cyan!40]
  (aux4) to[bend right=10]
  (aux6) --
  (aux7) to[bend left=10]
  (aux5) -- cycle;
\filldraw[fill=orange!60]
  (aux5) to[bend right=10]
  (aux7) --
  (\offsetright + 10,6) --
  (\offsetright + 10,0) -- cycle;
\filldraw[fill=red!40]
  (aux6) -- 
  (aux1) to[bend right=20]
  (aux3) --
  (aux2) to[bend left=10] cycle;
\filldraw[fill=yellow!50]
  (\offsetright + 0,0) -- 
  (aux4) to[bend right=10]
  (aux6) --
  (\offsetright + 0,6) -- 
  (\offsetright + 0,0) -- cycle;
\end{scope}

\begin{scope}
\path
  coordinate (aux0) at (\offsetrighttwo + 0,2)
  coordinate (aux1) at (\offsetrighttwo + 0,3)
  coordinate (aux2) at (\offsetrighttwo + 10,10)
  coordinate (aux3) at (\offsetrighttwo + 9,6)
  coordinate (aux4) at (\offsetrighttwo + 2,0)
  coordinate (aux5) at (\offsetrighttwo + 5,0)
  coordinate (aux6) at (\offsetrighttwo + 2,6)
  coordinate (aux7) at (\offsetrighttwo + 5,6)
  coordinate (esp1) at (\offsetrighttwo + 0.2,2.5)
  coordinate (esp2) at (\offsetrighttwo + 1.5,1.5)
  coordinate (esp3) at (\offsetrighttwo + 3,0.1)
  coordinate (esp4) at (\offsetrighttwo + 5.5,1.1)
  coordinate (esp5) at (\offsetrighttwo + 8,0.5)
  coordinate (esp6) at (\offsetrighttwo + 8.75,2)
  coordinate (esp7) at (\offsetrighttwo + 9.7,3)
  coordinate (esp8) at (\offsetrighttwo + 6.5,4.5)
  coordinate (esp9) at (\offsetrighttwo + 3.8,5.8)
  coordinate (esp10) at (\offsetrighttwo + 1.5,4)
  ;
\draw[line width=0.8pt]
  (esp1) to[out=-90,in=170]
  (esp2) to[out=-10,in=170]
  (esp3) to[out=-10,in=180]
  (esp4) to[out=0,in=180]
  (esp5) to[out=10,in=-150]
  (esp6) to[out=20,in=-90]
  (esp7) to[out=90,in=-60]
  (esp8) to[out=120,in=0]
  (esp9) to[out=180,in=0]
  (esp10) to[out=180,in=90]
  cycle;    
\clip
  (esp1) to[out=-90,in=170]
  (esp2) to[out=-10,in=170]
  (esp3) to[out=-10,in=180]
  (esp4) to[out=0,in=180]
  (esp5) to[out=10,in=-150]
  (esp6) to[out=20,in=-90]
  (esp7) to[out=90,in=-60]
  (esp8) to[out=120,in=0]
  (esp9) to[out=180,in=0]
  (esp10) to[out=180,in=90]
  cycle;    
\filldraw[fill=cyan!40]
  (aux4) to[bend right=10]
  (aux6) --
  (aux7) to[bend left=10]
  (aux5) -- cycle;
\filldraw[fill=orange!60]
  (aux5) to[bend right=10]
  (aux7) --
  (\offsetrighttwo + 10,6) --
  (\offsetrighttwo + 10,0) -- cycle;
\filldraw[fill=red!40]
  (aux6) -- 
  (aux1) to[bend right=20]
  (aux3) --
  (aux2) to[bend left=10] cycle;
\filldraw[fill=yellow!50]
  (\offsetrighttwo + 0,0) -- 
  (aux4) to[bend right=10]
  (aux6) --
  (\offsetrighttwo + 0,6) -- 
  (\offsetrighttwo + 0,0) -- cycle;
\end{scope}
\node at (5,\offsetdown - 1.2) {$(\Phi_{V^t}, V^t)$};
\node at (\offsetright + 5,\offsetdown - 1.2) {$\Phi_{V^{t+1}}$};
\node at (\offsetrighttwo + 5,\offsetdown - 1.2) {$(\Phi_{V^{t+1}}, V^{t+1})$};

\draw[-{Latex[length=2mm]},thick] (11, 3) -- (14, 3);
\draw[-{Latex[length=2mm]},thick] (11 + \offsetright, 3) -- (14 + \offsetright, 3);
\node at (12.5, 4.5) {$(a)$};
\node at (12.5 + \offsetright, 4.5) {$(b)$};
\end{tikzpicture}
\caption{B-PWC VI in \algoref{alg:PWC-value-VI}.
$(a)$ Find new BFCP $\Phi_{V^{t+1}}$: refine $\Phi_P + \sum_{\alpha \in A} \Phi_R^{\alpha}$ to be a pre-image BFCP of $\Phi_{V^{t}}$ for $\delta$; $(b)$ compute a value for each $\phi \in \Phi_{V^{t+1}}$: take one state $s \in \phi$ and compute $V^{t+1}$ by assigning to each region $\phi$ the value $[TV^t](s)$.}
\label{fig:VI-graph}
\vspace*{-0.5em}
\end{figure}

\startpara{The algorithm} The steps of 
our B-PWC
VI algorithm are illustrated in Fig. \ref{fig:VI-graph}. These steps use $\mathit{Preimage\_BFCP}(\Phi_{V^t}, \Phi_P, ( \Phi_R^{\alpha} )_{\alpha \in A})$, see \algoref{alg:BFCP-iteration-VI}, to compute a refinement of $\Phi_P + \sum_{\alpha \in A} \Phi_R^{\alpha}$ that is a preimage BFCP of $\Phi_{V^{t}}$ for $\delta$.
Then, in order to compute the value $V^{t+1}_{\phi}$ over each region $\phi \in \Phi$, we take one state $s\in \phi$ and then find the value of a zero-sum normal form game~\cite{NMK+44} 
at $s$ induced by \defiref{defi:minimax-operator}.

As a convergence criterion for
B-PWC VI in \algoref{alg:PWC-value-VI}, we detect when the difference between
successive value approximations falls below a threshold $\varepsilon$
(as usual for VI, this does \emph{not} guarantee an $\varepsilon$-optimal solution).
The function $\mathit{Dist}(V^{t+1}, V^t)$ computes the difference between $V^{t+1}$ and $V^t$,
which may have different regions
due to the possible inconsistency between $\Phi_{V^{t+1}}$ and $\Phi_{V^{t}}$.
An intuitive method is to evaluate $V^{t+1}$ and $V^t$ at a finite set of points, and then compute the maximum difference.
In the usual manner for VI, an approximately optimal strategy
can be extracted from the final step of the computation.

\algoref{alg:BFCP-iteration-VI} requires region-wise computations involving the image and preimage of a region, region intersection and the sum of BFCPs.
In particular, $\mathit{Intersect}(\phi,\Phi_{\textup{pre}}^\phi)$ is the refinement of $\phi$ obtained by computing all pairwise intersections of $\phi$ with regions in $\Phi_{\textup{pre}}^\phi$ and, by construction, is a preimage BFCP of $\Phi$ for $\delta$ over $\phi$. The following corollary then follows from \lemaref{lema:reachability-consistency} and \thomref{thom:B-PWC-closure-VI}. 


\begin{cor}[BFCP iteration for B-PWC VI]\label{cor:bfcp-iteration}
In \algoref{alg:BFCP-iteration-VI}, $\Phi_{\textup{pre}}$ is a refinement of $\Phi_P + \sum_{\alpha \in A} \Phi_R^{\alpha}$ and is a preimage BFCP of $\Phi$ for $\delta$.
\end{cor}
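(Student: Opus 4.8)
The plan is to establish the two assertions of the corollary separately: the refinement claim by bookkeeping over the set operations in \algoref{alg:BFCP-iteration-VI}, and the pre-image claim by verifying that each region produced by $\mathit{Intersect}$ satisfies the reachability-consistency conditions of \lemaref{lema:reachability-consistency}, now stated relative to regions of the value BFCP $\Phi$ instead of regions of $\Phi_P$. Write $\Psi \coloneqq \Phi_P + \sum_{\alpha \in A}\Phi_R^\alpha$ for the coarse partition over which the procedure iterates. I will use throughout that $\Phi$ refines $\Phi_P$ — which holds inside the VI loop since $\Phi_{V^0}=\Psi$ and each $\Phi_{V^{t+1}}=\Phi_{\textup{pre}}$ refines $\Psi$, hence $\Phi_P$, by the refinement claim — so that, by \lemaref{lema:perception-BFCP}, every region $\phi'\in\Phi$ has fixed agent states $s_1'=(\loc_1',\per_1')$ and $s_2'=(\loc_2',\per_2')$, and likewise every region $\phi\in\Psi$ has fixed agent states $s_1,s_2$. (If $\Phi$ were not known to refine $\Phi_P$, one would first refine it by $\Phi_P$.)

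For the \emph{refinement claim}, I would note that the outer loop ranges over the disjoint, covering, Borel regions $\phi$ of $\Psi$, contributing to $\Phi_{\textup{pre}}$ the family $\mathit{Intersect}(\phi,\Phi_{\textup{pre}}^\phi)$, i.e.\ the common refinement of $\phi$ with the finite collection $\Phi_{\textup{pre}}^\phi$ of subsets of $\phi$; its members are finite intersections of $\phi$ with sets $\{s\in\phi\mid\Theta_s^\alpha\cap\phi'\neq\emptyset\}$ and their relative complements. The key step is that each such set is Borel: since $\phi$ has fixed agent states and $\delta_E$ is deterministic, $\Theta_s^\alpha\cap\phi'$ can only be empty or the singleton $\{(s_1',s_2',\delta_E(s_E,\alpha))\}$, so $\{s\in\phi\mid\Theta_s^\alpha\cap\phi'\neq\emptyset\}$ is $\emptyset$ (when $\delta_1(s_1,\alpha)(\loc_1')\,\delta_2(s_2,\alpha)(\loc_2')=0$, a quantity constant on $\phi$) or else $\{(s_1,s_2,s_E)\in\phi\mid\delta_E(s_E,\alpha)\in(\phi')_E\}$, where $(\phi')_E$ is the environment section of $\phi'$, which is Borel by \lemaref{lema:perception-BFCP}; the latter set is then Borel because $\delta_E({}\cdot{},\alpha)$ is (bi)measurable by \aspref{asp:continuous-PWC}(\ref{itm:ass1}). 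Splitting into connected components where necessary — finitely many, by the BFCP invertibility of $\delta_E({}\cdot{},\alpha)$, exactly as in \lemaref{lema:reachability-consistency} — each member of $\mathit{Intersect}(\phi,\Phi_{\textup{pre}}^\phi)$ is a connected Borel subset of $\phi$; the contributions from distinct $\phi$ are disjoint and cover $S$, so $\Phi_{\textup{pre}}$ is a BFCP of $S$, and since each of its regions lies inside some region of $\Psi$ it refines $\Psi=\Phi_P+\sum_{\alpha\in A}\Phi_R^\alpha$.

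For the \emph{pre-image claim}, I would fix $\hat\phi\in\Phi_{\textup{pre}}$ with $\hat\phi\subseteq\phi\in\Psi$, a joint action $\alpha$ with $\delta({}\cdot{},\alpha)$ defined on $\phi$, and a region $\phi'\in\Phi$ with agent states $s_1',s_2'$. By the $\mathit{Intersect}$ step, $\hat\phi$ lies wholly in $\{s\in\phi\mid\Theta_s^\alpha\cap\phi'\neq\emptyset\}$ or wholly in its complement. In the complement case I take $p_\alpha(\hat\phi,\phi')=0$, giving item~1 of \lemaref{lema:reachability-consistency}. Otherwise, for every $s\in\hat\phi$ the set $\Theta_s^\alpha\cap\phi'$ is the singleton $\{(s_1',s_2',\delta_E(s_E,\alpha))\}$, so $q_\alpha(s)\coloneqq(s_1',s_2',\delta_E(s_E,\alpha))$ defines a map $q_\alpha:\hat\phi\to\phi'$ with $\{q_\alpha(s)\}=\Theta_s^\alpha\cap\phi'$; up to the fixed discrete coordinates $s_1',s_2'$ it is the restriction of $\delta_E({}\cdot{},\alpha)$, hence bimeasurable and BFCP invertible by \aspref{asp:continuous-PWC}(\ref{itm:ass1}). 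Finally $\delta(s,\alpha)(q_\alpha(s))=\delta_1(s_1,\alpha)(\loc_1')\,\delta_2(s_2,\alpha)(\loc_2')$ by \defiref{semantics-def}, which depends only on $s_1,s_2,\alpha$ and so equals a constant $p_\alpha(\hat\phi,\phi')>0$ on $\hat\phi$; this is item~2 of \lemaref{lema:reachability-consistency}. Hence every region of $\Phi_{\textup{pre}}$ has reachability consistency to every region of $\Phi$, i.e.\ $\Phi_{\textup{pre}}$ is a pre-image BFCP of $\Phi$ for $\delta$, and together with \thomref{thom:B-PWC-closure-VI} this is precisely the refinement on which $\overline{Q}({}\cdot{},\alpha,V)$, and hence $[TV]$, is piecewise constant.

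I expect the main obstacle to be justifying that the comparatively coarse operation $\mathit{Intersect}(\phi,\Phi_{\textup{pre}}^\phi)$ — which only separates states according to which regions of $\Phi$ they can reach under each $\alpha$ — already delivers the \emph{single-valued}, bimeasurable, BFCP-invertible successor map and the \emph{state-independent} transition probability demanded by \lemaref{lema:reachability-consistency}. The argument hinges on two structural facts: $\delta_E$ is deterministic, so the environment successor is unique, and $\delta_1,\delta_2$ depend only on the agents' states, which are constant on each region of $\Psi$ (and of $\Phi$, by \lemaref{lema:perception-BFCP}); these collapse $q_\alpha$ to $\delta_E({}\cdot{},\alpha)$ up to fixed discrete coordinates, so (bi)measurability and BFCP invertibility are inherited directly from \aspref{asp:continuous-PWC}(\ref{itm:ass1}). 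A secondary nuisance, dispatched as in \lemaref{lema:perception-BFCP} and \lemaref{lema:reachability-consistency}, is ensuring the intersections are realised as finitely many \emph{connected} Borel regions, so that $\Phi_{\textup{pre}}$ is a genuine (connected) BFCP and not merely a finite Borel partition.
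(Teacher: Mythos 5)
Your proposal is correct and follows essentially the same route the paper intends: the paper leaves the corollary as an immediate consequence of \lemaref{lema:reachability-consistency} and \thomref{thom:B-PWC-closure-VI}, and your argument is a faithful expansion of exactly that — re-running the reachability-consistency construction of \lemaref{lema:reachability-consistency} with the value BFCP $\Phi$ in place of $\Phi_P$, using determinism of $\delta_E$, constancy of the agent states on regions of $\Phi_P$ (\lemaref{lema:perception-BFCP}), and the bimeasurability and BFCP invertibility of $\delta_E({}\cdot{},\alpha)$ from \aspref{asp:continuous-PWC}(\ref{itm:ass1}). Your observation that $\Phi=\Phi_{V^t}$ refines $\Phi_P$ by induction on the VI loop is a detail the paper glosses over but is needed and correctly handled.
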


\newcommand{\grid}[2]{
    \node at (#1 + 3.95, #2 + -0.3) {$4$};
    \node at (#1 + -0.2, #2 + -0.3) {$0$};
    \node at (#1 + -0.25, #2 + 3.95) {$4$};
    \node at (#1 + -0.25, #2 + 3) {$3$};
    \node at (#1 + -0.25, #2 + 2) {$2$};
    \node at (#1 + -0.25, #2 + 1) {$1$};
    \node at (#1 + 3, #2 + -0.3) {$3$};
    \node at (#1 + 2, #2 + -0.3) {$2$};
    \node at (#1 + 1, #2 + -0.3) {$1$};
    \draw[black, very thick] (#1, #2 + 0) rectangle (#1 + 4, #2 + 4);
    \draw[black, thick] (#1, #2 + 1) -- (#1 + 4, #2 + 1);
    \draw[black, thick] (#1, #2 + 2) -- (#1 + 4, #2 + 2);
    \draw[black, thick] (#1, #2 + 3) -- (#1 + 4, #2 + 3);
    \draw[black, thick] (#1 + 1, #2 + 0) -- (#1 + 1, #2 + 4);
    \draw[black, thick] (#1 + 2, #2 + 0) -- (#1 + 2, #2 + 4);
    \draw[black, thick] (#1 + 3, #2 + 0) -- (#1 + 3, #2 + 4);
}

\begin{figure}
\centering
    \begin{tikzpicture}[scale=0.75]

        \def\xi{0}
        \def\yi{0}
        \draw[fill = gray!20] (\xi + 1, \yi + 1) rectangle (2,2);
        \draw[black, thin, fill = magenta!30] (\xi + 0, \yi + 1) rectangle (\xi + 1, \yi + 2); 
        \draw[black, thin, fill = blue!30] (\xi + 1, \yi + 0) rectangle (\xi + 2, \yi + 1); 
        \draw[black, thin, fill = green!30] (\xi + 1, \yi + 2) rectangle (\xi + 2, \yi + 3); 
        \draw[black, thin, fill = orange!30] (\xi + 2, \yi + 1) rectangle (\xi + 3, \yi + 2); 
        \grid{\xi}{\yi}	
        \node [sedan top,body color=red!30,window color=black!80,minimum width=0.75cm, rotate = 90] at (\xi + 1.5, \yi + 1.75) {};
        \filldraw[color=red] (\xi + 1.5, \yi + 1.75) circle (1.5pt);
        \draw[->] (\xi + 1.5, \yi + 1.75) -- (\xi + 1.5, \yi + 2.5) node [pos=1.625, below, xshift=-1.25] {\tiny \textit{up}};
        \draw[->] (\xi + 1.5, \yi + 1.75) -- (\xi + 2.5, \yi + 1.75) node [pos=1.3, below, xshift=-5.0] {\tiny \textit{right}};
        \draw[->] (\xi + 1.5, \yi + 1.75) -- (\xi + 0.5, \yi + 1.75) node [pos=0.9, below, xshift=-5.0] {\tiny \textit{left}};
        \draw[->] (\xi + 1.5, \yi + 1.75) -- (\xi + 1.5, \yi + 0.5) node [pos=0.9, below, xshift=-1.25] {\tiny \textit{down}};
        \draw[-{Latex[length=1.8mm]}] (\xi + 4.5, \yi + 2.0) -- (\xi + 5.2, \yi + 2.0) node [midway, above, yshift=1mm] {\scriptsize \textit{up}};
        \def\xi{6}
        \def\yi{0}
        \draw[black, thin, fill = magenta!30] (\xi + 0, \yi + 1) rectangle (\xi + 1, \yi + 2); 
        \draw[black, thin, fill = blue!30] (\xi + 1, \yi + 0) rectangle (\xi + 2, \yi + 1); 
        \draw[black, thin, fill = green!30] (\xi + 1, \yi + 2) rectangle (\xi + 2, \yi + 3); 
        \draw[black, thin, fill = orange!30] (\xi + 2, \yi + 1) rectangle (\xi + 3, \yi + 2); 
        \grid{\xi}{\yi}	
        \draw[black, thin, pattern=north west lines] (\xi + 1.0, \yi + 1.5) rectangle (\xi + 2.0, \yi + 2.5); 
        \filldraw[color=red] (\xi + 1.5, \yi + 1.75) circle (1.5pt);
        \draw[-{Latex[length=1.8mm]}] (\xi + 4.5, \yi + 2.0) -- (\xi + 5.2, \yi + 2.0) node [midway, above, yshift=1mm] {\scriptsize \textit{down}};
        \def\xi{12}
        \def\yi{0}
        \draw[black, thin, fill = magenta!30] (\xi + 0, \yi + 1) rectangle (\xi + 1, \yi + 2); 
        \draw[black, thin, fill = blue!30] (\xi + 1, \yi + 0) rectangle (\xi + 2, \yi + 1); 
        \draw[black, thin, fill = green!30] (\xi + 1, \yi + 2) rectangle (\xi + 2, \yi + 3); 
        \draw[black, thin, fill = orange!30] (\xi + 2, \yi + 1) rectangle (\xi + 3, \yi + 2); 
        \grid{\xi}{\yi}	
        \draw[pattern=north west lines, pattern color = green] (\xi + 1.0, \yi + 1.5) rectangle (\xi + 2.0, \yi + 2.0);
        \draw[black, thin, pattern=north east lines] (\xi + 1.0, \yi + 1.5) rectangle (\xi + 2.0, \yi + 0.5); 
        \filldraw[color=red] (\xi + 1.5, \yi + 1.75) circle (1.5pt);
        \draw[-{Latex[length=1.8mm]}] (\xi + 2.0, \yi - 0.75) -- (\xi + 2.0, \yi - 1.25)  -- (\xi - 10.0, \yi - 1.25) node [midway, above] {\scriptsize \textit{division after up and down}} -- (\xi - 10.0, \yi - 1.77);

        \def\xi{0}
        \def\yi{-6}
        \draw[black, thin, fill = magenta!30] (\xi + 0, \yi + 1) rectangle (\xi + 1, \yi + 2); 
        \draw[black, thin, fill = blue!30] (\xi + 1, \yi + 0) rectangle (\xi + 2, \yi + 1); 
        \draw[black, thin, fill = green!30] (\xi + 1, \yi + 2) rectangle (\xi + 2, \yi + 3); 
        \draw[black, thin, fill = orange!30] (\xi + 2, \yi + 1) rectangle (\xi + 3, \yi + 2); 
        \grid{\xi}{\yi}	
        \draw[pattern=north west lines, pattern color = green] (\xi + 1.0, \yi + 1.5) rectangle (\xi + 2.0, \yi + 2.0);
        \draw[pattern=north east lines, pattern color = blue] (\xi + 1.0, \yi + 1.5) rectangle (\xi + 2.0, \yi + 1.0);
        \filldraw[color=red] (\xi + 1.5, \yi + 1.75) circle (1.5pt);
        %
        %
        \def\xi{0}
        \def\yi{-4}
        \draw[-{Latex[length=1.8mm]}] (\xi + 6.0, \yi) -- (\xi + 10.0, \yi) node [pos=0.5, below, xshift=-1.25] {$\cdots$} node [midway, above, yshift=1mm] {\scriptsize \textit{left, right}};
        \def\xi{12}
        \def\yi{-6}
        \draw[black, thin, fill = magenta!30] (\xi + 0, \yi + 1) rectangle (\xi + 1, \yi + 2); 
        \draw[black, thin, fill = blue!30] (\xi + 1, \yi + 0) rectangle (\xi + 2, \yi + 1); 
        \draw[black, thin, fill = green!30] (\xi + 1, \yi + 2) rectangle (\xi + 2, \yi + 3); 
        \draw[black, thin, fill = orange!30] (\xi + 2, \yi + 1) rectangle (\xi + 3, \yi + 2); 
        \grid{\xi}{\yi}	
        \draw[pattern=north west lines, pattern color = green] (\xi + 1.0, \yi + 1.5) rectangle (\xi + 2.0, \yi + 2.0);
        \draw[pattern=north east lines, pattern color = blue] (\xi + 1.0, \yi + 1.5) rectangle (\xi + 2.0, \yi + 1.0);
        \filldraw[color=red] (\xi + 1.5, \yi + 1.75) circle (1.5pt);
        \draw[pattern=vertical lines, pattern color = magenta] (\xi + 1.0, \yi + 1.0) rectangle (\xi + 1.5, \yi + 2.0);
        \draw[pattern=horizontal lines, pattern color = orange] (\xi + 1.5, \yi + 1.0) rectangle (\xi + 2.0, \yi + 2.0);
    \end{tikzpicture}
    \caption{Illustration of the region refinement process.}
    \label{fig:refin}
\end{figure}

{\revise
\begin{examp}\label{eg-new-add}
Returning to \egref{eg1-eg}, in Fig.~\ref{fig:refin} we illustrate the refinement process for a region in the perception BFCP when performing \algoref{alg:BFCP-iteration-VI}. 
We focus on the perception
function 
that is implemented via the linear regression model given in \egref{eg3-eg}. The preimage BFCP is constructed as described in \lemaref{lema:reachability-consistency} and \cororef{cor:bfcp-iteration} for the case $\Phi = \Phi_P$. For simplicity, here we only consider actions and coordinates of $\agent_1$. 


 Fig.~\ref{fig:refin} demonstrates how the grey region in the initial BFCP in the top left grid is subdivided when considering each action of $\agent_1$. 
In general, when subdividing a region, we also have to take into account the reward BFCP as indicated in Algorithms~\ref{alg:PWC-value-VI} and \ref{alg:BFCP-iteration-VI}. 
 The shaded squares (diagonal lines) in the top centre and top right grids are the images of the initial grey region under the actions $\textit{up}$ and $\textit{down}$, respectively. Based on their intersections and the corresponding preimages, the grey region in the top left grid is subdivided into green and purple subregions in the bottom left grid such that all points in each subregion lead to the same new percepts under both actions,  $\textit{up}$ and $\textit{down}$. The bottom right grid then shows the subdivision of the grey region in the top left grid after considering all actions.
\end{examp}
}


\startpara{Polytope regions}
Our B-PWC VI algorithm assumes that each region in a BFCP is finitely representable.
We now briefly discuss the use of BFCPs defined by \emph{polytopes}, which
suffice 
for {\revise a symbolic representation of}
perception BCFPs of ReLU NNs (discussed below). The focus is the region-based computations required by \algoref{alg:BFCP-iteration-VI}.
%
A polytope $\phi \subseteq \mathbb{R}^{m}$ is an intersection of $\ell$ halfspaces $\{ x \in \mathbb{R}^{m}  \mid g_k(x) \ge 0 \mbox{ for } 1 \leq k \leq \ell \}$, where $g_k(x) = W_k^\top x + b_k$ is a linear function, i.e., $W_k \in \mathbb{R}^{m}$ and $b_k \in \mathbb{R}$, for $1 \leq k \leq \ell$.
%
If $\phi_1$ and $\phi_2$ are polytopes, represented by $\{(W_k, b_k)\}_{k=1}^{\ell'}$ and $\{(W_k, b_k)\}_{k=\ell' + 1}^{\ell}$, respectively, then the intersection
$\phi_1 \cap \phi_2$, 
is the intersection of $\ell$ halfspaces 
and can be represented as $\{(W_k, b_k)\}_{k=1}^{\ell}$. Therefore, the sum $\Phi_1 + \Phi_2$ of two BFCPs $\Phi_1$ and $\Phi_2$ can be computed by considering the intersection $\phi_1 \cap \phi_2$
of all pairwise
combinations of regions $\phi_1 \in \Phi_1$ and $\phi_2 \in \Phi_2$.

The \emph{image} of a polytope $\phi = \{ x \in \mathbb{R}^{m} \mid g_k(x) \ge 0 \mbox{ for } 1 \leq k \leq \ell \}$ under a linear function $f : \mathbb{R}^{m} \rightarrow \mathbb{R}^{m}$, where $f(x) = D x + b$, $D \in \mathbb{R}^{m \times m}$ is non-singular and $b \in \mathbb{R}^{m}$, 
is the polytope $f(\phi) = \{ x \in \mathbb{R}^{m} \mid W_k^\top D^{-1} x + b_k - W_k^\top D^{-1} b \ge 0 \mbox{ for } 1 \leq k \leq \ell \}$ with the representation $\{ (D^{-\top}W_k, b_k - W_k^\top D^{-1} b)\}_{k=1}^{\ell}$.
The \emph{preimage} of $\phi$ under $f$ is the polytope $f^{-1}(\phi) = \{ x \in \mathbb{R}^{m} \mid W_k^\top D x + b_k + W_k^\top b \ge 0 \mbox{ for } 1 \leq k \leq \ell \}$ with the representation $\{ (D^\top W_k, b_k + W_k^\top b) \}_{k=1}^{\ell}$. 
Checking the feasibility of a set constrained by a set of linear inequalities can be solved by a linear program solver \cite{2020SciPy-NMeth}. 

\startpara{ReLU networks}
If each perception function $obs_i$ is implemented via a ReLU NN 
classifier, where the activation function is B-PWL, then the preimages of the ReLU NN for each percept~\cite{KM-FF:20}  have linear boundaries, and therefore all regions in the corresponding perception BFCP $\Phi_P$ can be {\revise (symbolically)}
represented by polytopes (see \egref{eg4-eg}).
If there exist polytope constant-BFCPs for B-PWC $r_A({}\cdot{}, \alpha)$ and $r_S$ for all $\alpha \in A$, then all regions in $\Phi_R^{\alpha}$ for $\alpha\in A$ are polytopes. If $\delta_E({}\cdot{}, \alpha)$ is piecewise linear and invertible and $\phi'$ is a polytope (line 5 in \algoref{alg:BFCP-iteration-VI}), then $\{s \in \phi \mid \Theta_s^{\alpha} \cap \phi' \neq \emptyset \}$ is a polytope. 
Therefore, each region in $\Phi_{\textup{pre}}$ is a polytope after every iteration and the operations over polytopes, including intersections, image and preimage computations, directly follow from the computation above.



\begin{examp}\label{eg5-eg} 
We now return to the NS-CSG model, presented in \egref{eg1-eg},  
of a dynamic vehicle parking problem with the perception functions implemented via the linear regression model given in \egref{eg3-eg}. To demonstrate {\revise the practicality}
of our approach we synthesise strategies using a prototype Python implementation of the B-PWC VI algorithm. 
\begin{figure}
    \centering
    \includegraphics[scale=0.35]{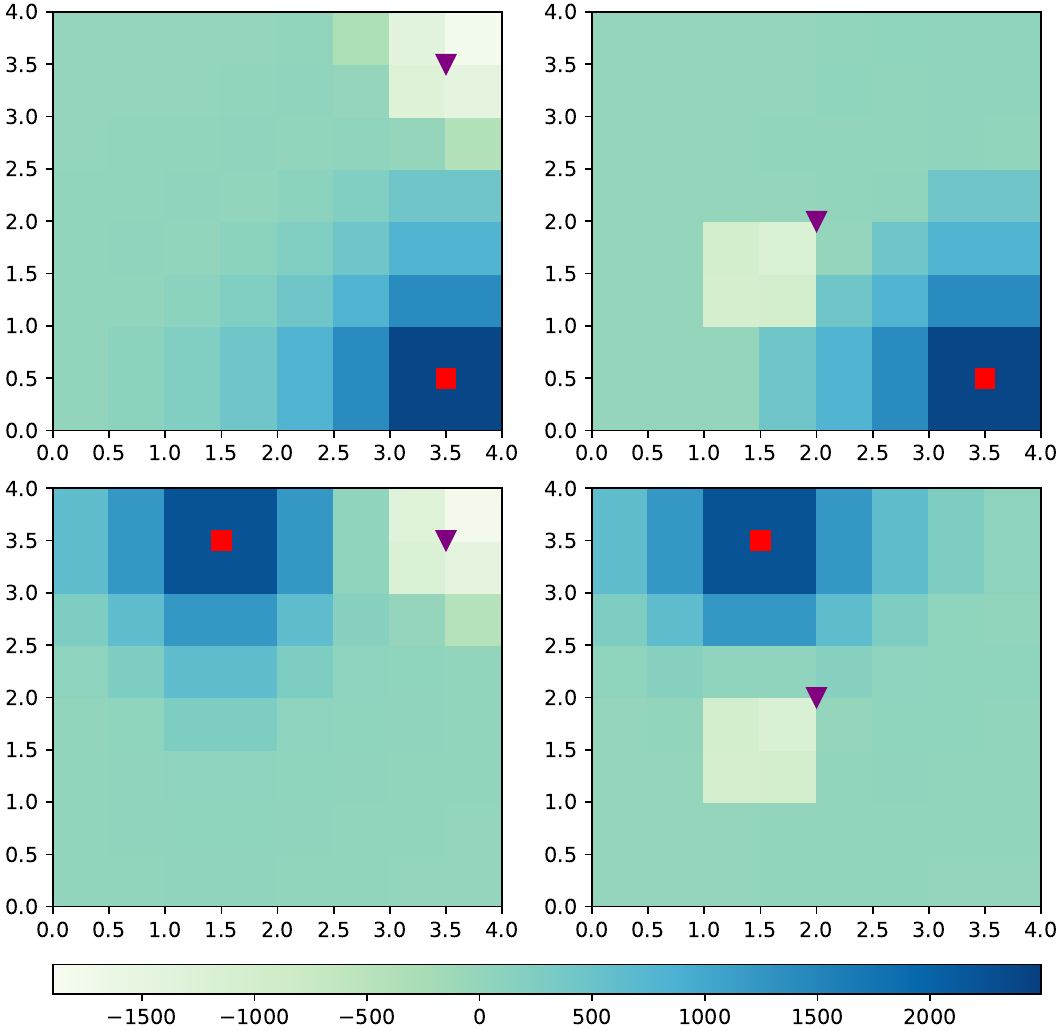}
    \hfil
    \includegraphics[scale=0.35]{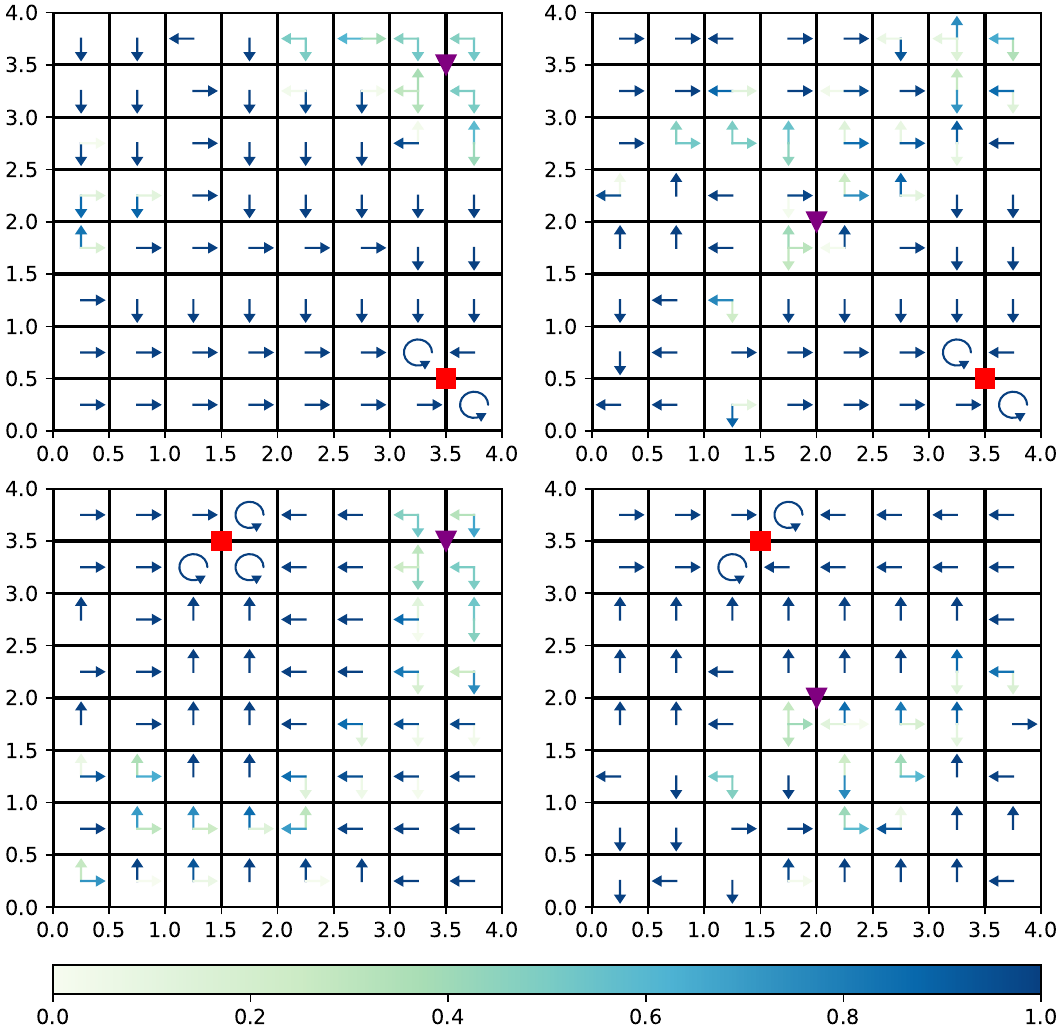}
    \caption{
    Strategy synthesis for \egref{eg5-eg}. Value function (left) and optimal strategy (right) over different coordinates of $\agent_1$ for a fixed local state of $\agent_1$ (red square) and a fixed coordinate of $\agent_2$ (purple triangle).}
    \label{fig:value_function_vehicle}
\end{figure}

The implementation uses a polyhedral representation of regions and the values of the zero-sum normal-form games involved in the minimax operator at step 8 of Algorithm~\ref{alg:PWC-value-VI} are found by solving the corresponding linear program \cite{NMK+44} using the SciPy library~\cite{2020SciPy-NMeth}. We have partitioned the state space of the game into two sets corresponding to the two possible local states of $\agent_1$. The B-PWC VI algorithm converges after $46$ iterations when $\varepsilon = 10^{-6}$ and takes $3,825$s to complete. For each set in the partition of the state space, the BFCP of this set converges to the product of two $8 \times 8$ grids. 
For the currently preferred 
parking spot of $\agent_1$ (red square) and coordinate of $\agent_2$ (purple triangle), the value function with respect to the coordinate of $\agent_1$ is presented in  Fig.~\ref{fig:value_function_vehicle} (left)
and shows that, the closer $\agent_1$ is to its preferred 
parking spot, the higher the (approximate) optimal value. The lightest-colour class is caused by an immediate crash, and its position follows from the observation function. 





An (approximately) optimal strategy for $\agent_1$ is presented in Fig.~\ref{fig:value_function_vehicle} (right), where the colour intensity of an arrow is proportional to the probability of moving in that direction and the rotating arrow represents the parking action.
{\revise 
We see that there are several choices that may not at first appear intuitive. 
As an example, $\agent_1$ moves left when in cell $[1.0, 1.5] \times [3.5, 4.0]$ (top left), which is away from its current chosen parking spot. However, since $\agent_2$ is closer to its current parking spot, keeping away from this parking spot will avoid a crash with $\agent_2$. In addition, under $\agent_1$'s transition function, the chosen parking spot will change with probability $0.5$ at each step, so with high probability $\agent_1$ will be able to park before $\agent_2$ can reach it and a crash can occur.} \hfill$\blacksquare$
\end{examp}

\section{Policy iteration}\label{pi-sect}


It is known that, for MDPs, PI algorithms generally converge faster than VI algorithms, since policy improvement can jump over policies directly~\cite{DB:22}.
Motived by this fact, in this section we show how PI can be used to approximate
the values and optimal strategies of an NS-CSG $\csg$ with respect to a  discounted accumulated reward objective $Y$.
Our algorithm takes ideas from recent work \cite{DB:21}, which proposed a new PI method to solve zero-sum stochastic games with finite state spaces, and is the first PI algorithm for CSGs with Borel state spaces and with a convergence guarantee. Our PI algorithm ensures that the strategies and value functions generated during each iteration never leave a finitely representable class of functions. In addition, when computing values of CSGs, efficiencies are gained over alternative algorithms as there is no need to solve normal-form games, which is required by our B-PWC VI and Pollatschek-Avi-Itzhak's PI algorithm~\cite{MAP-BAI:69}, nor to solve MDPs, which adds complexity to Hoffman-Karp's PI algorithm~\cite{AJH-RMK:66}. This  results in cheaper computations and faster convergence over these alternatives, as for PI over VI for MDPs. 

\subsection{Operators, functions and solutions}
Before presenting the algorithm, the following operators, functions and solutions are proposed. Let $\gamma \in \mathbb{R}$ be a constant such that $\gamma>1$ and $\gamma\beta<1$, which will be used to distribute the discount factor $\beta$ between policy evaluation and policy improvement of the two agents. 

\startpara{Operators for Max-Min and Min-Max} Before introducing operators for Max-Min and Min-Max, we require the notion of a stationary Stackelberg (follower) strategy for $\agent_2$, which is a stochastic kernel $\overline{\sigma}_2:\Lambda_1 \to \mathbb{P}(A_2)$, i.e.,  $\overline{\sigma}_2 \in \mathbb{P}(A_2 \mid \Lambda_1)$ such that $\overline{\sigma}_2(A_2(s) \mid (s,u_1))=1$ for $(s,u_1)\in \Lambda_1$. This strategy is introduced only for the PI algorithm and implies that $\agent_2$ makes decisions conditioned on the current state $s$ and the current choice of $\agent_1$, i.e.\ action distribution $u_1$, and thus allows us to split the maximum and minimum operations of the two agents. 
We denote by $\overline{\Sigma}_2$ the set of all stationary Stackelberg strategies for $\agent_2$.

\begin{defi}[Operator for the Max-Min value]\label{def:operator-H1-definition}
For strategy $\sigma_1 \in \Sigma_1$ of $\agent_1$ and
function $V_2 \in \mathbb{F}(\Lambda_1)$, we define the operator $H_{\sigma_1,V_2}^1 : \mathbb{F}(\Lambda_1) \rightarrow \mathbb{F}(S)$ such that for $J_2 \in \mathbb{F}(\Lambda_1)$ and $s \in S$: 
\begin{align*}
    [H_{\sigma_1,V_2}^1J_2](s) & = \; \gamma^{-1}\min\{J_2(s,\sigma_1(s)), V_2(s, \sigma_1(s))\}  \\
    & = \; \gamma^{-1}\min\{J_2(s,u_1), V_2(s, u_1)\}
\end{align*}
where $\sigma_1(s)=u_1 \in \mathbb{P}(A_1(s))$. 
\end{defi}
\begin{defi}[Operator for the Min-Max value]\label{def:operator-H2-definition}
For Stackelberg (follower) strategy $\overline{\sigma}_2 \in \overline{\Sigma}_2$ of $\agent_2$ and function $V_1 \in \mathbb{F}(S)$, we define the operator $H^2_{\overline{\sigma}_2,V_1} : \mathbb{F}(S) \rightarrow \mathbb{F}(\Lambda_1)$ such that for $J_1 \in \mathbb{F}(S)$ and $(s, u_1) \in \Lambda_1$:
\begin{align*}
   [H^2_{\overline{\sigma}_2,V_1}J_1](s,u_1) & = \mbox{$\sum\nolimits_{(a_1,a_2) \in A(s)}$}  Q(s,(a_1,a_2),\gamma \max[J_1, V_1])u_1(a_1)\overline{\sigma}_2(a_2 | (s,u_1)) \\ 
   & = \mbox{$\sum\nolimits_{(a_1,a_2) \in A(s)}$} Q(s,(a_1,a_2),\gamma\max[J_1, V_1])u_1(a_1)u_2(a_2)
\end{align*}
where $\overline{\sigma}_2(\; \cdot \mid (s,u_1)) = u_2 \in \mathbb{P}(A_2(s))$. 
\end{defi}
%
%
Unlike the classical PI algorithms by Hoffman and Karp~\cite{AJH-RMK:66} and Pollatschek and Avi-Itzhak \cite{MAP-BAI:69}, following~\cite{DB:21},
our PI algorithm separates the policy evaluation and policy improvement of the maximiser ($\agent_1$) and the minimiser ($\agent_2$) through the use of the operators of \defiref{def:operator-H1-definition} and \defiref{def:operator-H2-definition}, respectively.
To track the value functions after performing policy evaluation of $\agent_1$ and $\agent_2$, 
our PI algorithm introduces value functions $J_1$ and $J_2$. 
In addition, the value functions $V_1$ and $V_2$ are introduced to avoid the oscillatory behavior of the Pollatschek and Avi-Itzhak PI algorithm~\cite{MAP-BAI:69}, thus ensuring convergence, and are updated only during policy improvement. The role of $\gamma$ is to split the discount factor $\beta$ such that all the operators corresponding to policy evaluation and policy improvement of the two agents are contraction mappings, which then ensures convergence. 

\startpara{Two function representations} We next define two classes of functions, which play a key role in characterizing the functions and strategies generated during each iteration of our PI algorithm. 
\begin{defi}[CON-PWL Borel measurable function]\label{defi:s-CON-u1-PWL}
A function $f \in \mathbb{F}(\Lambda_1)$ is a \emph{constant-piecewise-linear (CON-PWL) Borel measurable function} if there exists a BFCP $\Phi$ of $S$ such that, for each $\phi \in \Phi$, $A_1(s) = A_1(s')$ for $s, s' \in \phi$, and 
$\Phi$ generates $\Theta = \{ \theta(\phi) \mid \phi \in \Phi \} $ where $\theta(\phi)=\{(s,u_1) \in \Lambda_1 \mid s \in \phi\}$, a BFCP of $\Lambda_1$, such that for $\theta(\phi) \in \Theta$:
\begin{enumerate}[(i)]
    \item $f({}\cdot{},u_1): \phi \to \mathbb{R}$ is constant for $u_1 \in \mathbb{P}(A_1(s))$ where $s \in \phi$;
    \item $f(s,{}\cdot{}) : \mathbb{P}(A_1(s)) \to \mathbb{R}$ is B-PWL for $s \in \phi$.
\end{enumerate}
\end{defi}

\begin{defi}[CON-PWC stochastic kernel]\label{defi:s-CON-u1-PWC}
A function $f \in \overline{\Sigma}_2$ is a \emph{constant-piecewise-constant (CON-PWC) stochastic kernel} if there exists a BFCP $\Phi$ of $S$ such that, for each $\phi \in \Phi$, $A(s) = A(s')$ for $s, s' \in \phi$, and 
$\Phi$ generates $\Theta=\{ \theta(\phi) \mid \phi \in \Phi \}$ where $\theta(\phi)=\{(s,u_1) \in \Lambda_1 \mid s \in \phi \}$, a BFCP of $\Lambda_1$, such that for $\theta(\phi) \in \Theta$:
\begin{enumerate}[(i)]
    \item $f({}\cdot{},u_1): \phi \to \mathbb{P}(A_2(s))$ is constant for $u_1 \in \mathbb{P}(A_1(s))$ where $s \in \phi$;
    \item $f(s,{}\cdot{}) : \mathbb{P}(A_1(s)) \to \mathbb{P}(A_2(s))$ is B-PWC for $s \in \phi$.
\end{enumerate}
\end{defi}
\begin{figure}[t]
    \centering
    \includegraphics[width=45mm,height=35mm]{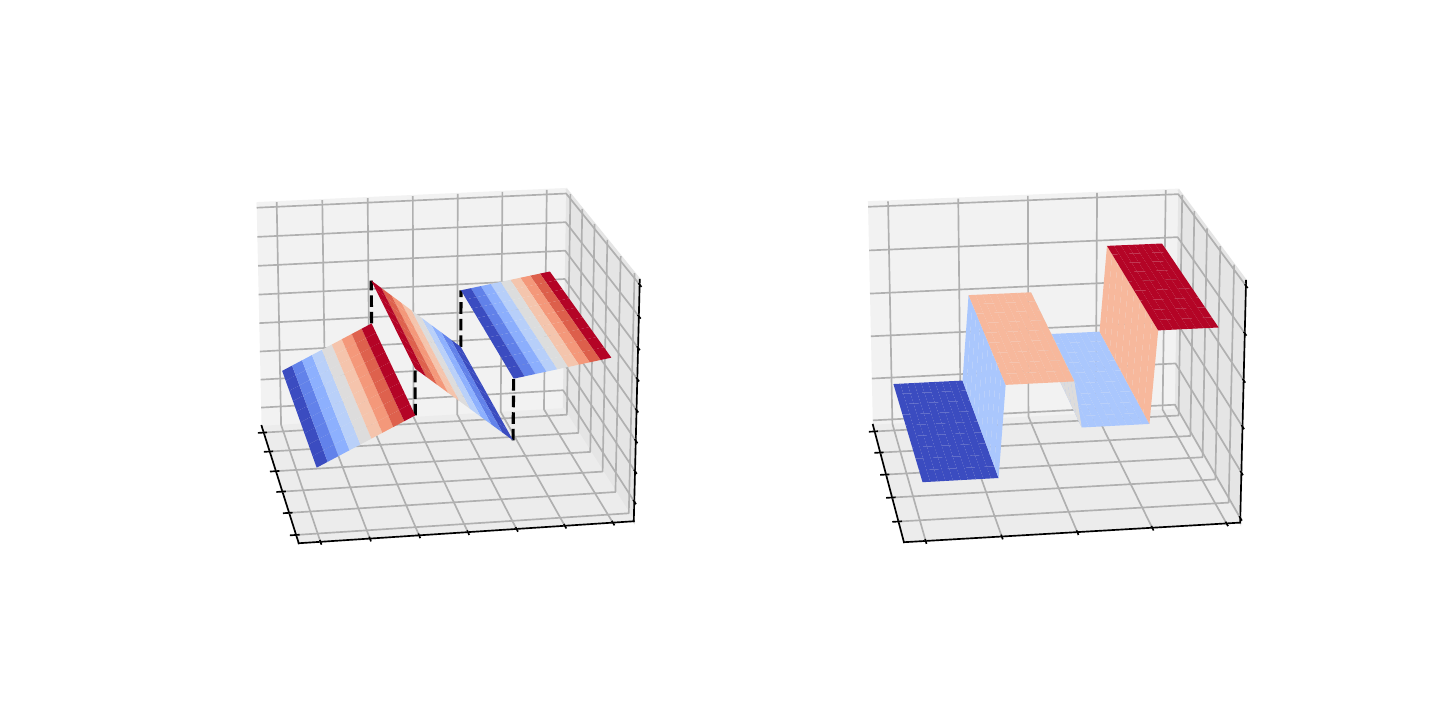} \hfil
    \includegraphics[width=45mm,height=35mm]{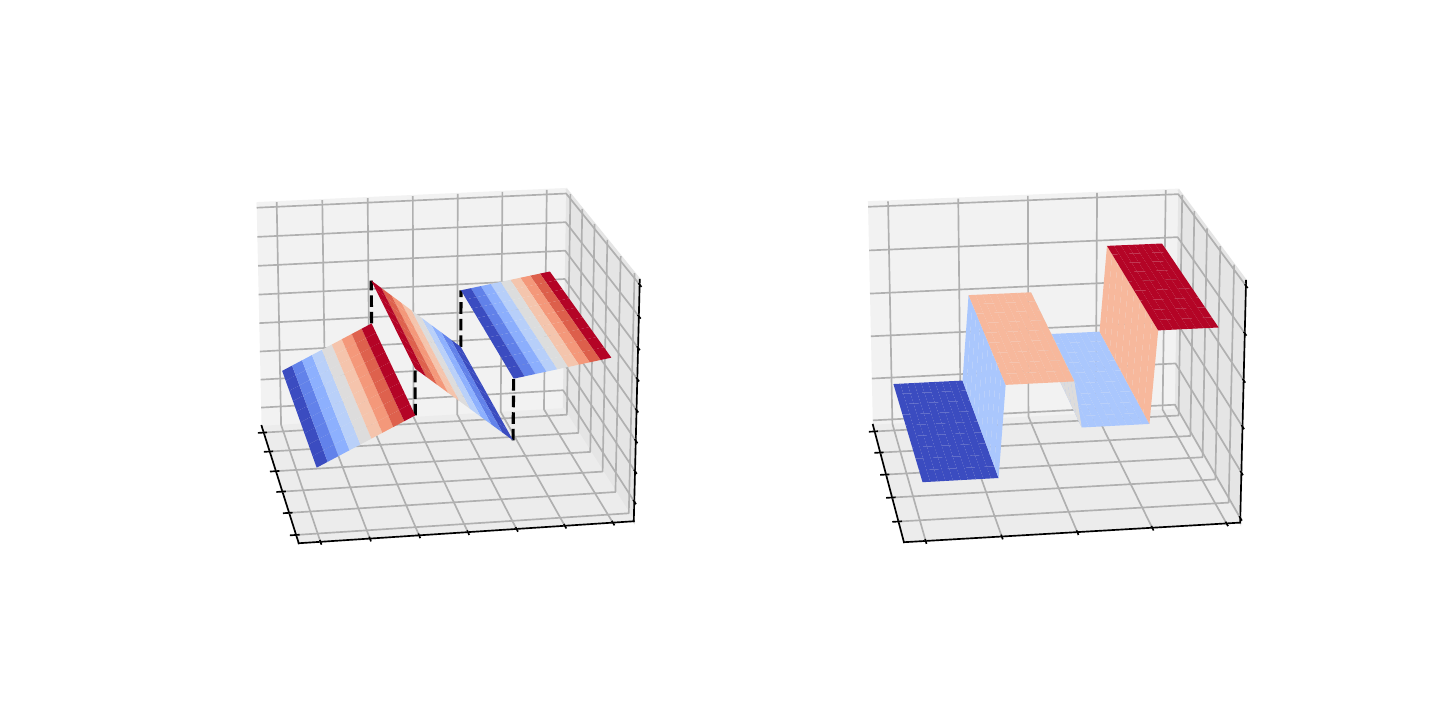}
    \put(-232, -6){$u_1$}
    \put(-305, 14){$s$}
    \put(-295, 81){$f_1(s,u_1)$}
    \put(-58, -6){$u_1$}
    \put(-131, 14){$s$}
    \put(-119, 81){$f_2(s,u_1)$}
    \caption{Two functions over one region of a BFCP of $\Lambda_1$. Constant-piecewise-linear (CON-PWL) Borel measurable function (left): given $u_1$, $f_1(s, u_1)$ is constant in $s$, and given $s$, $f_1(s, u_1)$ is B-PWL in $u_1$. Constant-piecewise-constant (CON-PWC) stochastic kernel (right): given $u_1$, $f_2(s, u_1)$ is constant in $s$, and given $s$, $f_2(s, u_1)$ is B-PWC in $u_1$.}
    \label{fig:CON-PWC-PWL}
\end{figure}
Fig. \ref{fig:CON-PWC-PWL} presents an example of a CON-PWL Borel measurable function and CON-PWC stochastic kernel over a region. We now show that these two functions can be represented by finite sets of vectors.
Each CON-PWL Borel measurable function $f$ can be represented by a finite set of vectors $\{(D_{\phi, \phi'}, b_{\phi, \phi'}) \in \mathbb{R}^{|A_1|} \times \mathbb{R} \mid \phi \in \Phi \wedge \phi' \in \Phi'(\phi)\}$ such that $f(s, u_1) = D_{\phi, \phi'}^\top u_1 + b_{\phi, \phi'}$ for $s \in \phi$ and $u_1 \in \phi'$, where $\Phi$ is a BFCP of $S$ for $f$ using \defiref{defi:s-CON-u1-PWL} and $\Phi'(\phi)$ is a BFCP of $\{ u_1 \in  \mathbb{P}(A_1) \mid (s, u_1) \in \theta(\phi)\} $, and $\theta(\phi) \in \Theta$ again using \defiref{defi:s-CON-u1-PWL} is such that, over each region $\phi' \in \Phi'(\phi)$, $f(s, u_1)$ is linear in $u_1$ given $s \in \phi$. 
Similarly using \defiref{defi:s-CON-u1-PWC}, each CON-PWC stochastic kernel $f$ can be represented by a finite set of vectors $\{ D_{\phi, \phi'} \in \mathbb{P}(A_2) \mid \phi \in \Phi \wedge\phi' \in \Phi'(\phi)\}$ such that $f(s, u_1) = D_{\phi, \phi'}$ for $s \in \phi$ and $u_1 \in \phi'$,
where $\Phi$ is a BFCP of $S$ for $f$ using \defiref{defi:s-CON-u1-PWC}, $\Phi'(\phi)$ is a BFCP of $\{ u_1 \in  \mathbb{P}(A_1) \mid (s, u_1) \in \theta(\phi)\} $, $\theta(\phi) \in \Theta$ using \defiref{defi:s-CON-u1-PWC} is such that, over each region $\phi' \in \Phi'(\phi)$, $f(s, u_1)$ is constant in $u_1$ given $s \in \phi$.

\startpara{Maximum or minimum solutions}
We introduce a criterion for selecting the maximum or minimum solution over a region, by which the strategies from policy improvement are finitely representable.

\begin{defi}[CON-$1$ solution]\label{defi:con1-solution} Let $f \in \mathbb{F}(\Lambda_1)$ be a CON-PWL Borel measurable function. Using \defiref{defi:s-CON-u1-PWL} there exists a BFCP $\Phi$  of $S$ for $f$. Now, for each $\phi \in \Phi$, if there exists $u_1^{\phi} \in \mathbb{P}(A_1(s))$ such that:
\[
    f(s,u_1^{\phi}) = \max\nolimits_{u_1 \in \mathbb{P}(A_1(s))} f(s,u_1)
\]
for $s\in \phi$, and $\sigma_1$ is a strategy of $\agent_1$ such that $\sigma_1(s) = u_1^{\phi}$ for $s\in \phi$, then $\sigma_1$ is a \emph{constant-$1$ (CON-$1$) solution} of $f$ over $\phi$. 
\end{defi}

\begin{defi}[CON-$2$ solution]\label{defi:con2-solution} Let $f \in \mathbb{F}(\Lambda_{12})$ be a Borel measurable function. If there exists a BFCP $\Theta$ of $\Lambda_1$ where, for each $\theta \in \Theta$, $A_2(s)$ is constant for $(s, u_1) \in \theta$ and there exists $u_2^{\theta} \in \mathbb{P}(A_2(s))$ such that:
\[
f(s,u_1,u_2^{\theta}) = \min\nolimits_{u_2 \in \mathbb{P}(A_2(s))} f(s,u_1,u_2)
\]
for $(s,u_1) \in \theta$, and $\overline{\sigma}_2$ is a Stackelberg strategy for $\agent_2$ such that $\overline{\sigma}_2(s,u_1) = u_2^{\theta}$ for $(s,u_1) \in \theta$, then $\overline{\sigma}_2$ is a \emph{constant-$2$ (CON-$2$) solution} of $f$ over $\theta$.
\end{defi}

\subsection{Minimax-action-free PI}
{\revise We now use the operators for the Max-Min and Min-Max values (Definitions \ref{def:operator-H1-definition} and \ref{def:operator-H2-definition})} 
to derive a PI algorithm called \emph{Minimax-action-free PI} (\algoref{alg:maxmin-free-PI}) for strategy synthesis for NS-CSGs with Borel state spaces. Our algorithm closely follows the PI method of \cite{DB:21} for finite state spaces, but has to resolve a number of issues due to the uncountability of the underlying state space and the need to ensure Borel measurability at each iteration. To overcome these issues we
(i) introduce CON-PWL Borel measurable functions (\defiref{defi:s-CON-u1-PWL}) and CON-PWC Borel measurable strategies (\defiref{defi:s-CON-u1-PWC}) to ensure measurability and finite representability;
(ii) work with CON-$1$ and CON-$2$ solutions (\defiref{defi:con1-solution} and (\defiref{defi:con2-solution}) for policy improvement to ensure that the strategies generated are finitely representable and consistent; and  (iii) propose a BFCP iteration algorithm (\algoref{alg:BFCP-iteration-PI}) and a BFCP-based computation algorithm 
(\algoref{alg:BFCP-based-computation-PI}) to compute a new BFCP of the state space and the values or strategies over this BFCP.
We also provide a simpler proof than that presented in~\cite{DB:21}, which does not require the introduction of any new concepts except those used in the algorithm.

\begin{algorithm}[t]
\caption{Iteration $t$ of Minimax-action-free PI}\label{alg:maxmin-free-PI}
\begin{algorithmic}[1]
\State{\textbf{Input:}  NS-CSG $\csg$, PWC $\sigma_1^t \in \Sigma_1$, CON-PWC $\overline{\sigma}_2^t \in \overline{\Sigma}_2$, PWC $J_1^t, V_1^t\in \mathbb{F}(S)$, CON-PWL $J_2^t,V_2^t \in \mathbb{F}(\Lambda_1)$} 
\State {\bf Perform one of the following four iterations.} 
\State {\quad Policy evaluation of $\agent_1$:}
\State {\qquad $J_1^{t+1} \leftarrow [H_{\sigma_1^t,V_2^t}^1J_2^t]$ via $\mathit{PE1}$, $\sigma_1^{t+1} \leftarrow \sigma_1^t$, }
\State {\qquad $V_1^{t+1} \leftarrow V_1^t$, $\overline{\sigma}_2^{t+1} \leftarrow \overline{\sigma}_2^{t}$, $J_2^{t+1} \leftarrow J_2^t$, $V_2^{t+1} \leftarrow V_2^t$}
\State {\quad Policy improvement of $\agent_1$ by CON-$1$ solution:}
\State {\qquad $\sigma_1^{t+1}(s) \in \argmax_{u_1 \in \mathbb{P}(A_1(s))} [H_{u_1,V_2^t}^1J^t_2](s)$,}
\State {\qquad $V_1^{t+1} \leftarrow [H_{\sigma_1^{t+1},V_2^t}^1J_2^t]$ via $\mathit{PI1}$,}
\State {\qquad $J_1^{t+1} \leftarrow J_1^t$,
$\overline{\sigma}_2^{t+1} \leftarrow \overline{\sigma}_2^{t}$, $J_2^{t+1} \leftarrow J_2^t$, $V_2^{t+1} \leftarrow V_2^t$}
\State {\quad Policy evaluation of $\agent_2$:}
\State {\qquad $J_2^{t+1} \leftarrow [H_{\overline{\sigma}_2^t,V_1^t}^2J_1^t]$ via $\mathit{PE2}$, $\sigma_1^{t+1} \leftarrow \sigma_1^t$,}
\State {\qquad $J_1^{t+1} \leftarrow J_1^t$, $V_1^{t+1} \leftarrow V_1^t$, $\overline{\sigma}_2^{t+1} \leftarrow \overline{\sigma}_2^{t}$, $V_2^{t+1} \leftarrow V_2^t$}
\State {\quad Policy improvement of $\agent_2$ by CON-$2$ solution:}
\State {\qquad $\overline{\sigma}_2^{t+1}(s,u_1) \in \argmin\nolimits_{u_2 \in \mathbb{P}(A_2(s))} [H_{u_2,V_1^t}^2J_1^t](s,u_1)$,}
\State {\qquad $V_2^{t+1} \leftarrow [H_{\overline{\sigma}_2^{t+1},V_1^t}^2J_1^t]$ via $\mathit{PI2}$,}
\State {\qquad $\sigma_1^{t+1} \leftarrow \sigma_1^t$,
$J_1^{t+1} \leftarrow J_1^t$, $V_1^{t+1} \leftarrow V_1^t$, $J_2^{t+1} \leftarrow J_2^t$}
\State {$t \leftarrow t+1$}
\end{algorithmic}
\end{algorithm}

\begin{algorithm}
\caption{BFCP iteration $t$ for Minimax-action-free PI}\label{alg:BFCP-iteration-PI}
\begin{algorithmic}[1]
\State{\textbf{Input:} Perception FCP $\Phi_{P}$, reward FCPs $(\Phi_R^{\alpha} )_{\alpha \in A}$}
\State{\textbf{Output:} BFCPs $\langle \Phi_{J_1^{t}}, \Phi_{V_1^{t}}, \Phi_{\sigma_1^{t}}, \Theta_{J_2^{t}}, \Theta_{V_2^{t}}, \Theta_{\overline{\sigma}_2^{t}}\rangle_{t \in \mathbb{N}}$ 

\qquad \  \ for $\langle J_1^t, V_1^t, \sigma_1^t, J_2^{t}, V_2^{t}, \overline{\sigma}_2^{t} \rangle_{t \in \mathbb{N}}$
}
\State {$\Phi_{J_1^0}, \Phi_{V_1^0}, \Phi_{\sigma_1^0} \leftarrow \{S\}$, \ $\Theta_{J_2^0}, \Theta_{V_2^0}, \Theta_{\overline{\sigma}_2^0} \leftarrow \{\Lambda_1\}$}

\While{\algoref{alg:maxmin-free-PI} performs iteration $t$}
\If{policy evaluation of $\agent_1$ is chosen}
\State {$\mathit{Preprocess\_maximiser}()$,}

\State {$\Phi_{J_1^{t+1}} \leftarrow \Phi_{\sigma_1^t} + \Phi_{J_2^t} + \Phi_{V_2^t}$, \ $\Phi_{V_1^{t+1}} \leftarrow \Phi_{V_1^{t}}$, \ $\Phi_{\sigma_1^{t+1}} \leftarrow \Phi_{\sigma_1^t}$}
\EndIf

\If{policy improvement of $\agent_1$ is chosen
}
\State {$\mathit{Preprocess\_maximiser}()$,}

\State {$\Phi_{\sigma_1^{t+1}} \leftarrow \Phi_{J_2^t} + \Phi_{V_2^t} $, \ $\Phi_{V_1^{t+1}} \leftarrow \Phi_{\sigma_1^{t+1}}$, \ $\Phi_{J_1^{t+1}} \leftarrow \Phi_{J_1^{t}}$}
\EndIf

\If{policy evaluation of $\agent_2$ is chosen}
\State {$\mathit{Preprocess\_minimiser}()$,}

\State{$\Phi_{\overline{\sigma}_2^t} \leftarrow \big\{\{ s \mid (s, u_1) \in \theta\} \mid \theta \in \Theta_{\overline{\sigma}_2^t} \big\}$},

\State {$\Theta_{J_2^{t+1}} \leftarrow \big\{\{(s, u_1) \in \Lambda_1 \mid s \in \phi \} \mid \phi \in \Phi_{\hat{Q}^{t+1}} + \Phi_{\overline{\sigma}_2^t}\big\}$,} 

\State {$\Theta_{V_2^{t+1}} \leftarrow \Theta_{V_2^{t}}$, \ $\Theta_{\overline{\sigma}_2^{t+1}} \leftarrow \Theta_{\overline{\sigma}_2^{t}}$}
\EndIf

\If{policy improvement of $\agent_2$ is chosen}
\State {$\mathit{Preprocess\_minimiser}()$,}

\State {$\Theta_{\overline{\sigma}_2^{t+1}} \leftarrow \big\{\{(s, u_1) \in \Lambda_1 \mid s \in \phi \} \mid \phi \in \Phi_{\hat{Q}^{t+1}} \big\}$,}

\State {$\Theta_{V_2^{t+1}} \leftarrow \Theta_{\overline{\sigma}_2^{t+1}}$, \ $\Theta_{J_2^{t+1}} \leftarrow \Theta_{J_2^{t}}$}
\EndIf
\EndWhile
\State \Return $\langle \Phi_{J_1^{t}}, \Phi_{V_1^{t}}, \Phi_{\sigma_1^{t}}, \Theta_{J_2^{t}}, \Theta_{V_2^{t}}, \Theta_{\overline{\sigma}_2^{t}} \rangle_{t \in \mathbb{N}}$ 

\\
\Procedure{$\mathit{Preprocess\_maximiser}()$}{}
\State {$\Theta_{J_2^{t+1}} \leftarrow \Theta_{J_2^t}$, \ $\Theta_{V_2^{t+1}} \leftarrow \Theta_{V_2^t}$, \ $\Phi_{J_2^t} \leftarrow \big\{\{ s \mid (s, u_1) \in \theta\} \mid \theta \in \Theta_{J_2^t} \big\}$, }

\State {$\Phi_{V_2^t} \leftarrow \big\{\{ s \mid (s, u_1) \in \theta\} \mid \theta \in \Theta_{V_2^t} \big\}$, \ $\Theta_{\overline{\sigma}_2^{t+1}} \leftarrow \Theta_{\overline{\sigma}_2^t}$}
\EndProcedure

\Procedure{$\mathit{Preprocess\_minimiser}()$}{}
\State {$\Phi_{J_1^{t+1}} \leftarrow \Phi_{J_1^{t}}$, \ $\Phi_{V_1^{t+1}} \leftarrow \Phi_{V_1^{t}}$, \ $\Phi_{\sigma_1^{t+1}} \leftarrow \Phi_{\sigma_1^t}$,}

\State {$\Phi_{\hat{Q}^{t+1}} \leftarrow \mathit{Preimage\_BFCP}(\Phi_{J_1^t} + \Phi_{V_1^t}, \Phi_{P}, ( \Phi_R^{\alpha} )_{\alpha \in A})$}
\EndProcedure
\end{algorithmic}
\end{algorithm}
\begin{algorithm}
\caption{BFCP based computation for Minimax-action-free PI}\label{alg:BFCP-based-computation-PI}
\begin{algorithmic}[1]
\State{\textbf{Input:} $J_1^t, V_1^t, \sigma_1^t, J_2^{t}, V_2^{t}, \overline{\sigma}_2^{t}, \Phi_{J_1^{t+1}}, \Phi_{\sigma_1^{t+1}}, \Theta_{J_2^{t+1}}, \Theta_{\overline{\sigma}_2^{t+1}}$}

\Procedure{$\mathit{PE1}$}{}
    \For{$\phi \in \Phi_{J_1^{t+1}}$}
    \State {Take one state $s \in \phi$, and then $J_{1, \phi}^{t+1} \leftarrow [H_{\sigma_1^t,V_2^t}^1J_2^t](s)$}
    \EndFor
    \State \Return $J_1^{t+1} \leftarrow ( J_{1, \phi}^{t+1})_{\phi \in \Phi_{J_{\scaleobj{.75}{1}}^{\scaleobj{.75}{t+1}}}}$
\EndProcedure

\Procedure{$\mathit{PI1}$}{}
    \For{$\phi \in \Phi_{\sigma_1^{t+1}}$}
    \State {Take $s \in \phi$, and then $u_1 \in \argmax_{u_1 \in \mathbb{P}(A_1(s))} [H_{u_1,V_2^t}^1J^t_2](s)$}
    
    \State {$\sigma_{1, \phi}^{t+1} \leftarrow u_1$, \ $V_{1, \phi}^{t+1} \leftarrow \max_{u_1 \in \mathbb{P}(A_1(s))} [H_{u_1,V_2^t}^1J^t_2](s)$}
    \EndFor
    \State \Return {$\sigma_1^{t+1}  \leftarrow ( \sigma_{1, \phi}^{t+1})_{\phi \in \Phi_{\sigma_{\scaleobj{.75}{1}}^{\scaleobj{.75}{t+1}}}}$, \ $V_1^{t+1} \leftarrow  ( V_{1, \phi}^{t+1})_{\phi \in \Phi_{\sigma_{\scaleobj{.75}{1}}^{\scaleobj{.75}{t+1}}}}$}
\EndProcedure

\Procedure{$\mathit{PE2}$}{}
\For{$\theta \in \Theta_{J_2^{t+1}}$}
    \State {$\phi \leftarrow \{ s \mid (s, u_1) \in \theta \}$}
    
    \State \multiline{Take $s \in \phi$, and then compute a BFCP $\Phi_{u}$ of $\mathbb{P}(A_1(s))$ such that over $\phi_u \in \Phi_u$, $[H_{\overline{\sigma}_2^t,V_1^t}^2J_1^t](s, u_1)$ is linear in $u_1$}
    
    \State {$J_{2, \theta}^{t+1} \leftarrow [H_{\overline{\sigma}_2^t,V_1^t}^2J_1^t](s, u_1)$ is linear in $u_1$}
    \EndFor
    \State \Return {$J_2^{t+1} \leftarrow ( J_{2, \theta}^{t+1})_{\theta \in \Theta_{J_{\scaleobj{.75}{2}}^{\scaleobj{.75}{t+1}}}}$}
\EndProcedure

\Procedure{$\mathit{PI2}$}{}
\For{$\theta \in \Theta_{\overline{\sigma}_2^{t+1}}$}
    \State {$\phi \leftarrow \{ s \mid (s, u_1) \in \theta \}$}
    
    \State \multiline{Take $s' \in \phi$, and then compute a BFCP $\Phi_{u}$ of $\mathbb{P}(A_1(s'))$ such that over $\phi_u \in \Phi_u$, $
    \min\limits_{u_2 \in \mathbb{P}(A_2(s'))}[H^2_{u_2,V_1^t}J^t_1](s',u_1) $
    is constant for $u_1 \in \phi_u$}
    
    \State {Take $u_1' \in \phi_u$ and $u_2' \in \argmin\limits _{u_2 \in \mathbb{P}(A_2(s'))} [H^2_{u_2,V_1^t}J^t_1](s',u_1')$ for $\phi_u \in \Phi_u$}
    
    \State {$\overline{\sigma}_{2, \theta}^{t+1} \leftarrow u_2'$, \ $V_{2, \theta}^{t+1} \leftarrow [H^2_{u_2',V_1^t}J^t_1](s',u_1)$ is linear in $u_1$}
    \EndFor
    \State \Return {$\overline{\sigma}_2^{t+1} \leftarrow ( \overline{\sigma}_{2, \theta}^{t+1})_{\theta \in \Theta_{\overline{\sigma}_{\scaleobj{.75}{2}}^{\scaleobj{.75}{t+1}}}}$, $V_2^{t+1} \leftarrow ( V_{2, \theta}^{t+1})_{\theta \in \Theta_{\overline{\sigma}_{\scaleobj{.75}{2}}^{\scaleobj{.75}{t+1}}}}$ }
\EndProcedure
\end{algorithmic}
\end{algorithm}

\startpara{Initialization} The Minimax-action-free PI algorithm is initialized with strategies $\sigma_1^0$ and $\overline{\sigma}_2^0$ for each player, which are uniform distributions over available actions/state-action pairs, i.e., $\sigma_1^0(s) = \frac{1}{|A_1(s)|}$ for all $s \in S$ and $\overline{\sigma}_2^0(s, u_1) = \frac{1}{|A_2(s)|}$ for all $(s, u_1) \in \Lambda_1$, and four $0$-valued functions, $J_1^0$, $V_1^0$, $J_2^0$ $V_2^0$, i.e., $J_1^0(s) = V_1^0(s) = 0$ for all $s \in S$ and $J_2^0(s, u_1) = V_2^0(s, u_1) = 0$ for all $(s, u_1) \in \Lambda_1$, 
and \algoref{alg:BFCP-iteration-PI} gives one BFCP for each strategy and function,  

\startpara{The algorithm} An iteration of the Minimax-action-free PI is given in \algoref{alg:maxmin-free-PI}.
As shown later, the order and frequency by which the possible four iterations of \algoref{alg:maxmin-free-PI} are run do not affect the convergence, as long as each is performed infinitely often.
This permits an asynchronous implementation of the Minimax-action-free PI algorithm, as discussed in \cite{DB:21} and for its single-agent counterparts in~\cite{BDP-YH:12}. 

For each of the four iterations, \algoref{alg:BFCP-iteration-PI} provides a way to compute new BFCPs and the results below demonstrate that, over each region of these BFCPs, the corresponding computed strategies and value functions are either constant, PWC or PWL. Therefore, we can follow similar steps to our VI algorithm (see \algoref{alg:PWC-value-VI}) to compute the value functions of these new strategies and value functions (see \algoref{alg:BFCP-based-computation-PI}). The idea is to first compute the BFCPs $\Phi_{J_1^{t + 1}}$, $\Phi_{V_1^{t + 1}}$, $\Phi_{\sigma_1^{t + 1}}$, $\Theta_{J_2^{t + 1}}$, $\Theta_{V_2^{t + 1}}$ and $\Theta_{\overline{\sigma}_2^{t + 1}}$ via \algoref{alg:BFCP-iteration-PI} and then use them to compute strategies and value functions using \algoref{alg:BFCP-based-computation-PI}. For instance, if policy improvement of $\agent_2$ is chosen at iteration $t \in \mathbb{N}$ then we proceed as follows. First, new BFCPs are computed via \algoref{alg:BFCP-iteration-PI}. Second, procedure $\mathit{PI2}$ of \algoref{alg:BFCP-based-computation-PI} is performed. In this step we take each region $\theta \in \Theta_{\overline{\sigma}_2^{t+1}}$, let $\phi = \{ s \mid (s, u_1) \in \theta \}$, then take one state $s' \in \phi$, and compute a BFCP $\Phi_{u}$ of $\mathbb{P}(A_1(s'))$ such that $\min_{u_2 \in \mathbb{P}(A_2(s'))}[H^2_{u_2,V_1^t}J^t_1](s',u_1)$ is constant over $\phi_u \in \Phi_u$ and for $u_1 \in \phi_u$. Third, take one $u_1' \in \phi_u$ and find $u_2' \in \mathbb{P}(A_2(s'))$ that minimises $[H^2_{u_2,V_1^t}J^t_1](s',u_1')$. Fourth, we let $\overline{\sigma}_2^{t+1}(s, u_1) = u_2'$ for $s \in \phi$ and $u_1 \in \phi_u$, which is a CON-$2$ solution of $[H^2_{u_2,V_1^t}J^t_1](s,u_1)$ over $\{(s, u_1) \mid s \in \phi \wedge u_1 \in \phi_u\}$ by \lemaref{lema:improvement-consistency-agent-2} and $V_2^{t+1}(s, u_1)$ is CON-linear in $s \in \phi$ and $u_1 \in \phi_u$. Finally, we copy the other strategies and value functions for the next iteration.

\startpara{Representation closures} The following lemmas show the strategies and value functions generated during each iteration of the Minimax-action-free PI algorithm are closed under B-PWC, CON-PWL and CON-PWC functions, and are thus finitely representable.

\begin{lema}[Evaluation closure for $\agent_1$]\label{lema:evaluation-consistency-agent-1}
If $\sigma_1^t \in \Sigma_1$ is a PWC stochastic kernel,  
$J_2^t,V_2^t \in \mathbb{F}(\Lambda_1)$ are CON-PWL Borel measurable and policy evaluation of $\agent_1$ is performed (procedure $\mathit{PE1}$), then $J_1^{t+1}=[H_{\sigma_1^t,V_2^t}^1J_2^t]$ is B-PWC.
\end{lema}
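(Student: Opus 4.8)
The plan is to unpack the definition of the operator $H^1$ (\defiref{def:operator-H1-definition}) and to observe that, along a PWC strategy, each CON-PWL function collapses to a PWC function of the state alone. First I would use the hypotheses to fix witnessing BFCPs: since $\sigma_1^t$ is PWC there is a BFCP $\Phi_{\sigma_1}$ of $S$ on which $\sigma_1^t$ is constant, and since $J_2^t$ and $V_2^t$ are CON-PWL there are BFCPs $\Phi_{J_2}$ and $\Phi_{V_2}$ of $S$ as in \defiref{defi:s-CON-u1-PWL}. Taking the common refinement $\Phi \coloneqq \Phi_{\sigma_1} + \Phi_{J_2} + \Phi_{V_2}$ (a BFCP of $S$, since the sum of BFCPs is a BFCP that refines all summands), and if necessary refining further by $\Phi_P$ so that $A_1(s)$ is constant on each region, we obtain a single BFCP on which all three objects are simultaneously well behaved.

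Next I would evaluate the operator region by region. Fix $\phi \in \Phi$ and let $u_1^\phi \in \mathbb{P}(A_1(s))$ be the constant value of $\sigma_1^t$ on $\phi$ (well defined because $A_1(s)$ is constant on $\phi$). By part (i) of \defiref{defi:s-CON-u1-PWL}, both $J_2^t({}\cdot{}, u_1^\phi)$ and $V_2^t({}\cdot{}, u_1^\phi)$ are constant on $\phi$; call these constants $c_{J_2}^\phi$ and $c_{V_2}^\phi$. Then, by \defiref{def:operator-H1-definition}, for every $s \in \phi$ we have $[H_{\sigma_1^t,V_2^t}^1J_2^t](s) = \gamma^{-1}\min\{J_2^t(s,\sigma_1^t(s)), V_2^t(s,\sigma_1^t(s))\} = \gamma^{-1}\min\{c_{J_2}^\phi, c_{V_2}^\phi\}$, which is independent of $s \in \phi$. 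Hence $J_1^{t+1}$ is constant on every region of the BFCP $\Phi$, so it is PWC with constant-BFCP $\Phi$; it is bounded since $\|J_1^{t+1}\| \le \gamma^{-1}\max\{\|J_2^t\|, \|V_2^t\|\} < \infty$, and it is Borel measurable as a finite linear combination of indicator functions of the Borel regions of $\Phi$. Therefore $J_1^{t+1} \in \mathbb{F}(S)$ is B-PWC. Finally, I would note that procedure $\mathit{PE1}$ of \algoref{alg:BFCP-based-computation-PI} computes $J_1^{t+1}$ over exactly the BFCP $\Phi_{J_1^{t+1}} = \Phi_{\sigma_1^t} + \Phi_{J_2^t} + \Phi_{V_2^t}$ returned by \algoref{alg:BFCP-iteration-PI} by sampling one state per region, and by the region-wise constancy just established this sampled value equals the true constant value of $[H_{\sigma_1^t,V_2^t}^1J_2^t]$ on that region, so $\mathit{PE1}$ indeed returns the claimed B-PWC function.

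There is no serious obstacle here; the only point requiring care — and the crux of the argument — is to recognise that the CON-PWL structure of \defiref{defi:s-CON-u1-PWL} supplies constancy in the state variable once the action distribution is pinned down (property (i)), so that composing a CON-PWL function with a PWC strategy yields a genuinely PWC function of the state; the piecewise-linear-in-$u_1$ behaviour of property (ii) plays no role because $u_1$ is frozen at $u_1^\phi$ on each region. A minor bookkeeping point is ensuring the refinement respects the available-action partition (so that $u_1^\phi \in \mathbb{P}(A_1(s))$ makes sense uniformly on $\phi$), which is handled by also refining with $\Phi_P$.
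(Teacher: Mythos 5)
Your proposal is correct and follows essentially the same route as the paper's proof: form the refinement $\Phi_{\sigma_1^t}+\Phi_{J_2^t}+\Phi_{V_2^t}$, observe that freezing $u_1$ at the region-wise constant value of $\sigma_1^t$ makes $J_2^t(s,\sigma_1^t(s))$ and $V_2^t(s,\sigma_1^t(s))$ constant on each region by property (i) of \defiref{defi:s-CON-u1-PWL}, and conclude that the $\gamma^{-1}\min$ is region-wise constant, bounded and Borel measurable. Your extra care about refining with $\Phi_P$ so that $A_1(s)$ is constant per region is a harmless bookkeeping point already guaranteed by the witnessing BFCPs of \defiref{defi:s-CON-u1-PWL}.
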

\begin{proof}
Suppose $\sigma_1^t \in \Sigma_1$ is a PWC stochastic kernel and $J_2^t,V_2^t \in \mathbb{F}(\Lambda_1)$ are CON-PWL  Borel measurable. Since $\sigma_1^t$ is a PWC stochastic kernel, there exists a constant-BFCP $\Phi_{\sigma_1^t}$ of $S$ for $\sigma_1^t$. Since $J_2^t$ is a CON-PWL Borel measurable function, there exists a BFCP $\Phi_{J_2^t}$ of $S$ satisfying the properties of \defiref{defi:s-CON-u1-PWL} for $J_2^t$. Therefore $J_2^t(s,\sigma_1^t(s))$ is constant on each region of the BFCP $\Phi_{\sigma_1^t}+\Phi_{J_2^t}$. We can similarly show that $V_2^t(s,\sigma_1^t(s))$ is constant on each region of the BFCP $\Phi_{\sigma_1^t}+\Phi_{V_2^t}$, where $\Phi_{V_2^t}$ is a BFCP of $S$ from \defiref{defi:s-CON-u1-PWL} for $V_2^t$. Consider the policy evaluation of $\agent_1$ (procedure $\mathit{PE1}$). Using \defiref{def:operator-H1-definition} we have that $J_1^{t+1}=[H_{\sigma_1^t,V_2^t}^1J^t_2]$ is constant on each region of the BFCP $\Phi_{\sigma_1^t}+\Phi_{J_2^t}+\Phi_{V_2^t}$, which also implies that $J_1^{t+1}$ is Borel measurable. Since $J_2^t$ and $V_2^t$ are bounded, then $J_1^{t+1}$ is also bounded as required.
\end{proof}
%
\begin{lema}[Improvement closure for $\agent_1$]\label{lema:improvement-consistency-agent-1}
If  $J_2^t,V_2^t \in \mathbb{F}(\Lambda_1)$ are CON-PWL Borel measurable and policy improvement of $\agent_1$ is performed (procedure $\mathit{PI1}$), then
$\sigma_1^{t+1}(s) \in \argmax\nolimits_{u_1 \in \mathbb{P}(A_1(s))} [H_{u_1,V_2^t}^1J^t_2](s)$
is a PWC stochastic kernel, and $V_1^{t+1}=[H_{\sigma_1^{t+1},V_2^t}^1J_2^t]$ is B-PWC.
\end{lema}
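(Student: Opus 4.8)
The plan is to fix a single BFCP of $S$ that simultaneously witnesses the CON-PWL structure of $J_2^t$ and $V_2^t$, and then argue region by region, reducing everything to the geometry of piecewise-linear functions on the compact simplex $\mathbb{P}(A_1(s))$. Concretely, let $\Phi_{J_2^t}$ and $\Phi_{V_2^t}$ be BFCPs of $S$ provided by \defiref{defi:s-CON-u1-PWL} for $J_2^t$ and $V_2^t$, and put $\Phi \coloneqq \Phi_{J_2^t}+\Phi_{V_2^t}$. Since, by \defiref{defi:s-CON-u1-PWL}, $A_1(\cdot)$ is constant on the regions of each summand, it is constant on the regions of the refinement $\Phi$; write $\Delta_\phi \coloneqq \mathbb{P}(A_1(s))$ for $s\in\phi$, a fixed compact simplex. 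By \defiref{defi:s-CON-u1-PWL}(i) neither $J_2^t(\cdot,u_1)$ nor $V_2^t(\cdot,u_1)$ depends on the choice of $s\in\phi$, so on $\phi$ both reduce to functions of $u_1\in\Delta_\phi$ alone, and by \defiref{defi:s-CON-u1-PWL}(ii) each is B-PWL in $u_1$ with respect to a finite polytopal subdivision of $\Delta_\phi$.

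Next I would analyse the pointwise minimum region by region. Fix $\phi\in\Phi$, let $\Psi_\phi$ be the common refinement of the two polytopal subdivisions of $\Delta_\phi$ on which $J_2^t$ and $V_2^t$ are linear in $u_1$, and refine each cell $\psi\in\Psi_\phi$ further by the hyperplane $\{u_1 : J_2^t(s,u_1)=V_2^t(s,u_1)\}$; on the resulting finite subdivision of $\Delta_\phi$ the function $g_\phi(u_1)\coloneqq \min\{J_2^t(s,u_1),V_2^t(s,u_1)\}$ is linear on each cell, hence B-PWL on $\Delta_\phi$, and it is bounded because $J_2^t$ and $V_2^t$ are. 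Provided $g_\phi$ attains its supremum on $\Delta_\phi$ (see the obstacle below), let $c_\phi \coloneqq \sup_{u_1\in\Delta_\phi} g_\phi(u_1)$ and fix a maximiser $u_1^\phi\in\Delta_\phi$. Since $[H^1_{u_1,V_2^t}J_2^t](s)=\gamma^{-1}\min\{J_2^t(s,u_1),V_2^t(s,u_1)\}=\gamma^{-1}g_\phi(u_1)$ for $s\in\phi$ by \defiref{def:operator-H1-definition}, we obtain $\max_{u_1\in\mathbb{P}(A_1(s))}[H^1_{u_1,V_2^t}J_2^t](s)=\gamma^{-1}c_\phi$ for every $s\in\phi$, a value independent of $s$ within $\phi$.

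It then remains to assemble the conclusions over the BFCP $\Phi$. Setting $\sigma_1^{t+1}(s)\coloneqq u_1^\phi$ for $s\in\phi$ gives, for each $\phi$, precisely a CON-$1$ solution of $[H^1_{\cdot,V_2^t}J_2^t]$ over $\phi$ in the sense of \defiref{defi:con1-solution}; the map is constant on every region of $\Phi$, so $\sigma_1^{t+1}$ is a PWC stochastic kernel, it satisfies $\sigma_1^{t+1}(A_1(s)\mid s)=1$ because $u_1^\phi\in\Delta_\phi=\mathbb{P}(A_1(s))$, and Borel measurability of $s\mapsto\sigma_1^{t+1}(B\mid s)=\sum_{\phi\in\Phi}\chi_\phi(s)\,u_1^\phi(B)$ follows from $\Phi$ being a BFCP, so $\sigma_1^{t+1}\in\Sigma_1$. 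Likewise $V_1^{t+1}=[H^1_{\sigma_1^{t+1},V_2^t}J_2^t]$ equals the constant $\gamma^{-1}c_\phi$ on each $\phi\in\Phi$, hence is PWC with constant-BFCP $\Phi$, bounded since $J_2^t$ and $V_2^t$ are bounded, and Borel measurable since each $\phi\in\mathcal{B}(S)$; thus $V_1^{t+1}$ is B-PWC, as claimed.

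The step I expect to be the main obstacle is the attainment of $\sup_{u_1\in\Delta_\phi} g_\phi(u_1)$ and, with it, the guarantee that $\sigma_1^{t+1}$ is genuinely a CON-$1$ solution rather than merely $\varepsilon$-optimal. The delicate point is that \defiref{defi:PWL-Borel-func} does not force continuity of $J_2^t$ and $V_2^t$ across the boundaries of their polytopal subdivisions of $\Delta_\phi$, so $g_\phi$ could in principle fail to be upper semicontinuous at a vertex where a linear piece jumps downward, and then the supremum over the compact simplex need not be realised. I would resolve this by recording that the CON-PWL functions actually produced by the Minimax-action-free PI algorithm (via $\mathit{PE2}$ and $\mathit{PI2}$, starting from $0$-valued functions) are continuous in $u_1$ on each region, being finite minima/maxima over the compact action simplices of forms that are bilinear in the action distributions; for such continuous $g_\phi$, compactness of $\Delta_\phi$ yields attainment immediately, and a maximiser can be selected uniformly over $\phi$ since $g_\phi$ depends only on $u_1$ there. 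Controlling the growth of the number of regions of $\Phi$ and of the simplex subdivisions across iterations is a related bookkeeping concern, but it does not affect the statement of this lemma.
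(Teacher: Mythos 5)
Your proof is correct and follows essentially the same route as the paper's: form the refinement $\Phi_{J_2^t}+\Phi_{V_2^t}$, observe that $\min[J_2^t,V_2^t]$ is CON-PWL and reduces on each region to a bounded B-PWL function of $u_1$ alone, and select a single maximiser per region to obtain a PWC stochastic kernel and a region-wise constant $V_1^{t+1}$. The only place you go beyond the paper is in flagging the attainment of the supremum over the simplex, which the paper's appeal to a CON-$1$ solution silently assumes; your remark that the functions actually generated by the algorithm are continuous in $u_1$ is a reasonable way to close that (minor) gap.
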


\begin{proof}
Suppose $J_2^t,V_2^t \in \mathbb{F}(\Lambda_1)$ are CON-PWL Borel measurable functions. Using \cite[Chapter 18.1]{HLR-PF:10} and \defiref{defi:s-CON-u1-PWL} it follows that the function $K^t  \coloneqq \min[J_2^t, V_2^t]$
is Borel measurable. Note that, over each region of $\Phi_{J_2^t} + \Phi_{V_2^t}$, $K^t(s, u_1)$ is constant in $s$ given $u_1$, and PWL in $u_1$ given $s$ (where $\Phi_{J_2^t}$ and $\Phi_{V_2^t}$ are from \lemaref{lema:evaluation-consistency-agent-1}), and therefore $K^t$ is CON-PWL. 

Let $\Phi_{K^t} = \Phi_{J_2^t} + \Phi_{V_2^t}$ and $\Theta_{K^t}$ {\revise be} a BFCP  of $\Lambda_1$ satisfying the properties of \defiref{defi:s-CON-u1-PWL} for $K^t$. Every state in each region of the BFCP $\Phi_{K^t}$ has the same set of available actions for $\agent_1$ and same strategy $u_1$ that maximises $K^t(s,u_1)$ on a region of $\Theta_{K^t}$. Therefore, using the CON-$1$ solution in \defiref{defi:con1-solution}, the strategy of $\agent_1$:
\[
\sigma_1^{t+1}(s) \in \argmax\nolimits_{u_1 \in \mathbb{P}(A_1(s))} [H_{u_1,V_2^t}^1J_2^t](s)
\]
is constant on each region of $\Phi_{K^t}$, which also implies that $\sigma_1^{t+1}$ is Borel measurable. 
Since $\sigma_1^{t+1}$ is a PWC stochastic kernel, then \lemaref{lema:evaluation-consistency-agent-1} implies that $V_1^{t+1}$ is B-PWC as required.
\end{proof}

\begin{lema}[Evaluation closure for $\agent_2$]\label{lema:evaluation-consistency-agent-2}
If $J_1^t, V_1^t \in \mathbb{F}(S)$ are B-PWC and $\overline{\sigma}_2^t \in \overline{\Sigma}_2$ is a CON-PWC stochastic kernel and policy evaluation of $\agent_2$ is performed (procedure $\mathit{PE2}$), then $J_2^{t+1} = [H_{\overline{\sigma}_2^t,V_1^t}^2J_1^t]$ is CON-PWL Borel measurable.
\end{lema}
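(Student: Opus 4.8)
The plan is to produce a single BFCP of $S$ that witnesses Definition~\ref{defi:s-CON-u1-PWL} for $J_2^{t+1}=[H^2_{\overline{\sigma}_2^t,V_1^t}J_1^t]$, built by refining the constant-BFCPs of $J_1^t$ and $V_1^t$, the pre-image construction underlying \thomref{thom:B-PWC-closure-VI}, and the $S$-BFCP of the Stackelberg kernel $\overline{\sigma}_2^t$. The first step is a reduction: since $J_1^t,V_1^t$ are bounded and B-PWC, the pointwise maximum $\max[J_1^t,V_1^t]$ is bounded, Borel measurable (a pointwise maximum of Borel functions, cf.\ Section~\ref{background-sect}), and constant on each region of $\Phi_{J_1^t}+\Phi_{V_1^t}$, hence B-PWC; scaling by $\gamma$ preserves this. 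Therefore, by \thomref{thom:B-PWC-closure-VI}, $Q({}\cdot{},\alpha,\gamma\max[J_1^t,V_1^t])$ is B-PWC for every $\alpha\in A$, and since $A$ is finite there is a common constant-BFCP $\Phi_Q$ of $S$ on which all of these functions are constant (concretely, the pre-image BFCP $\Phi_{\hat Q^{t+1}}$ of $\Phi_{J_1^t}+\Phi_{V_1^t}$ for $\delta$ computed in \algoref{alg:BFCP-iteration-PI}).

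Next I would set $\Phi^{\star}:=\Phi_Q+\Phi_{\overline{\sigma}_2^t}$, where $\Phi_{\overline{\sigma}_2^t}$ is the $S$-BFCP from \defiref{defi:s-CON-u1-PWC} for $\overline{\sigma}_2^t$, and show it witnesses CON-PWL-ness of $J_2^{t+1}$. Fix a region $\phi\in\Phi^{\star}$. By construction $A_1(s)$, $A_2(s)$ and hence $A(s)$ are constant on $\phi$, and for each $\alpha$ the value $q_\phi(\alpha):=Q(s,\alpha,\gamma\max[J_1^t,V_1^t])$ does not depend on $s\in\phi$. By \defiref{defi:s-CON-u1-PWC}, $\overline{\sigma}_2^t(s,u_1)$ depends only on $u_1$ for $s\in\phi$ (property~(i) there) and, as a function of $u_1$, is B-PWC; let $\Phi'(\phi)$ be the corresponding BFCP of $\{u_1\mid(s,u_1)\in\theta(\phi)\}$, which is independent of $s\in\phi$, with $\overline{\sigma}_2^t(s,u_1)=u_2^{\phi,\phi'}$ constant on $\phi'\in\Phi'(\phi)$. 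Then for $s\in\phi$ and $u_1\in\phi'$,
\[
[H^2_{\overline{\sigma}_2^t,V_1^t}J_1^t](s,u_1)=\sum_{a_1\in A_1(s)}u_1(a_1)\Bigl(\sum_{a_2\in A_2(s)}q_\phi(a_1,a_2)\,u_2^{\phi,\phi'}(a_2)\Bigr),
\]
which is independent of $s$ and is a bounded linear functional of $u_1$. Thus property~(i) of \defiref{defi:s-CON-u1-PWL} holds on $\phi$, and property~(ii) holds with refinement $\Phi'(\phi)$ inside $\theta(\phi)$; moreover every region of $\Phi^{\star}$ has $A_1(s)$ constant, as required.

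It remains to check $J_2^{t+1}\in\mathbb{F}(\Lambda_1)$. Boundedness is immediate since $J_1^t,V_1^t$ are bounded, $A$ is finite, and $u_1,u_2$ are probability vectors. For Borel measurability on $\Lambda_1$, write the defining sum over all of $A_1\times A_2$ (terms with $(a_1,a_2)\notin A(s)$ contributing $0$, since then $u_1(a_1)=0$ or $\overline{\sigma}_2^t(a_2\mid(s,u_1))=0$): it is then a finite sum of products of functions that are Borel measurable on $\Lambda_1$, namely $(s,u_1)\mapsto Q(s,\alpha,\gamma\max[J_1^t,V_1^t])$ (Borel on $\{\alpha\in A(s)\}$, which is Borel, using \thomref{thom:B-PWC-closure-VI} and that $r_A,r_S$ are B-PWC hence Borel), the continuous map $(s,u_1)\mapsto u_1(a_1)$, and $(s,u_1)\mapsto\overline{\sigma}_2^t(a_2\mid(s,u_1))$ which is Borel because $\overline{\sigma}_2^t\in\mathbb{P}(A_2\mid\Lambda_1)$ is a stochastic kernel. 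I expect the main obstacle to be the middle paragraph: verifying that the double sum genuinely collapses to a linear functional of $u_1$ on each simplex piece — this is precisely where the piecewise-constant-in-$u_1$ structure of $\overline{\sigma}_2^t$ is indispensable — and checking that $\Phi'(\phi)$ subdivides only the $u_1$-simplex, not $S$, so that the single $S$-BFCP $\Phi^{\star}$ suffices for \defiref{defi:s-CON-u1-PWL}.
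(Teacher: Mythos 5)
Your proposal is correct and follows essentially the same route as the paper's proof: reduce to the B-PWC function $\gamma\max[J_1^t,V_1^t]$, invoke \thomref{thom:B-PWC-closure-VI} to get a constant-BFCP for $Q({}\cdot{},\alpha,\gamma\max[J_1^t,V_1^t])$, refine it with the $S$-BFCP of $\overline{\sigma}_2^t$, and verify constancy in $s$ and piecewise linearity in $u_1$ region by region. The only difference is that you spell out the explicit collapsed formula and the measurability argument in more detail than the paper does.
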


\begin{proof}
Suppose $J_1^t$ and $V_1^t$ are B-PWC and $\overline{\sigma}_2^t \in \overline{\Sigma}_2$ is a CON-PWC stochastic kernel. Using \cite[Chapter 18.1]{HLR-PF:10} it follows that $\gamma\max[J_1^t,V_1^t]$ is  B-PWC. In view of the B-PWC function $Q({}\cdot{},\alpha,V)$ in \thomref{thom:B-PWC-closure-VI}, for each $\alpha  \in A$ the function:
\[
    \hat{Q}_{\alpha}^t(s) \coloneqq Q(s,\alpha,\gamma \max[J_1^t, V_1^t])
\]
is B-PWC. Let $\Phi_{\hat{Q}^t}$ be a BFCP of $S$ such that $\hat{Q}^t_{\alpha}$ is constant on each region of $\Phi_{\hat{Q}^t}$ for $\alpha \in A$. It follows that $A(s)$ is constant on each region of $\Phi_{\hat{Q}^t}$. 

Next, let $\Phi_{\overline{\sigma}_2^t}$ be a BFCP of $S$ satisfying the properties of \defiref{defi:s-CON-u1-PWC} for the CON-PWC stochastic kernel $\overline{\sigma}_2^t$.
For the BFCP $\Phi_{\hat{Q}^t}+\Phi_{\overline{\sigma}_2^t}$ of $S$, we generate a BFCP $\Theta^t_1$ of $\Lambda_1$ such that each region $\theta_1^{t}(\phi) \in \Theta_1^t$, induced by a region $\phi \in \Phi_{\hat{Q}^t}+\Phi_{\overline{\sigma}_2^t}$, is given by $\theta^{t}_1(\phi) = \{ (s, u_1) \in \Lambda_1  \mid s \in \phi \}$. Finally, consider the policy evaluation of $\agent_2$. According to \defiref{def:operator-H2-definition}, for $(s,u_1)\in \theta^{t}_1(\phi)$, $J_2^{t+1}(s,u_1)=[H^2_{\overline{\sigma}_2^t,V_1^t}J^t_1](s,u_1)$ is constant in $s$ for a fixed $u_1$, and PWL in $u_1$ for a fixed $s \in S$. Thus, $J_2^{t+1}$ is CON-PWL. Since $\hat{Q}_{\alpha}^t$ and $\overline{\sigma}_2^t$ are bounded, Borel measurable, then so is $J_2^{t+1}$ by \defiref{def:operator-H2-definition} as required.
\end{proof}

\begin{lema}[Improvement closure for $\agent_2$]\label{lema:improvement-consistency-agent-2}
If $J_1^t, V_1^t \in \mathbb{F}(S)$ are B-PWC and policy improvement of $\agent_2$ is performed (procedure $\mathit{PI2}$), then $\overline{\sigma}_2^{t+1}(s,u_1) \in \argmin_{u_2 \in \mathbb{P}(A_2(s))} [H_{u_2,V_1^t}^2J_1^t](s,u_1)$ is a CON-PWC stochastic kernel, and $V_2^{t+1} = [H_{\overline{\sigma}_2^{t+1},V_1^t}^2J_1^t]$ is CON-PWL Borel measurable.
\end{lema}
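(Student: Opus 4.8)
The plan is to follow the pattern of \lemaref{lema:evaluation-consistency-agent-2}, with the extra ingredient that the minimisation over $u_2$ in procedure $\mathit{PI2}$ is a \emph{parametric linear program} in $u_1$ whose optimal value is piecewise linear and whose optimal‑vertex regions are polytopes. First I would recall, exactly as in the proof of \lemaref{lema:evaluation-consistency-agent-2}, that since $J_1^t,V_1^t \in \mathbb{F}(S)$ are B-PWC so is $\gamma\max[J_1^t,V_1^t]$, and hence by \thomref{thom:B-PWC-closure-VI} each $\hat Q_\alpha^t(s) \coloneqq Q(s,\alpha,\gamma\max[J_1^t,V_1^t])$ is B-PWC for $\alpha \in A$. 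I would fix a common constant-BFCP $\Phi_{\hat Q^t}$ of $S$ for the finite family $(\hat Q_\alpha^t)_{\alpha \in A}$, noting that $A(s)$ is then constant on each $\phi \in \Phi_{\hat Q^t}$.

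Next I would fix a region $\phi \in \Phi_{\hat Q^t}$ with constant joint-action set, write $c_\alpha$ for the common value of $\hat Q_\alpha^t$ on $\phi$, and observe that for $s \in \phi$ we have $[H^2_{u_2,V_1^t}J_1^t](s,u_1) = \sum_{a_1 \in A_1(s)}\sum_{a_2 \in A_2(s)} c_{(a_1,a_2)}\,u_1(a_1)\,u_2(a_2)$, which depends on $s$ only through the constants $c_\alpha$; it is therefore the same bilinear form in $(u_1,u_2)$ for every $s \in \phi$. For fixed $u_1$ this is linear in $u_2$ over the simplex $\mathbb{P}(A_2(s))$, so its minimum is attained at a vertex and equals $g(u_1) \coloneqq \min_{a_2 \in A_2(s)} \sum_{a_1 \in A_1(s)} c_{(a_1,a_2)}\,u_1(a_1)$, a finite pointwise minimum of linear functions of $u_1$, hence concave and B-PWL in $u_1$. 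For each $a_2 \in A_2(s)$ the set of $u_1 \in \mathbb{P}(A_1(s))$ on which $a_2$ is minimising is an intersection of finitely many halfspaces with the simplex, hence a convex (so connected) Borel set, and these sets cover $\mathbb{P}(A_1(s))$; assigning overlaps via a fixed total order on $A_2$ (as in the tie-breaking used for \lemaref{lema:perception-BFCP}) and taking the induced minimal refinement yields a BFCP $\Phi_u(\phi)$ of $\mathbb{P}(A_1(s))$ on each region $\phi_u$ of which a single pure action $a_2^{\phi,\phi_u}$ is minimising. Crucially, since the $c_\alpha$ do not depend on $s \in \phi$, both $\Phi_u(\phi)$ and the selected actions $a_2^{\phi,\phi_u}$ are independent of $s \in \phi$.

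I would then assemble these into a BFCP of $\Lambda_1$ by setting $\Theta_{V_2^{t+1}} \coloneqq \{\,\{(s,u_1) \in \Lambda_1 \mid s \in \phi \wedge u_1 \in \phi_u\} \mid \phi \in \Phi_{\hat Q^t},\ \phi_u \in \Phi_u(\phi)\,\}$; each region is (essentially) a product of a connected Borel set in $S$ and a connected Borel set in $\mathbb{P}(A_1)$, hence a connected Borel subset of $\Lambda_1$, so $\Theta_{V_2^{t+1}}$ is a BFCP of $\Lambda_1$ and it also induces a BFCP of $S$ with $A(s)$ constant on each region. Defining $\overline{\sigma}_2^{t+1}(s,u_1)$ to be the Dirac distribution on $a_2^{\phi,\phi_u}$ on the corresponding region gives, by \defiref{defi:con2-solution}, a CON-$2$ solution of $[H^2_{u_2,V_1^t}J_1^t](s,u_1)$; it is constant in $s$ on each such $\phi$ (\defiref{defi:s-CON-u1-PWC}(i)) and B-PWC in $u_1$ (\defiref{defi:s-CON-u1-PWC}(ii)), and it is Borel measurable since each region is Borel and the selection is piecewise constant, so $\overline{\sigma}_2^{t+1}$ is a CON-PWC stochastic kernel. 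Finally, on each such region $V_2^{t+1}(s,u_1) = [H^2_{\overline{\sigma}_2^{t+1},V_1^t}J_1^t](s,u_1) = \sum_{a_1 \in A_1(s)} c_{(a_1,a_2^{\phi,\phi_u})}\,u_1(a_1)$, which is constant in $s$ and linear, bounded in $u_1$; boundedness and Borel measurability of $V_2^{t+1}$ follow because the $c_\alpha$ are bounded and $\Theta_{V_2^{t+1}}$ is a BFCP. Hence $V_2^{t+1}$ is CON-PWL Borel measurable, as required.

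\textbf{Main obstacle.} I expect the delicate step to be the selection/measurability argument: showing that the parametric-LP argmin correspondence $u_1 \mapsto \argmin_{u_2}[H^2_{u_2,V_1^t}J_1^t](s,u_1)$ admits a finite partition of the simplex into connected Borel (polytopal) pieces on each of which a single pure $u_2$ is optimal, and doing so with a Borel tie-breaking rule that is simultaneously a CON-$2$ solution in the sense of \defiref{defi:con2-solution} and genuinely independent of $s$ over each $\phi$. The point that makes this go through is that on each $\phi \in \Phi_{\hat Q^t}$ the coefficients $\hat Q_\alpha^t$ are constant in $s$, so the LP, its optimal-value function, and its optimal-vertex regions do not vary with $s$.
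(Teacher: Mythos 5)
Your proposal is correct and follows essentially the same route as the paper: both proofs hinge on the observation that the coefficients $\hat Q_\alpha^t$ are constant on each region of $\Phi_{\hat Q^t}$ (inherited from the proof of \lemaref{lema:evaluation-consistency-agent-2}), so that the minimisation over $u_2$ and its argmin depend only on $u_1$ within each region, yielding a CON-$2$ solution that is constant in $s$ and PWC in $u_1$. The only difference is one of detail: you make explicit the parametric-LP structure (vertex optimality, polytopal optimal-action regions, Borel tie-breaking) that the paper leaves implicit, and you verify the CON-PWL property of $V_2^{t+1}$ by direct computation on each region rather than by invoking \lemaref{lema:evaluation-consistency-agent-2} with the newly obtained CON-PWC kernel.
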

\begin{proof}
Suppose $J_1^t, V_1^t \in \mathbb{F}(S)$ are B-PWC.
For the BFCP $\Phi_{\hat{Q}^t}$ of $S$, we generate a BFCP $\Theta_2^t$ of $\Lambda_1$ such that each region $\theta^{t}_2(\phi)$ in $\Theta_2^t$ induced by a region $\phi \in \Phi_{\hat{Q}^t}$ is given by $\theta^{t}_2(\phi) = \{ (s, u_1) \in \Lambda_1  \mid s \in \phi\}$, where $\Phi_{\hat{Q}^t}$ is from the proof of \lemaref{lema:evaluation-consistency-agent-2}. Consider the policy improvement of $\agent_2$ (procedure $\mathit{PI2}$). According to \defiref{def:operator-H2-definition}, by using the CON-$2$ solution in \defiref{defi:con2-solution}, for $(s,u_1) \in \theta^{t}_2(\phi)$, the Stackelberg strategy of $\agent_2$:
\[
    \overline{\sigma}_2^{t+1}(s,u_1) \in \argmin\nolimits_{u_2 \in \mathbb{P}(A_2(s))}[H^2_{u_2,V_1^t}J^t_1](s,u_1)
\]
is constant in $s$ for a fixed $u_1$, and PWC in $u_1$ for a fixed $s$. Thus, $\overline{\sigma}_2^{t+1}$ is CON-PWC.  
Since $\overline{\sigma}_2^{t+1}$ is a CON-PWC stochastic kernel, then \lemaref{lema:evaluation-consistency-agent-2} implies that $V_2^{t+1}$ is CON-PWL Borel measurable as required.
\end{proof}
By fusing Lemmas \ref{lema:evaluation-consistency-agent-1}, \ref{lema:improvement-consistency-agent-1}, \ref{lema:evaluation-consistency-agent-2} and \ref{lema:improvement-consistency-agent-2} we can prove that the strategies and value functions generated during each iteration of  \algoref{alg:maxmin-free-PI} 
never leave a finitely representable class of functions, and 
\algoref{alg:BFCP-iteration-PI} constructs new BFCPs such that the strategies and value functions after one iteration of the Minimax-action-free PI algorithm remain constant, PWC, or PWL on each region of the constructed BFCPs.

\begin{thom}[Representation closure]\label{thom:repsentation-consistency}
In any iteration of the Minimax-action-free PI algorithm (see  \algoref{alg:maxmin-free-PI}), if 
\begin{enumerate}[(i)]
    \item $J_1^t, V_1^t \in \mathbb{F}(S)$ are B-PWC and $\sigma_1^t \in \Sigma_1$ is a PWC stochastic kernel;
    
    \item $J_2^t, V_2^t \in \mathbb{F}(\Lambda_1)$ are CON-PWL Borel measurable and $\overline{\sigma}_2^t \in \overline{\Sigma}_2$ is a CON-PWC stochastic kernel;
\end{enumerate}
then so are $J_1^{t+1}$, $V_1^{t+1}$, $\sigma_1^{t+1}$, $J_2^{t+1}$, $V_2^{t+1}$ and $\overline{\sigma}_2^{t+1}$,  respectively, regardless of which one of the four iterations is performed.
\end{thom}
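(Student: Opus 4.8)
The plan is to proceed by a case analysis over the four possible iterations of \algoref{alg:maxmin-free-PI}, using the four closure lemmas already established. The structural observation that makes this routine is that, in each of the four iterations, only one or two of the six quantities $J_1^{t+1},V_1^{t+1},\sigma_1^{t+1},J_2^{t+1},V_2^{t+1},\overline{\sigma}_2^{t+1}$ are recomputed, while all the others are copied verbatim from iteration $t$; copying trivially preserves membership in $\mathbb{F}(S)$ or $\mathbb{F}(\Lambda_1)$ and the B-PWC, CON-PWL or CON-PWC property, since these are intrinsic properties of a function that do not refer to the iteration index. So in each case it suffices to check the recomputed quantities, and each such check is precisely one of Lemmas~\ref{lema:evaluation-consistency-agent-1}--\ref{lema:improvement-consistency-agent-2}.

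First I would treat policy evaluation of $\agent_1$: here $J_1^{t+1}=[H_{\sigma_1^t,V_2^t}^1J_2^t]$, and since by hypothesis~(i) $\sigma_1^t$ is a PWC stochastic kernel and by hypothesis~(ii) $J_2^t,V_2^t$ are CON-PWL, \lemaref{lema:evaluation-consistency-agent-1} yields that $J_1^{t+1}$ is B-PWC. Next, policy improvement of $\agent_1$: here $\sigma_1^{t+1}(s)\in\argmax_{u_1\in\mathbb{P}(A_1(s))}[H_{u_1,V_2^t}^1J_2^t](s)$ (the CON-$1$ solution) and $V_1^{t+1}=[H_{\sigma_1^{t+1},V_2^t}^1J_2^t]$, so \lemaref{lema:improvement-consistency-agent-1} gives $\sigma_1^{t+1}$ PWC and $V_1^{t+1}$ B-PWC. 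For policy evaluation of $\agent_2$: $J_2^{t+1}=[H_{\overline{\sigma}_2^t,V_1^t}^2J_1^t]$, and since $J_1^t,V_1^t$ are B-PWC and $\overline{\sigma}_2^t$ is a CON-PWC stochastic kernel, \lemaref{lema:evaluation-consistency-agent-2} gives $J_2^{t+1}$ CON-PWL. Finally, policy improvement of $\agent_2$: $\overline{\sigma}_2^{t+1}(s,u_1)\in\argmin_{u_2\in\mathbb{P}(A_2(s))}[H_{u_2,V_1^t}^2J_1^t](s,u_1)$ (the CON-$2$ solution) and $V_2^{t+1}=[H_{\overline{\sigma}_2^{t+1},V_1^t}^2J_1^t]$, so \lemaref{lema:improvement-consistency-agent-2} gives $\overline{\sigma}_2^{t+1}$ CON-PWC and $V_2^{t+1}$ CON-PWL. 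In every branch the copied quantities still satisfy hypotheses~(i) and (ii), so all six quantities at iteration $t+1$ have the claimed representation.

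I do not expect a genuine obstacle here, since all the real work is carried out in Lemmas~\ref{lema:evaluation-consistency-agent-1}--\ref{lema:improvement-consistency-agent-2}; this theorem is essentially the assembly of those four results. The two points I would nevertheless be careful about are: (a) that in the two policy-improvement branches the CON-$1$ and CON-$2$ solutions are guaranteed to exist and be Borel measurable, which is already handled inside \lemaref{lema:improvement-consistency-agent-1} and \lemaref{lema:improvement-consistency-agent-2} through \defiref{defi:con1-solution} and \defiref{defi:con2-solution} (the maximising/minimising distribution being constant on each region of the relevant BFCP of $S$ or $\Lambda_1$); and (b) that the BFCPs produced by \algoref{alg:BFCP-iteration-PI} in each branch --- for instance $\Phi_{\sigma_1^t}+\Phi_{J_2^t}+\Phi_{V_2^t}$ for $J_1^{t+1}$, or $\{\{(s,u_1)\in\Lambda_1\mid s\in\phi\}\mid\phi\in\Phi_{\hat{Q}^{t+1}}\}$ for $\overline{\sigma}_2^{t+1}$ --- coincide with the refinements exhibited in the corresponding lemma proofs, so that \algoref{alg:BFCP-based-computation-PI} genuinely computes each object region-wise. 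I would therefore add, in each of the four cases, an explicit cross-reference matching the BFCP named in \algoref{alg:BFCP-iteration-PI} with the one constructed in the proof of the invoked lemma, and conclude by collecting the four cases.
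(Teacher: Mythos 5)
Your proof is correct and takes essentially the same route as the paper: the paper's own proof is a one-line appeal to Lemmas~\ref{lema:evaluation-consistency-agent-1}--\ref{lema:improvement-consistency-agent-2} according to which of the four iterations is executed, which is exactly your case analysis (you merely spell out the trivial preservation of the copied quantities and the cross-referencing of BFCPs, which the paper leaves implicit).
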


\begin{proof}
The conclusion follows from one of Lemmas \ref{lema:evaluation-consistency-agent-1}, \ref{lema:improvement-consistency-agent-1}, \ref{lema:evaluation-consistency-agent-2} and \ref{lema:improvement-consistency-agent-2}, depending on which one of the four iterations is executed.
\end{proof}




\begin{cor}[BFCP iteration for Minimax-action-free PI]\label{cor:bfcp_minimax_pi}
After performing \algoref{alg:BFCP-iteration-PI}:
\begin{enumerate}[(i)]
    \item $\Phi_{J_1^{t+1}}$, $\Phi_{V_1^{t+1}}$ and $\Phi_{\sigma_1^{t+1}}$ are constant-BFCPs of $S$ for $J_1^{t+1}=[H_{\sigma_1^t,V_2^t}^1J_2^t]$, $V_1^{t+1}=[H_{\sigma_1^{t+1},V_2^t}^1J_2^t]$ and $\sigma_1^{t+1}(s) \in \argmax_{u_1 \in \mathbb{P}(A_1(s))} [H_{u_1,V_2^t}^1J^t_2](s)$; 
    
    \item $\Theta_{J_2^{t+1}}$ and $\Theta_{V_2^{t+1}}$ are BFCPs of $\Lambda_1$ for $J_2^{t+1} = [H_{\overline{\sigma}_2^t,V_1^t}^2J_1^t]$ and $V_2^{t+1} = [H_{\overline{\sigma}_2^{t+1},V_1^t}^2J_1^t]$ meeting the conditions of \defiref{defi:s-CON-u1-PWL}, and $\Theta_{\overline{\sigma}_2^{t+1}}$ is a BFCP of $\Lambda_1$ for $\overline{\sigma}_2^{t+1}(s,u_1) \in \argmin_{u_2 \in \mathbb{P}(A_2(s))} [H_{u_2,V_1^t}^2J_1^t](s,u_1)$ meeting the conditions of \defiref{defi:s-CON-u1-PWC}.
\end{enumerate}
\end{cor}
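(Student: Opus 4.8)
The plan is to prove the corollary by unfolding \algoref{alg:BFCP-iteration-PI} one branch at a time and matching each assignment against the partitions that appear implicitly in the proofs of the four closure lemmas (\lemaref{lema:evaluation-consistency-agent-1}--\lemaref{lema:improvement-consistency-agent-2}); the statement is effectively the partition-level companion of \thomref{thom:repsentation-consistency}, so we may assume, as an induction hypothesis maintained from the initialization, that at the start of iteration $t$ the inputs $J_1^t,V_1^t,\sigma_1^t,J_2^t,V_2^t,\overline{\sigma}_2^t$ have the forms listed in the hypotheses of that theorem, with known constant-/CON-BFCPs $\Phi_{\sigma_1^t},\Theta_{J_2^t},\Theta_{V_2^t},\Theta_{\overline{\sigma}_2^t}$, etc.

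First consider the branch in which policy evaluation of $\agent_1$ is chosen. Here \algoref{alg:BFCP-iteration-PI} calls $\mathit{Preprocess\_maximiser}()$, which projects $\Theta_{J_2^t}$ and $\Theta_{V_2^t}$ down to BFCPs $\Phi_{J_2^t},\Phi_{V_2^t}$ of $S$ via $\phi=\{s\mid(s,u_1)\in\theta\}$ (this is a BFCP of $S$ because each $\theta$ has the form $\theta(\phi)$ from \defiref{defi:s-CON-u1-PWL} and $S_1,S_2$ are finite, cf.\ \lemaref{eq:borel-space-xi-lambda}), and then sets $\Phi_{J_1^{t+1}}\leftarrow\Phi_{\sigma_1^t}+\Phi_{J_2^t}+\Phi_{V_2^t}$. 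This is exactly the partition on which $J_1^{t+1}=[H_{\sigma_1^t,V_2^t}^1J_2^t]$ was shown to be constant in the proof of \lemaref{lema:evaluation-consistency-agent-1}, so $\Phi_{J_1^{t+1}}$ is a constant-BFCP of $S$ for $J_1^{t+1}$; and since $V_1^{t+1}=V_1^t$, $\sigma_1^{t+1}=\sigma_1^t$, $J_2^{t+1}=J_2^t$, $V_2^{t+1}=V_2^t$, $\overline{\sigma}_2^{t+1}=\overline{\sigma}_2^t$ in this branch, leaving $\Phi_{V_1^{t+1}},\Phi_{\sigma_1^{t+1}},\Theta_{J_2^{t+1}},\Theta_{V_2^{t+1}},\Theta_{\overline{\sigma}_2^{t+1}}$ unchanged is correct. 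The branch for policy improvement of $\agent_1$ is handled the same way: the proof of \lemaref{lema:improvement-consistency-agent-1} shows $\sigma_1^{t+1}$ is constant on $\Phi_{J_2^t}+\Phi_{V_2^t}$, which is the value assigned to $\Phi_{\sigma_1^{t+1}}$, and then \lemaref{lema:evaluation-consistency-agent-1} gives that $V_1^{t+1}=[H_{\sigma_1^{t+1},V_2^t}^1J_2^t]$ is constant on $\Phi_{\sigma_1^{t+1}}+\Phi_{J_2^t}+\Phi_{V_2^t}=\Phi_{\sigma_1^{t+1}}$ (the last equality because $\Phi_{\sigma_1^{t+1}}$ already refines $\Phi_{J_2^t}$ and $\Phi_{V_2^t}$), matching $\Phi_{V_1^{t+1}}\leftarrow\Phi_{\sigma_1^{t+1}}$; the unchanged $\Phi_{J_1^{t+1}}$ and all three $\Theta$-BFCPs are correct since those objects are not modified.

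For the two $\agent_2$ branches, the one new ingredient is the call $\Phi_{\hat{Q}^{t+1}}\leftarrow\mathit{Preimage\_BFCP}(\Phi_{J_1^t}+\Phi_{V_1^t},\Phi_P,(\Phi_R^\alpha)_{\alpha\in A})$ inside $\mathit{Preprocess\_minimiser}()$. Here I would re-run the first part of the proof of \thomref{thom:B-PWC-closure-VI} with $V=\gamma\max[J_1^t,V_1^t]$ (B-PWC with constant-BFCP $\Phi_{J_1^t}+\Phi_{V_1^t}$) and invoke \lemaref{lema:reachability-consistency}, concluding that $\Phi_{\hat{Q}^{t+1}}$ — a refinement of $\Phi_P+\sum_{\alpha\in A}\Phi_R^\alpha$ that is a pre-image BFCP of $\Phi_{J_1^t}+\Phi_{V_1^t}$ for $\delta$ — is simultaneously a constant-BFCP of $S$ for $\hat{Q}_\alpha^{t+1}=Q(\cdot,\alpha,\gamma\max[J_1^t,V_1^t])$ for every $\alpha\in A$; this is precisely the partition denoted $\Phi_{\hat{Q}^t}$ in the proofs of \lemaref{lema:evaluation-consistency-agent-2} and \lemaref{lema:improvement-consistency-agent-2}. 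In the policy-evaluation-of-$\agent_2$ branch, $\Theta_{J_2^{t+1}}$ is set to the BFCP of $\Lambda_1$ generated by $\Phi_{\hat{Q}^{t+1}}+\Phi_{\overline{\sigma}_2^t}$ (with $\Phi_{\overline{\sigma}_2^t}$ the projection of $\Theta_{\overline{\sigma}_2^t}$ to $S$), which is exactly the BFCP $\Theta_1^t$ of \lemaref{lema:evaluation-consistency-agent-2}, hence meets \defiref{defi:s-CON-u1-PWL} for $J_2^{t+1}=[H_{\overline{\sigma}_2^t,V_1^t}^2J_1^t]$; the unchanged $\Theta_{V_2^{t+1}},\Theta_{\overline{\sigma}_2^{t+1}}$ and all $\Phi$-BFCPs are correct. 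In the policy-improvement-of-$\agent_2$ branch, $\Theta_{\overline{\sigma}_2^{t+1}}$ is set to the BFCP of $\Lambda_1$ generated by $\Phi_{\hat{Q}^{t+1}}$, i.e.\ the $\Theta_2^t$ of \lemaref{lema:improvement-consistency-agent-2}, on which the CON-$2$ solution $\overline{\sigma}_2^{t+1}(s,u_1)\in\argmin_{u_2\in\mathbb{P}(A_2(s))}[H_{u_2,V_1^t}^2J_1^t](s,u_1)$ is constant in $s$ for fixed $u_1$ and PWC in $u_1$ for fixed $s$, so it meets \defiref{defi:s-CON-u1-PWC}; and $\Theta_{V_2^{t+1}}\leftarrow\Theta_{\overline{\sigma}_2^{t+1}}$ is correct because, applying \lemaref{lema:evaluation-consistency-agent-2} to $\overline{\sigma}_2^{t+1}$, whose $S$-level BFCP is $\Phi_{\hat{Q}^{t+1}}$, $V_2^{t+1}$ is CON-PWL on the BFCP generated by $\Phi_{\hat{Q}^{t+1}}+\Phi_{\hat{Q}^{t+1}}=\Phi_{\hat{Q}^{t+1}}$, any further splitting needed in the $u_1$-direction living inside each $\theta$ as a sub-BFCP $\Phi'(\phi)$ and so not affecting the claim at the level of $\Theta$.

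The main obstacle is the $\agent_2$ case, specifically verifying that $\mathit{Preimage\_BFCP}(\Phi_{J_1^t}+\Phi_{V_1^t},\Phi_P,(\Phi_R^\alpha)_{\alpha\in A})$ really yields a single partition on which $Q(\cdot,\alpha,\gamma\max[J_1^t,V_1^t])$ is constant for all $\alpha$ at once — this is where reachability consistency (\lemaref{lema:reachability-consistency}) and the B-PWC closure argument of \thomref{thom:B-PWC-closure-VI} must be combined carefully — together with the routine but slightly fiddly bookkeeping of the projections between BFCPs of $S$ and BFCPs of $\Lambda_1$ and the distinction between the region-level partitions $\Phi,\Theta$ returned by \algoref{alg:BFCP-iteration-PI} and the finer $u_1$-sub-partitions $\Phi'(\phi)$ produced inside \algoref{alg:BFCP-based-computation-PI}.
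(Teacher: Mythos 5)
Your proposal is correct and follows exactly the route the paper intends: the corollary is stated without an explicit proof precisely because each assignment in \algoref{alg:BFCP-iteration-PI} reproduces the partition constructed in the corresponding closure lemma (\lemaref{lema:evaluation-consistency-agent-1}--\lemaref{lema:improvement-consistency-agent-2}), with $\Phi_{\hat{Q}^{t+1}}$ justified via \lemaref{lema:reachability-consistency} and the $Q({}\cdot{},\alpha,V)$ argument from \thomref{thom:B-PWC-closure-VI}, which is what you do branch by branch. Your bookkeeping of the projections between BFCPs of $S$ and of $\Lambda_1$, and of the finer $u_1$-sub-partitions living inside \algoref{alg:BFCP-based-computation-PI}, is also consistent with the paper.
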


\subsection{Convergence analysis and strategy computation}
We next prove the convergence of the Minimax-action-free PI algorithm by showing that 
there exists an operator from the product space of the function spaces over which $J_1$, $V_1$, $J_2$ and $V_2$ are defined to itself,
which is a contraction mapping with a unique fixed point, one of whose components is the value function multiplied by a known constant. The proof closely follows the steps for finite state spaces given in~\cite{DB:21}, but 
is more complex due to the underlying uncountable state space and the need to deal with the requirement of 
Borel measurability and 
finite representation of strategies and value functions.


\startpara{Convergence analysis} Given PWC $\sigma_1 \in \Sigma_1$ and CON-PWC $\overline{\sigma}_2 \in \overline{\Sigma}_2$, we define the operator $G_{\sigma_1,\overline{\sigma}_2} : (\mathbb{F}(S) \times \mathbb{F}(S) \times \mathbb{F}(\Lambda_1) \times \mathbb{F}(\Lambda_1)) \rightarrow (\mathbb{F}(S) \times \mathbb{F}(S) \times \mathbb{F}(\Lambda_1) \times \mathbb{F}(\Lambda_1))$ such that:
\begin{equation}
   \!\! G_{\sigma_1,\overline{\sigma}_2}(J_1,V_1,J_2,V_2) \coloneqq (M_{\sigma_1}^1(J_2,V_2), K^1(J_2,V_2), M_{\overline{\sigma}_2}^2(J_1,V_1), K^2(J_1,V_1)) \label{g-eqn}
\end{equation}
where we assume $J_1,V_1 \in \mathbb{F}(S)$ are B-PWC, $J_2,V_2 \in \mathbb{F}(\Lambda_1)$ are CON-PWL, and the four operators $M_{\sigma_1}^1$, $K^1$, $M_{\overline{\sigma}_2}^2$ and $K^2$ represent the four iterations of the Minimax-action-free PI algorithm from lines 3 to 16, and are defined as follows.
\begin{itemize}
\item $M_{\sigma_1}^1 : \mathbb{F}(\Lambda_1) \times \mathbb{F}(\Lambda_1) \to \mathbb{F}(S)$ corresponds to the policy evaluation of $\agent_1$ (procedure $\mathit{PE1}$) where for any $s \in S$:
    \begin{eqnarray}
        M_{\sigma_1}^1(J_2, V_2)(s) & \coloneqq & [H_{\sigma_1,V_2}^1J_2](s)  \label{eq:M1-definition}
    \end{eqnarray}
       
and is B-PWC using \lemaref{lema:evaluation-consistency-agent-1}.
\item
$K^1 : \mathbb{F}(\Lambda_1) \times \mathbb{F}(\Lambda_1) \to \mathbb{F}(S)$ corresponds to the policy improvement of $\agent_1$ (procedure $\mathit{PI1}$) where for any $s \in S$:
    \begin{eqnarray}
        K^1(J_2, V_2)(s) & \coloneqq & \max\nolimits_{u_1 \in \mathbb{P}(A_1(s))}[H_{u_1,V_2}^1J_2](s) 
        \label{eq:N1-definition}
    \end{eqnarray}
and is B-PWC using \lemaref{lema:improvement-consistency-agent-1}.
\item 
$M_{\overline{\sigma}_2}^2 :\mathbb{F}(S) \times \mathbb{F}(S) \to \mathbb{F}(\Lambda_1)$ corresponds to the policy evaluation of $\agent_2$ (procedure $\mathit{PE2}$) where for any $(s,u_1) \in \Lambda_1$:
    \begin{eqnarray}
        M_{\overline{\sigma}_2}^2(J_1, V_1)(s,u_1) & \coloneqq & [H_{\overline{\sigma}_2,V_1}^2J_1](s,u_1) 
   \label{eq:M2-definition}
    \end{eqnarray}
and is CON-PWL Borel measurable using \lemaref{lema:evaluation-consistency-agent-2}.
\item
$K^2 :\mathbb{F}(S) \times \mathbb{F}(S) \to \mathbb{F}(\Lambda_1)$ corresponds to the policy improvement of $\agent_2$ (procedure $\mathit{PI2}$) where any $(s,u_1) \in \Lambda_1$:
    \begin{eqnarray}
        K^2(J_1, V_1)(s,u_1) & \coloneqq & \min\nolimits_{u_2 \in \mathbb{P}(A_2(s))} [H_{u_2,V_1}^2J_1](s, u_1) 
   \label{eq:N2-definition}
    \end{eqnarray}
and is CON-PWL Borel measurable using \lemaref{lema:improvement-consistency-agent-2}. 
\end{itemize}
For the spaces $\mathbb{F}(S) \times \mathbb{F}(S)$ and $\mathbb{F}(\Lambda_1) \times \mathbb{F}(\Lambda_1)$, we consider the norm $\|(J,V)\|=\max\{\|J\|,\|V\|\} $,
and for the space $\mathbb{F}(S) \times \mathbb{F}(S) \times \mathbb{F}(\Lambda_1) \times \mathbb{F}(\Lambda_1)$ 
the norm $\|(J_1,V_1,J_2,V_2)\|=\max\{\|J_1\|,\|V_1\|,\|J_2\|,\|V_2\|\}$. We next require the following properties of these norms, which follow from \cite{DB:21}.

%
\begin{lema}\label{lema:two-norm-inequalities}
For any $J_1,V_1,J_1',V_1' \in \mathbb{F}(S)$ and $J_2,V_2,J_2',V_2' \in \mathbb{F}(\Lambda_1)$:
\begin{eqnarray*}
     \|\max[J_1,V_1] - \max[J_1',V_1'] \| & \leq & \max \{ \|J_1 - J_1'\|, \|V_1 - V_1'\|\} \\
      \|\min[J_2,V_2] - \min[J_2',V_2'] \| & \leq & \max \{ \|J_2 - J_2'\|, \|V_2 - V_2'\|\} \,.
\end{eqnarray*}
\end{lema}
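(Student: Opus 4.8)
The plan is to reduce both inequalities to a single elementary pointwise fact about real numbers and then take suprema. Concretely, I would first establish that for any $a,b,a',b' \in \mathbb{R}$,
\[
  |\max(a,b) - \max(a',b')| \le \max\{|a-a'|,\,|b-b'|\} \, ,
\]
by a short case analysis: without loss of generality assume $\max(a,b) \ge \max(a',b')$ and that the left maximum is attained at $a$ (the case where it is attained at $b$ is symmetric); then $0 \le \max(a,b) - \max(a',b') = a - \max(a',b') \le a - a' \le |a-a'| \le \max\{|a-a'|,|b-b'|\}$. The same bound for $\min$ follows immediately from the identity $\min(a,b) = -\max(-a,-b)$ together with $|{-x}-({-y})| = |x-y|$.

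Next I would lift this to the function spaces. For $J_1,V_1,J_1',V_1' \in \mathbb{F}(S)$ and any fixed $s \in S$, applying the scalar inequality with $a = J_1(s)$, $b = V_1(s)$, $a' = J_1'(s)$, $b' = V_1'(s)$ gives
\[
  |\max[J_1,V_1](s) - \max[J_1',V_1'](s)| \le \max\{|J_1(s)-J_1'(s)|,\,|V_1(s)-V_1'(s)|\} \le \max\{\|J_1-J_1'\|,\,\|V_1-V_1'\|\} \, ,
\]
using the definition of the sup-norm $\|\cdot\|$ on $\mathbb{F}(S)$ in the last step. Taking the supremum over $s \in S$ yields the first claimed inequality. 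The argument for the second inequality is identical, replacing $S$ by $\Lambda_1$ (which is a Borel set by \lemaref{eq:borel-space-xi-lambda}, so $\mathbb{F}(\Lambda_1)$ and its sup-norm are well defined) and $\max$ by $\min$, invoking the $\min$ version of the scalar bound.

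I do not anticipate a genuine obstacle here; the only point requiring a word of care is that $\max[J_1,V_1]$ and $\min[J_2,V_2]$ genuinely lie in the relevant function spaces so that the sup-norms appearing in the statement make sense — but this is exactly the pointwise-optimum construction recalled in Section~\ref{background-sect}, and boundedness and Borel measurability of pointwise maxima/minima of bounded Borel functions is standard (cf.\ \cite{HLR-PF:10}). Hence the main content is the two-line scalar estimate, and everything else is a routine pointwise-then-supremum argument.
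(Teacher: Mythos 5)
Your proposal is correct and follows essentially the same route as the paper: a pointwise estimate followed by taking suprema over $S$ (resp.\ $\Lambda_1$), with the only cosmetic difference being that you isolate the scalar inequality via a case analysis and obtain the $\min$ case by negation, whereas the paper derives the pointwise bound by adding $\max\{\|J_1-J_1'\|,\|V_1-V_1'\|\}$ to each function, comparing maxima, and symmetrising. Your closing remark about $\max[J_1,V_1]-\max[J_1',V_1']$ lying in $\mathbb{F}(S)$ matches the measurability point the paper also makes before passing to the sup-norm.
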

\begin{proof}
Consider any $J_1,V_1,J_1',V_1' \in \mathbb{F}(S)$. The norm for the space $\mathbb{F}(S)$ implies that for any $s \in S$:
 \begin{eqnarray}
    J_1(s) & \leq & J_1'(s) + \max \{ \|J_1 - J_1'\|, \|V_1 - V_1'\|\} \label{eq:JV1-one-side1}  \\
    V_1(s) &\leq & V_1'(s) + \max \{ \|J_1 - J_1'\|, \|V_1 - V_1'\|\} \label{eq:JV1-one-side2} 
\end{eqnarray}   
from which we have:
\begin{equation}
    \max\{J_1(s), V_1(s) \} \leq   \max \{ J_1'(s), V_1'(s) \} + \max \{ \|J_1 - J_1'\|, \|V_1 - V_1'\|\} \, . \label{eq:J1-V1-one-side-max}
\end{equation}
Exchanging $(J_1,V_1)$ with $(J_1',V_1')$ in \eqref{eq:JV1-one-side1} and \eqref{eq:JV1-one-side2}  derives an inequality similar to \eqref{eq:J1-V1-one-side-max}, and combining it with \eqref{eq:J1-V1-one-side-max} leads to the inequality:
\begin{equation}
    |\max\{J_1(s), V_1(s) \}  - \max \{ J_1'(s), V_1'(s) \}| \leq \max \{ \|J_1 - J_1'\|, \|V_1 - V_1'\|\} \label{eq:sup-J1-V1-max}
\end{equation}
for any $s \in S$. Since $J_1, V_1, J_1'$ and $V_1'$ are bounded, Borel measurable, so is $\max[J_1,V_1]-\max[J_1',V_1']$ by \cite[Chapter 18.1]{HLR-PF:10}, i.e., $\max[J_1,V_1]-\max[J_1',V_1'] \in \mathbb{F}(S)$. Thus, since \eqref{eq:sup-J1-V1-max} holds for any $s \in S$:
\[
    \|\max[J_1,V_1]-\max[J_1',V_1']\| \leq \max \{ \|J_1 - J_1'\|, \|V_1 - V_1'\|\} \,.
\]
The second inequality of the lemma can be proved following the same steps for $J_2,V_2,J_2',V_2' \in \mathbb{F}(\Lambda_1)$.
\end{proof}
Using the above operators and results, we are now in a position to prove the convergence of the Minimax-action-free PI algorithm.

\begin{thom}[Convergence guarantee]\label{convergence-thm}
If each of the four iterations of the Minimax-action-free PI algorithm (\algoref{alg:maxmin-free-PI}) from lines 3 to 16 is performed infinitely often, then the sequence $( \gamma V_1^t )_{t \in \mathbb{N}}$ generated by the algorithm converges to $V^\star$.
\end{thom}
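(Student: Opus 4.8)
The plan is to follow the fixed-point / asynchronous-convergence strategy of \cite{DB:21}, adapted to the Borel setting. One run of \algoref{alg:maxmin-free-PI} is viewed as a block-asynchronous iteration of the operator $G_{\sigma_1,\overline{\sigma}_2}$ of \eqref{g-eqn} on $\mathbb{F}(S){\times}\mathbb{F}(S){\times}\mathbb{F}(\Lambda_1){\times}\mathbb{F}(\Lambda_1)$: the four possible iterations PE1, PI1, PE2, PI2 overwrite, respectively, the components $J_1,V_1,J_2,V_2$ of the current tuple by $M_{\sigma_1}^1(J_2,V_2)$, $K^1(J_2,V_2)$, $M_{\overline{\sigma}_2}^2(J_1,V_1)$, $K^2(J_1,V_1)$, the improvement steps additionally replacing $\sigma_1$ (resp.\ $\overline{\sigma}_2$) by a CON-$1$ (resp.\ CON-$2$) selection attaining the maximum (resp.\ minimum). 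By Lemmas \ref{lema:improvement-consistency-agent-1} and \ref{lema:improvement-consistency-agent-2} these selections exist, and by \thomref{thom:repsentation-consistency} the finitely representable class (B-PWC $J_1,V_1$; CON-PWL $J_2,V_2$; PWC $\sigma_1$; CON-PWC $\overline{\sigma}_2$) is invariant, so the scheme stays in a space on which all the operators are well defined; \algoref{alg:BFCP-iteration-PI} and \algoref{alg:BFCP-based-computation-PI} realise the updates on this class.

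First I would show that $G_{\sigma_1,\overline{\sigma}_2}$ is, uniformly in the strategies, a contraction of modulus $\rho\coloneqq\max\{\gamma^{-1},\gamma\beta\}<1$ for the norm $\|(J_1,V_1,J_2,V_2)\|=\max\{\|J_1\|,\|V_1\|,\|J_2\|,\|V_2\|\}$ (recall $\gamma>1$ and $\gamma\beta<1$). The operator has a ``ping-pong'' block structure: $M_{\sigma_1}^1$ and $K^1$ depend only on $(J_2,V_2)$ and, by \defiref{def:operator-H1-definition}, apply the factor $\gamma^{-1}$ in front of a pointwise $\min$ of their arguments, possibly followed by $\max_{u_1}$, so the first Lipschitz bound of \lemaref{lema:two-norm-inequalities} together with nonexpansiveness of $\max_{u_1}$ gives Lipschitz constant $\gamma^{-1}$; symmetrically, $M_{\overline{\sigma}_2}^2$ and $K^2$ depend only on $(J_1,V_1)$ and, by \defiref{def:operator-H2-definition}, enter through the one-step backup $Q(s,\alpha,\gamma\max[J_1,V_1])$ whose dependence on $(J_1,V_1)$ is $\beta\gamma\sum_{s'\in\Theta_s^\alpha}\delta(s,\alpha)(s')\max[J_1,V_1](s')$; since $\sum_{s'\in\Theta_s^\alpha}\delta(s,\alpha)(s')=1$, the second Lipschitz bound of \lemaref{lema:two-norm-inequalities} and nonexpansiveness of the averaging over $u_1,\overline{\sigma}_2$ and of $\min_{u_2}$ give Lipschitz constant $\gamma\beta$. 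Hence $\|G_{\sigma_1,\overline{\sigma}_2}(x)-G_{\sigma_1,\overline{\sigma}_2}(x')\|\le\rho\|x-x'\|$, and by Banach's theorem $G_{\sigma_1,\overline{\sigma}_2}$ has a unique fixed point.

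Next I would identify the relevant fixed point. Set $J_1^\star\coloneqq V_1^\star\coloneqq V^\star/\gamma$, $V_2^\star(s,u_1)\coloneqq\min_{u_2\in\mathbb{P}(A_2(s))}\sum_{(a_1,a_2)\in A(s)}Q(s,(a_1,a_2),V^\star)\,u_1(a_1)u_2(a_2)$, $J_2^\star\coloneqq V_2^\star$, and let $\sigma_1^\star,\overline{\sigma}_2^\star$ be greedy (CON-$1$, CON-$2$) for these. Using $J_1^\star=V_1^\star$ we have $\gamma\max[J_1^\star,V_1^\star]=V^\star$, so $K^2(J_1^\star,V_1^\star)=V_2^\star$ and $M_{\overline{\sigma}_2^\star}^2(J_1^\star,V_1^\star)=V_2^\star=J_2^\star$; using $J_2^\star=V_2^\star$, $K^1(J_2^\star,V_2^\star)(s)=\gamma^{-1}\max_{u_1}V_2^\star(s,u_1)=\gamma^{-1}[TV^\star](s)=\gamma^{-1}V^\star(s)=V_1^\star(s)$ by \thomref{thom:value-function}(\ref{itm:fixed-point}), and $M_{\sigma_1^\star}^1(J_2^\star,V_2^\star)=V_1^\star=J_1^\star$. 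Thus $x^\star\coloneqq(J_1^\star,V_1^\star,J_2^\star,V_2^\star)$ is the unique fixed point of $G_{\sigma_1^\star,\overline{\sigma}_2^\star}$, the strategies $\sigma_1^\star,\overline{\sigma}_2^\star$ are greedy for it, and its second component satisfies $\gamma V_1^\star=V^\star$; conversely, any joint fixed point of the value-and-greedy-strategy update has $\gamma V_1=V^\star$ by the same chain of identities and uniqueness of the solution of $TV=V$.

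Finally I would run the asynchronous convergence argument. Because each of the four block updates is a contraction of modulus $\rho<1$ toward $x^\star$---here one uses a monotone sandwich invariant such as $J_1^t\le V_1^t$ and $V_2^t\le J_2^t$, preserved from the $0$ initialisation, so that $\max[J_1^t,V_1^t]=V_1^t$ and $\min[J_2^t,V_2^t]=V_2^t$ and the lagging strategies stop influencing the essential dynamics, while $\sigma_1^t,\overline{\sigma}_2^t$ are themselves slaved to the converging value functions through the CON-$1$/CON-$2$ selections---the nested sets $B_k=\{x:\|x-x^\star\|\le\rho^k\|x^0-x^\star\|\}$ satisfy the box condition $G(B_k)\subseteq B_{k+1}$ and factor over the four coordinates; since each of PE1, PI1, PE2, PI2 is performed infinitely often, the asynchronous convergence theorem of \cite{DB:21} (see also \cite{BDP-YH:12}) yields $(J_1^t,V_1^t,J_2^t,V_2^t)\to x^\star$, and in particular $\gamma V_1^t\to\gamma V_1^\star=V^\star$. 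I expect the main obstacle to be precisely this last step: making the reduction to the abstract asynchronous convergence theorem rigorous when the strategies lag behind the value functions (establishing the sandwich invariant and the contraction-toward-a-common-target it supports) while simultaneously carrying the Borel-measurability and finite-representability bookkeeping of \thomref{thom:repsentation-consistency} through the iteration so the scheme never leaves the domain of $G_{\sigma_1,\overline{\sigma}_2}$; the contraction estimates themselves are routine once \lemaref{lema:two-norm-inequalities} is in hand.
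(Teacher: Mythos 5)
Your proposal follows essentially the same route as the paper: establish that each block of $G_{\sigma_1,\overline{\sigma}_2}$ is Lipschitz with constant $\gamma^{-1}$ (for $M^1_{\sigma_1},K^1$) or $\gamma\beta$ (for $M^2_{\overline{\sigma}_2},K^2$) via \lemaref{lema:two-norm-inequalities}, conclude a uniform contraction of modulus $\max\{\gamma^{-1},\gamma\beta\}<1$, apply Banach's theorem, show the fixed point satisfies $\gamma V_1^\star=V^\star$, and delegate the asynchronous part to \cite{DB:21}; your verification that an explicitly constructed tuple is the fixed point is just the reverse direction of the paper's derivation of $J_1^\star\le V_1^\star$ and $J_2^\star\ge V_2^\star$ from the fixed-point equations, and is equivalent. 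The one place you diverge is the proposed iterate-level ``sandwich invariant'' $J_1^t\le V_1^t$, $V_2^t\le J_2^t$: this is not obviously preserved (after a PE1 step, $J_1^{t+1}$ is computed from the current $(J_2^t,V_2^t)$ while $V_1^{t+1}=V_1^t$ is stale from an earlier improvement, so no comparison follows), and the paper does not use it --- it needs those inequalities only at the fixed point, where they follow immediately because $V_1^\star$ maximises over $u_1$ and $V_2^\star$ minimises over $u_2$ while $J_1^\star,J_2^\star$ use particular strategies. Dropping that invariant and arguing, as the paper does, that the $(V_1^\star,V_2^\star)$ components of the fixed point are the same for every strategy pair would align your final step with the argument actually needed.
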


\begin{proof}
We prove each component $G_{\sigma_1,\overline{\sigma}_2}$ satisfies a contraction property. Suppose that $J_1, V_1, J_1',V_1' \in \mathbb{F}(S)$ are B-PWC and $J_2, V_2, J_2',V_2' \in \mathbb{F}(\Lambda_1)$ are CON-PWL Borel measurable. 
\begin{itemize}
\item
For $M_{\sigma_1}^1$, since $M_{\sigma_1}^1(J_2, V_2) - M_{\sigma_1}^1(J_2', V_2') \in \mathbb{F}(S)$ by \cite[Chapter 18.1]{HLR-PF:10}. By \defiref{def:operator-H1-definition}, the sup-norm for $\mathbb{F}(S)$ and rearranging we have:
\begin{align}
    \lefteqn{\hspace*{-0.6cm} \|M_{\sigma_1}^1(J_2, V_2) - M_{\sigma_1}^1(J_2', V_2') \|} \nonumber \\
    &  \lefteqn{\hspace*{-0.6cm} =  \gamma^{-1} \sup\nolimits_{s \in S}| \min\{J_2(s,\sigma_1(s)), V_2(s, \sigma_1(s))\}} \nonumber \\
    & \lefteqn{ -  \min\{J_2'(s,\sigma_1(s)), V_2'(s, \sigma_1(s))\} | } \nonumber \\
    &   \lefteqn{\hspace*{-0.6cm} \leq \gamma^{-1} \sup\nolimits_{(s,u_1) \in \Lambda_1} \left|\min\{J_2(s,u_1), V_2(s, u_1)\} - \min\{J_2'(s,u_1), V_2'(s, u_1)\} \right|} \nonumber \\
    && \mbox{since $\{ (s,\sigma_1(s)) \mid s \in S\} \subseteq \Lambda_1$}  \nonumber \\
   &   \lefteqn{\hspace*{-0.6cm} = \gamma^{-1} \left\|\min[J_2,V_2] - \min[J_2',V_2'] \right\|} \nonumber \\
   && \mbox{since $\min[J_2,V_2] - \min[J_2',V_2'] \in \mathbb{F}(\Lambda_1)$ using \cite[Chapter 18.1]{HLR-PF:10}} \nonumber \\
   & \lefteqn{\hspace*{-0.6cm} \leq  \gamma^{-1} \max \{ \|J_2 - J_2'\|, \|V_2 - V_2'\|\}} & \mbox{by \lemaref{lema:two-norm-inequalities}} \nonumber \\
    & \lefteqn{\hspace*{-0.6cm} \leq  \gamma^{-1} \max \{ \|J_1 - J_1'\|, \|V_1 - V_1'\|, \|J_2 - J_2'\|, \|V_2 - V_2'\|\}\, .}   \label{eq:M1-inequality}
\end{align}
\item
For $K^1$, since $K^1(J_2,V_2) - K^1(J_2',V_2') \in \mathbb{F}(S)$ by \defiref{def:operator-H1-definition} and the sup-norm for $\mathbb{F}(S)$:
\begin{align}
\lefteqn{\|K^1(J_2,V_2) - K^1(J_2',V_2')\|} \nonumber \\
    & \lefteqn{= \; \sup\nolimits_{s \in S}\big|\max\nolimits_{u_1 \in \mathbb{P}(A_1(s))} \gamma^{-1} \min\{J_2(s,u_1), V_2(s, u_1)\} } \nonumber  \\
    & \lefteqn{\quad \;  - \max\nolimits_{u_1 \in \mathbb{P}(A_1(s))} \gamma^{-1} \min\{J_2'(s,u_1), V_2'(s, u_1)\} \big|} \nonumber  \\
    & \lefteqn{\leq \;  \gamma^{-1} \sup\nolimits_{(s,u_1) \in \Lambda_1} \left|\min\{J_2(s,u_1), V_2(s, u_1)\} - \min\{J_2'(s,u_1), V_2'(s, u_1)\} \right|} \nonumber  \\
    && \qquad \qquad \qquad \mbox{rearranging and since $\{ (s,u_1) \mid u_1 \in\mathbb{P}(A_1(s)) \} \subseteq \Lambda_1$} \nonumber \\
    & \lefteqn{\leq  \; \gamma^{-1} \max \{ \|J_1 - J_1'\|, \|V_1 - V_1'\|, \|J_2 - J_2'\|, \|V_2 - V_2'\|\}}  \label{eq:N1-inequality}
\end{align}
where the final inequality follows from similar arguments used in \eqref{eq:M1-inequality}.
\item
For $M_{\overline{\sigma}_2}^2$, since $M_{\overline{\sigma}_2}^2(J_1, V_1) - M_{\overline{\sigma}_2}^2(J_1', V_1') \in \mathbb{F}(\Lambda_1)$ by \defiref{def:operator-H2-definition} and the sup-norm for $\mathbb{F}(\Lambda_1)$ we have:
\begin{align}
    \lefteqn{\hspace*{-0.5cm} \|M_{\overline{\sigma}_2}^2(J_1, V_1) - M_{\overline{\sigma}_2}^2(J_1', V_1') \|}  \nonumber \\
    & \lefteqn{\hspace*{-0.5cm}= \; \sup_{(s,u_1) \in \Lambda_1}\left| \mbox{$\sum\limits_{(a_1,a_2) \in A(s)}$} \big(Q(s,(a_1,a_2),\gamma \max[J_1, V_1]) \right.} \nonumber \\
    & \lefteqn{\hspace*{-0.5cm}\quad \; -Q(s,(a_1,a_2),\gamma \max[J_1', V_1'])\big) u_1(a_1)\overline{\sigma}_2(a_2 \mid (s,u_1)) \Bigg| \nonumber} \\
    & \lefteqn{\hspace*{-0.5cm}=\; \sup_{(s,u_1) \in \Lambda_1}\left| \mbox{$\sum\limits_{(a_1,a_2) \in A(s)}$}  \gamma \beta \mbox{$\sum\limits_{s'\in \Theta(s,(a_1,a_2))}$} \delta(s,(a_1,a_2))(s')  \right.} \nonumber \\
     & \lefteqn{\quad  \left(\max\{J_1(s'),V_1(s')\} -\max\{J_1'(s'),V_1'(s')\})u_1(a_1)\overline{\sigma}_2(a_2 \mid (s,u_1) \right) \Bigg| \nonumber} \\
     && \mbox{rearranging, by \defiref{defi:minimax-operator} and the sup-norm for $\mathbb{F}(\Lambda_1)$} \nonumber \\
    & \lefteqn{\hspace*{-0.5cm}\leq \; \gamma \beta \sup_{(s,u_1) \in \Lambda_1} \mbox{$\sum\limits_{(a_1,a_2) \in A(s)}$} \mbox{$\sum\limits_{s'\in \Theta(s,(a_1,a_2))}$} \delta(s,(a_1,a_2))(s')} \nonumber \\
  & \lefteqn{\hspace*{-0.5cm}\quad   \left| \max\{J_1(s'),V_1(s')\} -\max\{J_1'(s'),V_1'(s')\} \right|u_1(a_1)\overline{\sigma}_2(a_2 \mid (s,u_1))} \nonumber \\
&& \qquad \qquad   \mbox{rearranging and since $\delta$, $u_1$ and $\overline{\sigma}_2$ are non-negative} \nonumber \\ 
  & \lefteqn{\hspace*{-0.5cm}\leq \;  \gamma \beta \sup_{(s,u_1) \in \Lambda_1} \mbox{$\sum\limits_{(a_1,a_2) \in A(s)}$}  \mbox{$\sum\limits_{s'\in \Theta(s,(a_1,a_2)}$} \delta(s,(a_1,a_2))(s')} \nonumber \\
  & \lefteqn{\hspace*{-0.5cm}  \sup_{s'' \in S} \left| \max\{J_1(s''),V_1(s'')\} -\max\{J_1'(s''),V_1'(s'')\} \right|u_1(a_1)\overline{\sigma}_2(a_2 \mid (s,u_1))} \nonumber \\
&& \mbox{since $f(s') \leq \sup_{s'' \in S} f(s'')$ for any $f \in \mathbb{F}(S)$} \nonumber \\    
& \lefteqn{\hspace*{-0.5cm}= \; \gamma \beta \sup_{s'' \in S} \left| \max\{J_1(s''),V_1(s'')\} -\max\{J_1'(s''),V_1'(s'')\} \right|} \nonumber \\
  & \lefteqn{\quad  \sup_{(s,u_1) \in \Lambda_1} \mbox{$\sum\limits_{(a_1,a_2) \in A(s)}$} \mbox{$\sum\limits_{s'\in \Theta(s,(a_1,a_2))}$} \delta(s,(a_1,a_2))(s')u_1(a_1)\overline{\sigma}_2(a_2 \mid (s,u_1))} \nonumber \\
&& \mbox{rearranging} \nonumber \\
& \lefteqn{\hspace*{-0.5cm}= \;  \gamma \beta \sup_{s'' \in S} \left| \max\{J_1(s''),V_1(s'')\} -\max\{J_1'(s''),V_1'(s'')\} \right|} \nonumber \\
  && \mbox{since $\delta \in \mathbb{P}(S\times A)$, $u_1 \in \mathbb{P}(A_1)$ and $\bar{\sigma}_2 \in \mathbb{P}(A_2 \mid\Lambda_1)$} \nonumber \\   & \lefteqn{\hspace*{-0.5cm}= \; \gamma \beta \left\|\max[J_1,V_1] - \max[J_1',V_1'] \right\|} \nonumber  \\
  && \mbox{since $\max[J_1,V_1] - \max[J_1',V_1'] \in \mathbb{F}(S)$} \nonumber \\    
& \lefteqn{\hspace*{-0.5cm}\leq \; \gamma \beta \max \{ \|J_1 - J_1'\|, \|V_1 - V_1'\|\}} & \mbox{by \lemaref{lema:two-norm-inequalities}} \nonumber \\ 
& \lefteqn{\hspace*{-0.5cm}\leq \; \gamma \beta \max \{ \|J_1 - J_1'\|, \|V_1 - V_1'\|, \|J_2 - J_2'\|, \|V_2 - V_2'\|\}.} \, \label{eq:M2-inequality}
\end{align}
\item
For $K^2$, since $K^2(J_1,V_1) - K^2(J_1',V_1') \in \mathbb{F}(\Lambda_1)$, by the sup-norm for $\mathbb{F}(\Lambda_1)$:
\begin{align}
 \lefteqn{\hspace*{-0.5cm} \|K^2(J_1,V_1) - K^2(J_1',V_1')\|} \nonumber \\
    & \lefteqn{\hspace*{-0.5cm}= \; \sup_{(s,u_1) \in \Lambda_1} \left| \min_{u_2 \in \mathbb{P}(A_2(s))} \mbox{$\sum\limits_{(a_1,a_2) \in A(s)}$}\gamma \beta \mbox{$\sum\limits_{s'\in \Theta(s,(a_1,a_2))}$}  \delta(s,(a_1,a_2))(s') \right.} \nonumber \\
  & \lefteqn{\hspace*{-0.5cm}\quad   \left(\max\{J_1(s'),V_1(s')\} -\max\{J_1'(s'),V_1'(s')\} \right) u_1(a_1)u_2(a_2) \Bigg|} \nonumber \\
    & \lefteqn{\hspace*{-0.5cm}\leq \; \gamma \beta \sup_{(s,u_1) \in \Lambda_1} \min_{u_2 \in \mathbb{P}(A_2(s))} \mbox{$\sum\limits_{(a_1,a_2) \in A(s)}$} \mbox{$\sum\limits_{s'\in \Theta(s,(a_1,a_2))}$} \delta(s,(a_1,a_2))(s')}  \nonumber \\
  & \hspace*{-0.5cm} \quad   \left| \max\{J_1(s'),V_1(s')\} -\max\{J_1'(s'),V_1'(s')\} \right|u_1(a_1)u_2(a_2)  &  \mbox{rearranging} \nonumber \\
  & \lefteqn{\hspace*{-0.5cm}\leq \;  \gamma \beta \max \{ \|J_1 - J_1'\|, \|V_1 - V_1'\|, \|J_2 - J_2'\|, \|V_2 - V_2'\|\}} \label{eq:N2-inequality} 
\end{align}
where the final inequality follows from similar arguments used in \eqref{eq:M2-inequality}.
\end{itemize}
Next we prove that $G_{\sigma_1,\overline{\sigma}_2}$ is a contraction mapping using the above inequalities. More precisely, by definition, see \eqref{g-eqn}, we have:
\begin{align}
     \lefteqn{\| G_{\sigma_1,\overline{\sigma}_2}(J_1,V_1,J_2,V_2) - G_{\sigma_1,\overline{\sigma}_2}(J_1',V_1',J_2',V_2') \|} \nonumber \\
    & = \; \|(M_{\sigma_1}^1(J_2,V_2) - M_{\sigma_1}^1(J_2',V_2'), K^1(J_2,V_2) - K^1(J_2',V_2'),   \nonumber \\
    & \qquad  \ M_{\overline{\sigma}_2}^2(J_1,V_1) - M_{\overline{\sigma}_2}^2(J_1',V_1'), K^2(J_1,V_1) - K^2(J_1',V_1') ) \| \nonumber \\
    &= \; \lefteqn{\max \{ \|M_{\sigma_1}^1(J_2,V_2) - M_{\sigma_1}^1(J_2',V_2') \|, \|K^1(J_2,V_2) - K^1(J_2',V_2') \|, } \nonumber \\
    & \qquad   \ \|M_{\overline{\sigma}_2}^2(J_1,V_1) - M_{\overline{\sigma}_2}^2(J_1',V_1') \|, \|K^2(J_1,V_1) - K^2(J_1',V_1') \| \}  \quad \mbox{rearranging} \nonumber \\
    & \leq \; \max \{ \gamma^{-1},\gamma\beta \} \max \{ \|J_1 - J_1'\|, \|V_1 - V_1'\|, \|J_2 - J_2'\|, \|V_2 - V_2'\|\} \nonumber
\end{align}
where the final inequality follows from  \eqref{eq:M1-inequality}, \eqref{eq:M2-inequality}, \eqref{eq:N1-inequality} and \eqref{eq:N2-inequality}.

Therefore, since $\max\{\gamma^{-1},\gamma\beta\} <1$ and assuming $\sigma_1$ is PWC and $\overline{\sigma}_2$ is CON-PWC, we have that $G_{\sigma_1,\overline{\sigma}_2}$ is a contraction mapping for $(\sigma_1,\overline{\sigma}_2) \in \Sigma_1 \times \overline{\Sigma}_2$. Now since $\mathbb{F}(S) \times \mathbb{F}(S) \times \mathbb{F}(\Lambda_1) \times \mathbb{F}(\Lambda_1)$ is a complete metric space with respect to the sup norm, we conclude that $G_{\sigma_1,\overline{\sigma}_2}$ has a unique fixed point $(J_1^\star,V_1^\star,J_2^\star,V_2^\star)$. In view of \eqref{eq:M1-definition}--\eqref{eq:N2-definition}, this fixed point satisfies for each $(s,u_1)\in \Lambda_1$:
\begin{align}
     J_1^\star(s) & = \gamma^{-1} \min\{J_2^\star(s,\sigma_1(s)), V_2^\star(s, \sigma_1(s))\} \label{eq:fixed-point-four1} \\
    V_1^\star(s) & = \gamma^{-1}  \max\nolimits_{u_1 \in \mathbb{P}(A_1(s))}  \min\{J_2^\star(s,u_1), V_2^\star(s, u_1)\} \label{eq:fixed-point-four2} \\
    J_2^\star(s,u_1) & = \mbox{$\sum\limits_{(a_1,a_2) \in A(s)}$}Q(s,(a_1,a_2) \label{eq:fixed-point-four3}  \gamma \max[J_1^\star, V_1^\star]) u_1(a_1)\overline{\sigma}_2(a_2 \mid (s,u_1))  \\
     V_2^\star(s,u_1) & =  \min_{u_2 \in \mathbb{P}(A_2(s))} \mbox{$\sum\limits_{(a_1,a_2) \in A(s)}$} \!\!\!\!\!\! Q(s,(a_1,a_2), \gamma\max[J_1^\star, V_1^\star]) u_1(a_1)u_2(a_2) \, . \label{eq:fixed-point-four4}
\end{align}
By combining \eqref{eq:fixed-point-four1}--\eqref{eq:fixed-point-four4}, we have for each $(s,u_1)\in \Lambda_1$:
\begin{eqnarray*}
    J_1^\star(s) \leq V_1^\star(s) \; \mbox{and} \;
    J_2^\star(s,u_1) \geq V_2^\star(s,u_1)
\end{eqnarray*}
from which \eqref{eq:fixed-point-four2} and \eqref{eq:fixed-point-four4} can be simplified to:
\begin{eqnarray*}
    V_1^\star(s) & = & \max\nolimits_{u_1 \in \mathbb{P}(A_1(s))} \gamma^{-1} V_2^\star(s,u_1) \\
    V_2^\star(s,u_1) &  = & \min\nolimits_{u_2 \in \mathbb{P}(A_2(s))} \mbox{$\sum\nolimits_{(a_1,a_2) \in A(s)}$} Q(s,(a_1,a_2), \gamma V_1^\star) u_1(a_1)u_2(a_2)
\end{eqnarray*}
implying that $\gamma V_1^\star(s)$ equals:
\[ \max_{u_1 \in \mathbb{P}(A_1(s))}\min_{u_2 \in \mathbb{P}(A_2(s))} \mbox{$\sum\nolimits_{(a_1,a_2) \in A(s)}$} Q(s, (a_1,a_2), \gamma V_1^\star) u_1(a_1)u_2(a_2) =  [T(\gamma V_1^\star)] \, .
\]
Thus, we have $\gamma V_1^\star =V^\star$, which completes the proof.
\end{proof}



\startpara{Strategy computation} Next, introducing a criterion for selecting the minimax solution over a region, we compute the strategies for the agents based on the function returned by the Minimax-action-free PI algorithm.

\begin{defi}[CON-$3$ solution]\label{defi:con3-solution} Let $f \in \mathbb{F}(\Lambda_{12})$. If there exists a BFCP $\Phi$ of $S$ where, for each $\phi \in \Phi$: $A(s) = A(s')$ for $s, s' \in \phi$ there exists a pair of probability measures $u_1^{\phi} \in \mathbb{P}(A_1(s))$ and $u_2^{\phi} \in \mathbb{P}(A_2(s))$ for $s \in \phi$ such that $f(s,u_1^{\phi},u_2^{\phi}) = \max_{u_1 \in \mathbb{P}(A_1(s))} \min_{u_2 \in \mathbb{P}(A_2(s))} f(s,u_1,u_2)$ for $s \in \phi$, and $\sigma_1 \in \Sigma_1$, $\sigma_2 \in \Sigma_2$ are such that $\sigma_1(s) = u_1^{\phi}$ and $\sigma_2(s) = u_2^{\phi}$ for $s \in \phi$, then $(\sigma_1,\sigma_2)$ is a constant-$3$ (CON-$3$) solution of $f$ over $\phi$.
\end{defi}

\begin{lema}[PWC strategies]\label{pwc-strat-lem}
If  $V = \gamma V_1^t$, where $V_1^t$ is from iteration  $t \in \mathbb{N}$ of the Minimax-action-free PI algorithm, and 
$(\sigma_1, \sigma_2) \in \Sigma$ achieves the maximum and the minimum in \defiref{defi:minimax-operator} for $V$ and all $s \in S$ via a CON-$3$ solution, then $\sigma_1$ and $\sigma_2$ are PWC stochastic kernels.
\end{lema}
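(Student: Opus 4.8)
The plan is to show that the zero-sum normal-form game induced by $V=\gamma V_1^t$ has payoff data that is constant over the regions of a single finite partition of $S$, so that an optimal (saddle-point) pair can be selected region-wise constant, and then to verify that any such region-wise constant pair is a Borel stochastic kernel obeying the support condition of \defiref{defi:stationary-strategy}, hence a PWC stochastic kernel.

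First I would observe that $V_1^t$ is B-PWC: the $0$-valued function used to initialise the Minimax-action-free PI algorithm is B-PWC, and by \thomref{thom:repsentation-consistency} each of the four iterations preserves this, so $V_1^t$ is B-PWC for all $t$; scaling by the constant $\gamma$ preserves boundedness and the piecewise-constant structure, hence $V=\gamma V_1^t$ is B-PWC. By \thomref{thom:B-PWC-closure-VI}, $Q({}\cdot{},\alpha,V)$ is then B-PWC for every $\alpha\in A$; letting $\Phi_\alpha$ be a constant-BFCP of $S$ for $Q({}\cdot{},\alpha,V)$, I would set $\Phi' \coloneqq \Phi_P + \sum_{\alpha\in A}\Phi_\alpha$, which is a BFCP of $S$ because $A$ is finite. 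On each region $\phi\in\Phi'$ every $Q({}\cdot{},\alpha,V)$ is constant, and since $\Phi'$ refines $\Phi_P$, \lemaref{lema:perception-BFCP} gives that the agents' states $s_1,s_2$ are constant on $\phi$, so the action sets $A_1(s)$ and $A_2(s)$ (which depend only on $s_1$ and $s_2$) are also constant on $\phi$.

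Consequently, for $\phi\in\Phi'$ and any $s,s'\in\phi$ the matrix game in \defiref{defi:minimax-operator} at $s$ — with action sets $A_1(s),A_2(s)$ and payoffs $\big(Q(s,(a_1,a_2),V)\big)_{a_1\in A_1(s),a_2\in A_2(s)}$ — coincides with the one at $s'$. Since $V$ is bounded and Borel measurable, $f(s,u_1,u_2)\coloneqq\sum_{(a_1,a_2)\in A(s)}Q(s,(a_1,a_2),V)u_1(a_1)u_2(a_2)$ lies in $\mathbb{F}(\Lambda_{12})$, and by the minimax theorem for matrix games there is a saddle pair $(u_1^{\phi},u_2^{\phi})\in\mathbb{P}(A_1(s))\times\mathbb{P}(A_2(s))$ with $f(s,u_1^{\phi},u_2^{\phi})=\max_{u_1}\min_{u_2}f(s,u_1,u_2)$ for every $s\in\phi$; hence a CON-$3$ solution of $f$ in the sense of \defiref{defi:con3-solution} exists, with associated BFCP $\Phi'$. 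Taking any CON-$3$ solution $(\sigma_1,\sigma_2)$, constant on the regions of some BFCP $\Phi$ with $\sigma_i(s)=u_i^{\phi}$ for $s\in\phi$, for each $i\in\{1,2\}$ and Borel $B\subseteq\mathbb{P}(A_i)$ we have $\sigma_i^{-1}(B)=\bigcup\{\phi\in\Phi\mid u_i^{\phi}\in B\}$, a finite union of Borel sets, so $\sigma_i$ is Borel measurable, i.e.\ $\sigma_i\in\mathbb{P}(A_i\mid S)$; moreover $A_i(s)$ is constant on each $\phi$ (part of \defiref{defi:con3-solution}) and $u_i^{\phi}\in\mathbb{P}(A_i(s))$, so $\sigma_i(A_i(s)\mid s)=1$, making $\sigma_i$ a strategy per \defiref{defi:stationary-strategy}. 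Being constant on the regions of the BFCP $\Phi$, $\sigma_i$ is a PWC stochastic kernel. I expect the main obstacle to be the construction of $\Phi'$ so that the \emph{whole} matrix game — both payoff entries and both action sets — is invariant on each region, since this is exactly what forces the minimax value and an optimal mixed pair to be region-wise constant; after that, measurability of the chosen pair is routine given the finiteness of the partition.
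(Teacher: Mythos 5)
Your proposal is correct and follows essentially the same route as the paper: establish that $V$ is B-PWC via the representation-closure theorem, deduce that each $Q({}\cdot{},\alpha,V)$ is B-PWC by \thomref{thom:B-PWC-closure-VI}, intersect the resulting constant-BFCPs with a partition on which the action sets are constant (the paper uses an abstract $\Phi_A$ where you use $\Phi_P$, which is one valid such choice), and conclude that the induced matrix game — hence a CON-$3$ saddle pair — is constant per region. Your explicit verification of Borel measurability and the support condition for the resulting kernels is a welcome elaboration of the paper's terser ``the result follows,'' but it is not a different argument.
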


\begin{proof}
By Theorems \ref{thom:repsentation-consistency} and \ref{convergence-thm}, $V$ is B-PWC. For any $\alpha \in A$, the function $Q({}\cdot{},\alpha,V) : S \rightarrow \mathbb{R}$ is B-PWC by \thomref{thom:B-PWC-closure-VI}. Let $\Phi_{Q}$ be a BFCP of $S$ such that $Q({}\cdot{},\alpha,V)$ is constant on each region of $\Phi_{Q}$ for $\alpha \in A$, and $\Phi_{A}$ be a BFCP of $S$ such that $A(s)$ is constant on each region of $\Phi_{A}$. Then, for $u_1 \in \mathbb{P}(A_1(s))$ and $u_2 \in \mathbb{P}(A_2(s))$, the function $Q'({}\cdot{},u_1,u_2) : S \rightarrow \mathbb{R}$, where:
\begin{eqnarray*}
Q'(s,u_1,u_2) & =& \mbox{$\sum\nolimits_{(a_1,a_2)\in A(s)}$} Q(s,(a_1,a_2),V)u_1(a_1)u_2(a_2)
\end{eqnarray*}
 for $s \in S$, is constant in each region of $\Phi_{Q} + \Phi_{A}$. Therefore, there exists a CON-$3$ solution $(\sigma_1,\sigma_2)$ of $Q'(s,u_1,u_2)$ and, since $\Phi_{Q} + \Phi_{A}$ is a BFCP, the result follows.
\end{proof}

\section{Conclusions}

We have proposed a novel modelling formalism called neuro-symbolic concurrent stochastic games (NS-CSGs) for representing probabilistic finite-state agents with {\revise neural}
perception mechanisms interacting in a shared, continuous-state environment. NS-CSGs have the advantage of allowing for the perception of a complex environment to be synthesised from data and implemented via NNs, while the safety-critical decision-making module is symbolic, explainable and knowledge-based.
{\revise We constrain the interface of the neural perception so that agents can reason symbolically using percepts, representing (exactly) learnt subregions of the continuous environment.}

For zero-sum discounted cumulative reward problems, we proved the existence and measurability of the value function of NS-CSGs under Borel measurability and piecewise constant restrictions.
We then presented the first 
{\revise computationally practical} 
B-PWC VI and Minimax-action-free PI algorithms with finite representations for computing the values and optimal strategies of NS-CSGs, assuming a fully observable setting, by proposing B-PWC, CON-PWL and CON-PWC functions.

The B-PWC VI algorithm is, at the region level, the same as VI for finite state spaces, but involves, at each iteration, a division of the uncountable state space into a possibly refined
finite set of regions (i.e., a BFCP). {\revise This is because taking the same action in two distinct states of some region in the current BFCP can yield states that need not be observationally equivalent.} The Minimax-action-free PI algorithm requires multiple divisions of the uncountable state space into BFCPs at each iteration. 
{\revise We resolve a number of technical issues to apply} 
~\cite{DB:21}, which ensures convergence, and,  by not requiring the solution of normal-form games or MDPs at each iteration,
reduces computational complexity. 
However, implementation of the Minimax-action-free PI algorithm is more challenging, requiring a distributed, asynchronous framework.
 We illustrated our approach by modelling a dynamic vehicle parking problem as an NS-CSG and synthesising approximately optimal values and strategies using B-PWC VI.
Future work will involve improving efficiency of the algorithms,  
{\revise generalising to more complex classes of neural perception mechanisms 
by exploiting preimage approximations~\cite{ZWK23}, 
and moving to equilibria-based (nonzero-sum) properties,
where initial progress has been made in a simpler setting of finite unfolding of our NS-CSG model \cite{YSD+22}.  We are also planning to extend the approach to two-sided partially observable neuro-symbolic POSGs, based on a recently proposed one-sided variant, which exploits a finite representation that generalises $\alpha$-vectors to approximate value computation~\cite{RY-GS-GN-DP-MK:23-2}, and for which \emph{online} strategy synthesis methods have been developed~\cite{YSN+24}.}

\startpara{Acknowledgements}
This project was funded by the ERC under the European Union’s Horizon 2020 research and innovation programme (\href{http://www.fun2model.org}{FUN2MODEL}, grant agreement No.~834115).



\bibliographystyle{elsarticle-num} 
 \bibliography{references}





\end{document}